\DeclareMathOperator*{\argmin}{\arg\!\min}
\theoremstyle{definition}
\newtheorem{definition}{Definition}
\newtheorem{theorem}{Theorem}
\newtheorem{lemma}{Lemma}
\newcommand{\cp}{CANDECOMP/PARAFAC\xspace}
\newcommand{\method}{\textsc{SOFIA}\xspace}
\newcommand{\methodALS}{\textsc{SOFIA\textsubscript{ALS}}\xspace}
\newcommand{\cpwopt}{\textsc{CP-WOPT}\xspace}
\newcommand{\smf}{\textsc{SMF}\xspace}
\newcommand{\cphw}{\textsc{CPHW}\xspace}
\newcommand{\brst}{\textsc{BRST}\xspace}
\newcommand{\olstec}{\textsc{OLSTEC}\xspace}
\newcommand{\onlineSGD}{OnlineSGD\xspace}
\newcommand{\ormstc}{\textsc{OR-MSTC}\xspace}
\newcommand{\mast}{\textsc{MAST}\xspace}
\newcommand{\cmark}{\ding{51}}
\newcommand{\change}{\textcolor{black}}
\newcommand{\red}{\textcolor{red}}
\newcommand{\TODO}{\red{[TODO]}\xspace}
\newcommand{\address}{\url{https://github.com/wooner49/sofia}\xspace}
\DeclarePairedDelimiter{\norm}{\lVert}{\rVert}
\DeclareMathAlphabet{\tensor}{OMS}{cmsy}{b}{n}
\newcommand{\TY}{\tensor{Y}}
\newcommand{\TYt}{\tensor{Y}_{t}}
\newcommand{\TYtstar}{\TYt^{*}}
\newcommand{\TYthat}{\hat{\TY}_{t|t-1}}
\newcommand{\TX}{\tensor{X}}
\newcommand{\MX}{\mathbf{X}}
\newcommand{\TXt}{\tensor{X}_{t}}
\newcommand{\TXthat}{\hat{\tensor{X}}_{t}}
\newcommand{\TXhat}{\tensor{\hat{X}}}
\newcommand{\TO}{\tensor{O}}
\newcommand{\TOt}{\tensor{O}_{t}}
\newcommand{\TRt}{\tensor{R}_{t}}
\newcommand{\TOmega}{\boldsymbol{\Omega}}
\newcommand{\TOmegat}{\boldsymbol{\Omega}_{t}}
\newcommand{\ft}{f_{t}}
\newcommand{\FM}{\mathbf{U}}
\newcommand{\fv}{\mathbf{u}}
\newcommand{\FMn}{\FM^{(n)}}
\newcommand{\allFM}{\{\FMn\}_{n=1}^{N}}
\newcommand{\IonetoINminus}{I_{1}\times\dots\times I_{N-1}}
\newcommand{\IonetoIN}{I_{1}\times\dots\times I_{N}}
\newcommand{\hadamard}{\circledast}
\newcommand{\bigHadamard}{\mathop{\scalebox{1.4}{\raisebox{-0.2ex}{$\circledast$}}}}
\newcommand{\khatrirao}{\odot}
\newcommand{\bigKhatrirao}{\mathop{\scalebox{1.4}{\raisebox{-0.2ex}{$\odot$}}}}
\newcommand{\uvec}{\mathbf{u}}
\newcommand{\wvec}{\mathbf{w}}
\newcommand{\Umat}{\mathbf{U}}
\newcommand{\Wmat}{\mathbf{W}}
\newcommand{\eyeR}{\mathbf{I}_{R}}
\newcommand{\Uone}{\Umat^{(1)}}
\newcommand{\Un}{\Umat^{(n)}}
\newcommand{\UN}{\Umat^{(N)}}
\newcommand{\UonetoUN}{\Uone,\dots,\UN}
\newcommand{\allU}{\{\Un\}_{n=1}^{N}}
\newcommand{\ntU}{\{\Un\}_{n=1}^{N-1}}
\newcommand{\ntUnt}{\{\Un_{t}\}_{n=1}^{N-1}}
\newcommand{\ntUnti}{\{\Un_{\ti}\}_{n=1}^{N-1}}
\newcommand{\ntUntminus}{\{\Un_{t-1}\}_{n=1}^{N-1}}
\newcommand{\tildew}{\tilde{\wvec}}
\newcommand{\tildeu}{\tilde{\uvec}}
\newcommand{\tildeur}{\tilde{\uvec}_{r}}
\newcommand{\tildeurn}{\tilde{\uvec}_{r}^{(n)}}
\newcommand{\tildeurN}{\tilde{\uvec}_{r}^{(N)}}
\newcommand{\tildeuN}{\tilde{\uvec}^{(N)}}
\newcommand{\ktensor}{\llbracket\Uone,\dots,\UN\rrbracket}
\newcommand{\ktensoruN}{\llbracket\ntU,\uN\rrbracket}
\newcommand{\ktensorTau}{\llbracket\ntU,\uvec_{\tau}^{(N)}\rrbracket}
\newcommand{\ktensort}{\llbracket\ntUnt,\uN_{t}\rrbracket}
\newcommand{\uinn}{\uvec_{i_{n}}^{(n)}}
\newcommand{\uiNN}{\uvec_{i_{N}}^{(N)}}
\newcommand{\Binn}{\mathbf{B}^{(n)}_{i_{n}}}
\newcommand{\BiNN}{\mathbf{B}^{(N)}_{i_{N}}}
\newcommand{\cinn}{\mathbf{c}^{(n)}_{i_{n}}}
\newcommand{\ciNN}{\mathbf{c}^{(N)}_{i_{N}}}
\newcommand{\uN}{\uvec^{(N)}}
\newcommand{\uNt}{\uvec^{(N)}_{t}}
\newcommand{\uNhat}{\hat{\uvec}^{(N)}}
\newcommand{\ti}{t_{i}}
\newcommand{\Omegainn}{\Omega^{(n)}_{i_{n}}}
\newcommand{\OmegaiNN}{\Omega^{(N)}_{i_{N}}}
\newcommand{\inOmega}{(i_{1},\dots,i_{N})\in\Omegainn}
\newcommand{\inOmegasplit}{\substack{(i_{1},\dots,i_{N})\\\in\Omegainn}}
\newcommand{\inOmegaN}{(i_{1},\dots,i_{N})\in\OmegaiNN}
\newcommand{\bigHadamardBinn}{\bigHadamard_{l\neq n} \uvec^{(l)}_{i_{l}}}
\newcommand{\uNiNj}{u^{(N)}_{i_{N}j}}
\newcommand{\uNiNjplus}{u^{(N)}_{(i_{N}+1)j}}
\newcommand{\uNiNjminus}{u^{(N)}_{(i_{N}-1)j}}
\newcommand{\uNiNjplusm}{u^{(N)}_{(i_{N}+m)j}}
\newcommand{\uNiNjminusm}{u^{(N)}_{(i_{N}-m)j}}
\newcommand{\ystar}{y^{*}_{i_{1},\dots,i_{N}}}
\newcommand{\level}{\textbf{\textit{l}}}
\newcommand{\trend}{\textbf{\textit{b}}}
\newcommand{\seasonal}{\textbf{\textit{s}}}
\newcommand{\sAlpha}{\boldsymbol{\alpha}}
\newcommand{\sBeta}{\boldsymbol{\beta}}
\newcommand{\sGamma}{\boldsymbol{\gamma}}
\newcommand{\ML}{\mathbf{L}}
\newcommand{\Lone}{\ML_{1}}
\newcommand{\Lm}{\ML_{m}}
\newcommand{\pvec}{\mathbf{p}}
\newcommand{\qvec}{\mathbf{q}}
\newcommand{\ptau}{\pvec_{\tau}}
\newcommand{\qtau}{\qvec_{\tau}}
\newcommand{\hsigma}{\hat{\sigma}}
\newcommand{\bSigma}{\hat{{\bm\Sigma}}}
\def\BibTeX{{\rm B\kern-.05em{\sc i\kern-.025em b}\kern-.08em
    T\kern-.1667em\lower.7ex\hbox{E}\kern-.125emX}}
\begin{document}

\title{Robust Factorization of Real-world Tensor Streams with Patterns, Missing Values, and Outliers}

\author{
	\IEEEauthorblockN{Dongjin Lee}
	\IEEEauthorblockA{School of Electrical Engineering, KAIST\\dongjin.lee@kaist.ac.kr}
	\and
	\IEEEauthorblockN{Kijung Shin}
	\IEEEauthorblockA{Graduate School of AI and School of Electrical Engineering, KAIST\\kijungs@kaist.ac.kr}
}

\maketitle

%to reduce space (kijung)
\begin{figure}
	\vspace{-8mm}
\end{figure}

\begin{abstract}
	Consider multiple seasonal time series being collected in real-time, in the form of a tensor stream. %, for example a network monitoring system.
Real-world tensor streams often include missing entries (e.g., due to network disconnection) and at the same time unexpected outliers (e.g., due to system errors).
Given such a real-world tensor stream, how can we estimate missing entries and predict future evolution accurately in real-time?

%How can we accurately predict future data?
% \TODO \redcolor{importance}
% ~~하는 문제들을 해결하는 것은 매우중요하다. for A, B, C.
% 빠른 대응. 실시간 대응, future planning 
In this work, we answer this question by introducing \method, a robust factorization method for real-world tensor streams.
In a nutshell, \method smoothly and tightly integrates tensor factorization, outlier removal, and temporal-pattern detection, which naturally reinforce each other.
Moreover, \method integrates them in linear time, in an online manner, despite the presence of missing entries.
%\method captures seasonal patterns in the input tensor in an online manner despite the presence of missing entries and outliers by utilizing two features inherent in the real-world time series:
%(1) the successive values tend to be close and (2) the values between consecutive seasons tend to be close.
%We capture a seasonal patterns from the corrupted tensors
%Based on the discovered seasonal pattern, \method filters out outliers effectively and updates the model robust to outliers.
We experimentally show that \method is (a) \textit{robust and accurate}: yielding up to $76\%$ lower imputation error and $71\%$ lower forecasting error; (b) \textit{fast}: up to $935\times$ faster than the second-most accurate competitor; and (c) \textit{scalable}: scaling linearly with the number of new entries per time step.

% Experimental results on various real-world datasets demonstrate the benefits of the proposed algorithm, in that the algorithm outperforms the existing state-of-the-art approaches.

\begin{comment}
Consider a situation where seasonal time-series are being collected in real-time, in the form of a tensor stream. %, for example a network monitoring system.
This tensor stream may be partially observed due to network disconnection and contaminated by unexpected outliers due to system errors.
How can we accurately estimate the missing values from a corrupted tensor stream in real-time?
How can we accurately predict future data?
To do this, how can we learn a robust streaming factorization model which takes into account cyclic patterns?
% 중요하다. 실시간 대응, future planning 등

We proposed \method, a outlier-robust streaming tensor factorization and completion algorithm for seasonal data.
We capture a seasonal patterns from the corrupted tensors
% 이것을 하기 위해서 대충 어떤 방법을 썼는지, 특징. 등. 키 아이디어 한문장. temporal smoothness, seasonal smoothness, HW forecasting. 
We experimentally show that \method is (a) \textit{robust} and \textit{accurate}: \TODO; (b) \textit{scalable}: \TODO; (c) \textit{effective}: \TODO.

% Experimental results on various real-world datasets demonstrate the benefits of the proposed algorithm, in that the algorithm outperforms the existing state-of-the-art approaches.
\end{comment}
\end{abstract}

\begin{IEEEkeywords}
	Tensor Factorization; Streaming Algorithm; Outlier Robustness; Holt-Winters Forecasting.
\end{IEEEkeywords}

\section{Introduction}
\label{sec:intro}
% 1. Why tensor and tensor factorization?
Tensors are high-dimensional arrays that are used to represent multi-way data.
Data modeled as tensors are collected and utilized in various fields, including machine learning~\cite{sidiropoulos2017tensor}, urban computing~\cite{tan2013tensor,gandy2011tensor}, chemometrics~\cite{tomasi2005parafac}, image processing~\cite{li2010optimum,liu2012tensor}, and recommender systems~\cite{karatzoglou2010multiverse,shin2016fully}.

As the dimension and size of data increase, extensive research has been conducted on tensor factorization, using which tensors can be analyzed efficiently and effectively.
%As the number of attributes grows and the size of data increases,
%a lot of research based on tensor factorization has been conducted to analyze tensors efficiently and effectively, such as \cp (CP), Tucker, and tensor-train (TT) factorization.
Given a tensor, \textit{tensor factorization} extracts its underlying latent structure, which is meaningful and at the same time useful for various purposes.
%aims to compress a high-dimensional tensor by extracting its underlying latent structure with reduced dimension.
Among many tensor factorization methods, \cp (CP) factorization~\cite{tomasi2005parafac,shin2016fully,acar2011scalable} is most widely used due to its simplicity and effectivenss.

Real-world tensors are often incomplete due to unintended problems such as network disconnection and system errors.
% Real-world tensors are often incomplete with missing entries due to unintended problems such as network disconnection and system errors.
% Outliers are the values deviated from actual data or regular patterns and may be caused by system errors or malfunction of the recording system.
The problem of imputing the missing entries based on observed data is called \textit{tensor completion}.
It is one of the most important and actively studied problems in tensor related research, and numerous solutions based on tensor factorization has been developed  \cite{acar2011scalable,liu2012tensor,song2019tensor}.

%to reduce space (kijung)
\setlength{\tabcolsep}{5pt}
\begin{table}[!t]
	\vspace{-5mm}
	\centering
	\caption{
		\label{tab:comparison}
		Comparison of tensor factorization and completion algorithms. Notice that only our proposed algorithm \method satisfies all the criteria.
	}
	\scalebox{0.9}{
		\begin{tabular}{c|r|ccccccccc|c}
			\toprule
			\multicolumn{2}{c|}{}                                                       & \rotatebox{90}{\cpwopt \cite{acar2011scalable}} & \rotatebox{90}{\onlineSGD \cite{mardani2015subspace}} & \rotatebox{90}{\olstec \cite{kasai2016online}} & \rotatebox{90}{\mast \cite{song2017multi}} & \rotatebox{90}{\brst \cite{zhang2018variational}} & \rotatebox{90}{\ormstc \cite{najafi2019outlier}} & \rotatebox{90}{\smf \cite{hooi2019smf}} & \rotatebox{90}{\cphw \cite{dunlavy2011temporal}} &
			\rotatebox{90}{Others \cite{takahashi2017autocyclone,araujo2019tensorcast}} &
			\rotatebox{90}{{\bf \method} (Proposed)}                                                                                                                                                                                                                                                                                                                                                                                                                                                                           \\
			\midrule
			\multirow{2}{*}{Functions}                                                  & Imputation                                      & \cmark                                                & \cmark                                         & \cmark                                     & \cmark                                            & \cmark                                           & \cmark                                  &                                                  &        &        & \cmark \\
			                                                                            & Forecasting                                     &                                                       &                                                &                                            &                                                   &                                                  &                                         & \cmark                                           & \cmark & \cmark & \cmark \\
			\midrule
			\multirow{5}{*}{Properties}                                                 & Robust to missings                              & \cmark                                                & \cmark                                         & \cmark                                     & \cmark                                            & \cmark                                           & \cmark                                  &                                                  &        &        & \cmark \\
			                                                                            & Robust to outliers                              &                                                       &                                                &                                            &                                                   & \cmark                                           & \cmark                                  &                                                  &        &        & \cmark \\
			                                                                            & Online algorithm                                &                                                       & \cmark                                         & \cmark                                     & \cmark                                            & \cmark                                           & \cmark                                  & \cmark                                           &        &        & \cmark \\
			                                                                            & Seasonality-aware                               &                                                       &                                                &                                            &                                                   &                                                  &                                         & \cmark                                           & \cmark & \cmark & \cmark \\
			                                                                            & Trend-aware                                     &                                                       &                                                &                                            &                                                   &                                                  &                                         & \cmark                                           & \cmark & \cmark & \cmark \\
			\bottomrule
		\end{tabular}
	}
\end{table}

At the same time, real-world tensors are easily corrupted by outliers due to unpredicted events during data collection, such as sensor malfunctions and malicious tampering.
Recovering incomplete and at the same time contaminated tensors is a challenging and unwieldy task since tensor factorization, which most tensor completion techniques are based on, is vulnerable to outliers.
To find latent structure behind such a noisy tensor accurately, many efforts have been made to design a `outlier-robust' tensor factorization algorithm~\cite{goldfarb2014robust,li2010optimum,zhao2015bayesian}.

\begin{figure*}[!t]
	\vspace{-4mm}
	\centering
	\subfigure[Outlier-robust Imputation]{\label{fig:crown:imputation}
		\includegraphics[width=0.25\linewidth]{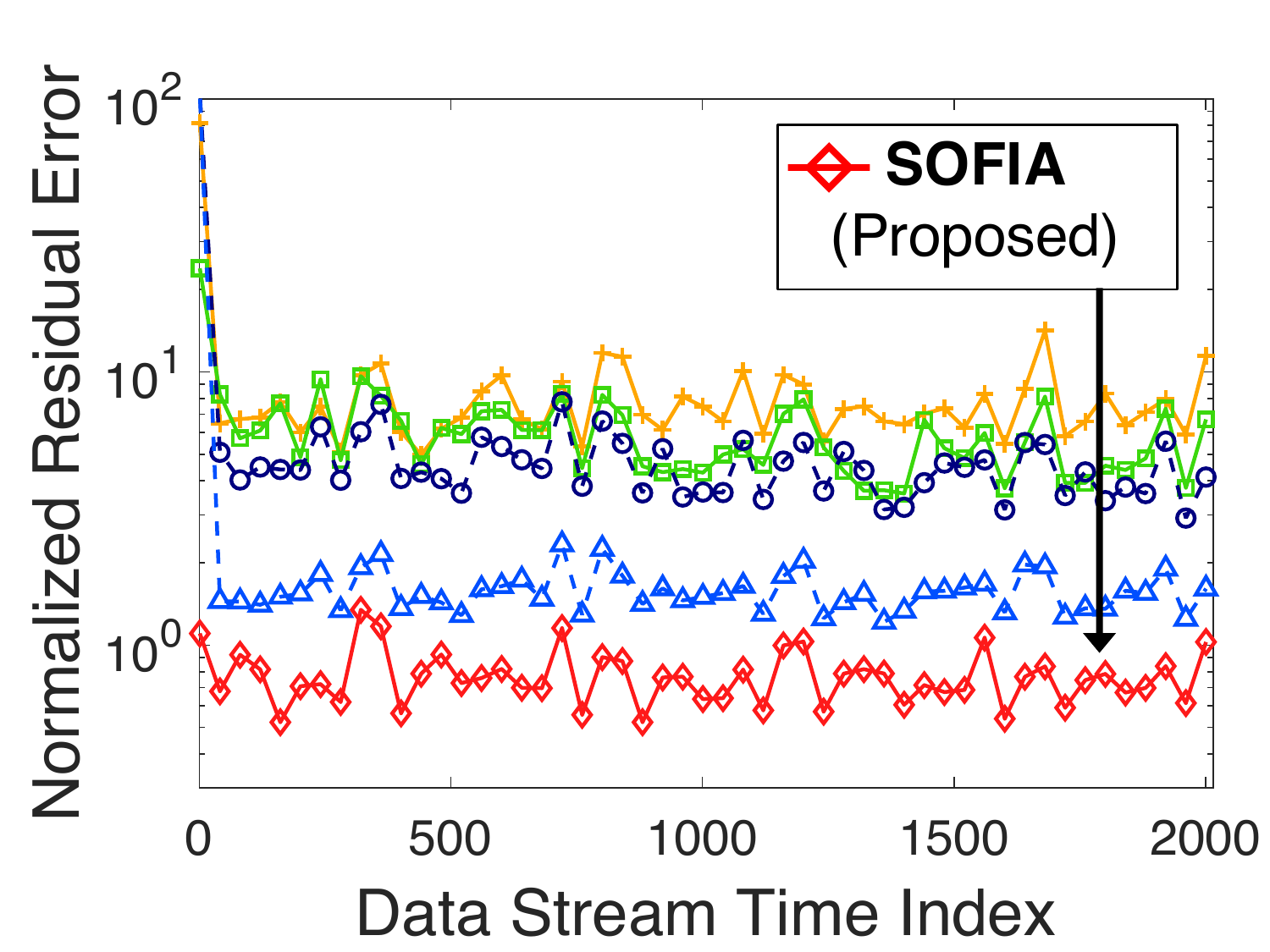}
	}
	\subfigure[Fast and Accurate Imputation]{\label{fig:crown:accuracy_speed}
		\includegraphics[width=0.20\linewidth]{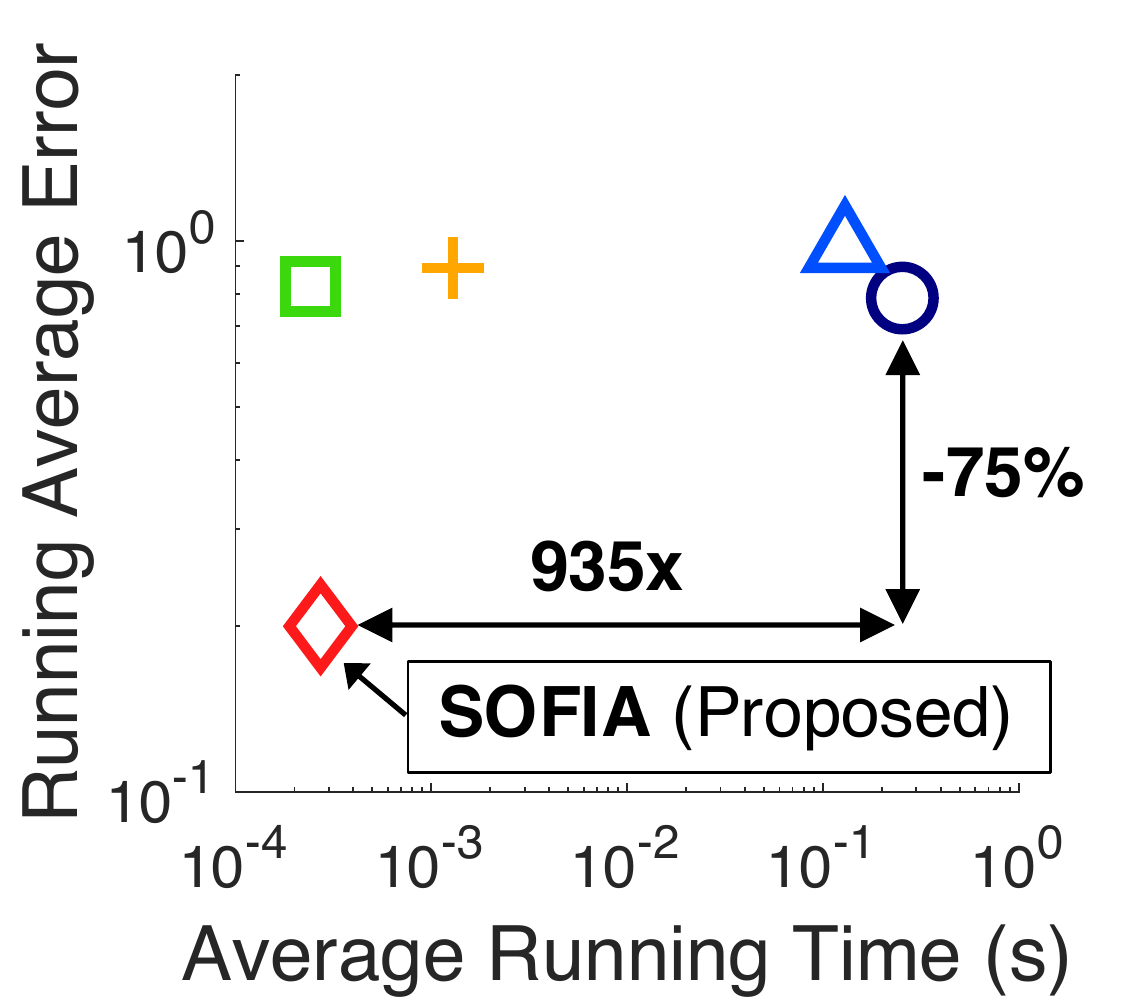}
	}
	\subfigure[Accurate Forecasting]{\label{fig:crown:forecasting}
		\includegraphics[width=0.20\linewidth]{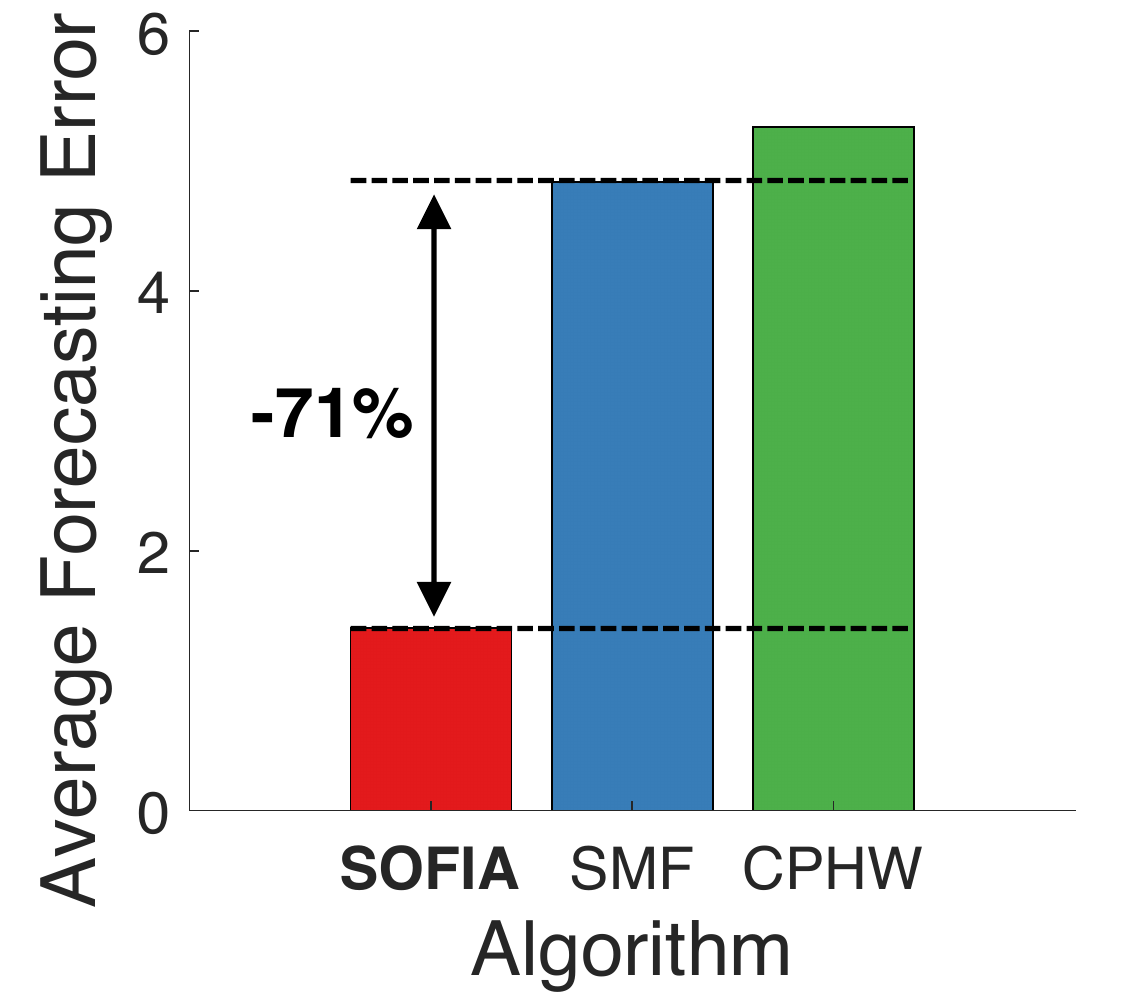}
	}
	\subfigure[Linear Scalability]{\label{fig:crown:scalability}
		\includegraphics[width=0.20\linewidth]{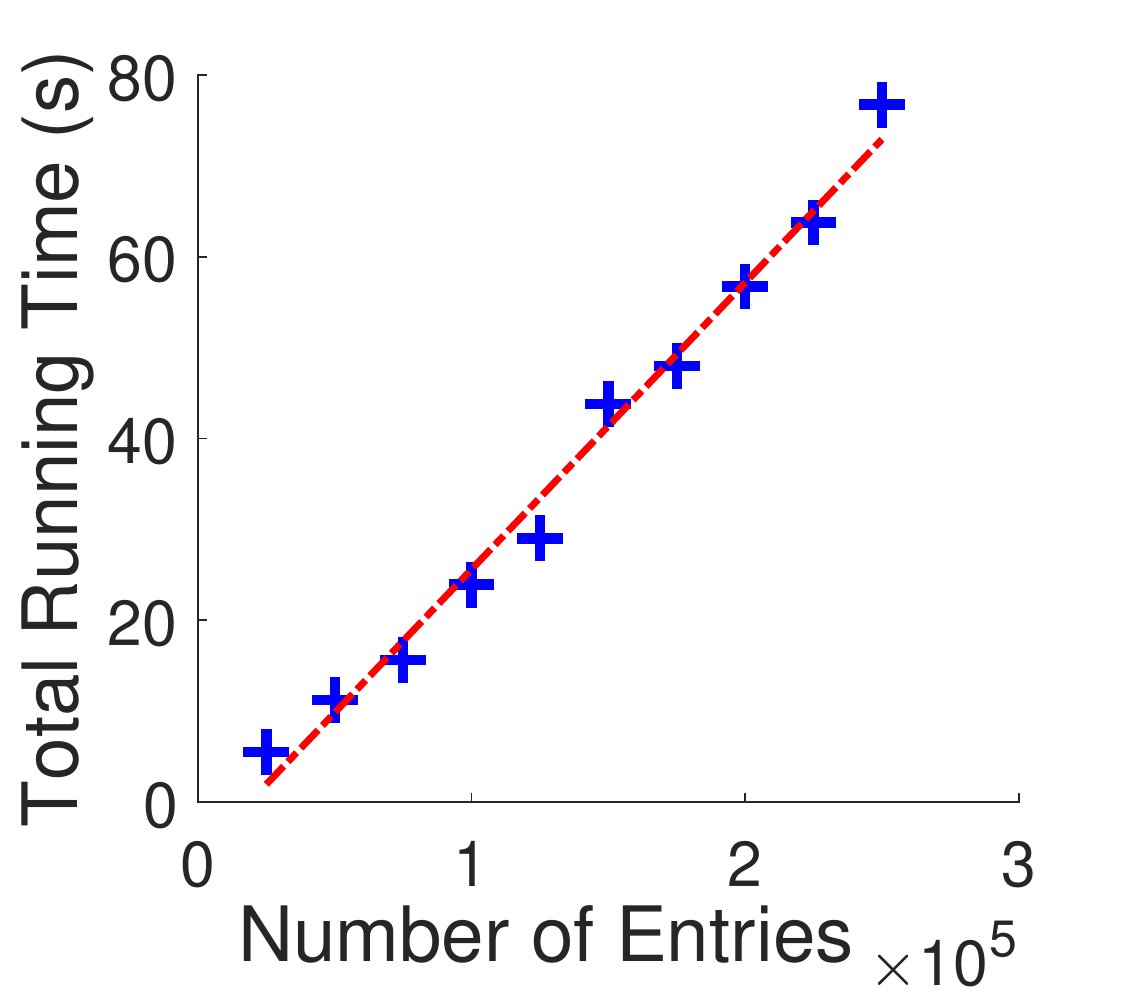}
	}
	\caption[]{\label{fig:crown}
		\textbf{\method is accurate, fast, and scale linearly.}
		(a) \method recovers the missing values more accurately compared to state-of-the-art streaming tensor factorization and completion algorithms over the entire stream.
		(b) \method is $75\%$ more accurate and $935\times$ faster than the second-most accurate competitor.
		(c) \method gives up to $71\%$ lower forecasting error than its best competitor.
		(d) \method scales linearly with the number of entries per time step.
	}
\end{figure*}

% This paper 는 streaming에 focus on 한다. 
Moreover, it is common to incorporate temporal information into tensors by adding temporal modes that evolve over time.
Tensor factorization also has been applied to such time-evolving tensors since latent temporal components provide considerable insight into temporal dynamics. Successful applications include anomaly detection~\cite{fanaee2016tensor}, discussion tracking~\cite{bader2008discussion}, recommender system \cite{xiong2010temporal} and link prediction~\cite{dunlavy2011temporal}.
% Applying tensor factorization on a tensor stream which grows in the time mode is useful in various areas such as anomaly detection~\cite{fanaee2016tensor}, discussion tracking~\cite{bader2008discussion}, and social network analysis~\cite{koutra2012tensorsplat}.
% This is because the temporal latent component can be obtained and used to analyze temporal dynamics in the data stream.
% This is because the temporal latent component can be analyzed to gain additional insight into temporal dynamics in the data stream.
% However, it is not feasible to use a batch-based tensor factorization and completion algorithm to process tensor streams due to its high computational complexity in terms of time and space.

% Additionally, in many applications, tensor data are collected incrementally over time in the form of tensor stream.
In many applications, tensor data are usually collected incrementally over time in the form of a tensor stream.
% However, since tensor streams can potentially be of infinite length, it is infeasible, in terms of computational and storage costs, to use batch-based algorithms for tensor factorization to process them.
Since tensor streams can potentially be of infinite length, it is infeasible, in terms of computational and storage costs, to use batch-based tensor factorization approaches to process them.
% due to its high computational complexity in terms of time and space.
Instead, tensor factorization should be performed in an online manner.
That is, new tensor entries should be processed incrementally, as they arrive, without requiring too much space.
%Also, it requires to use only a fixed size memory.

%Let's mention this later
%Up to the present, several streaming algorithms for CP factorization have been  developed~\cite{mardani2015subspace,kasai2016online,song2017multi,zhang2018variational,najafi2019outlier,smith2018streaming}.
%Several studies have been conducted on an `outlier-robust' streaming tensor factorization~\cite{zhang2018variational,najafi2019outlier}, however, existing work did not take temporal properties inherent in the real-world data streams into account.
% However, exiting work did not take into account temporal patterns inherent in the real-world tensor streams.

Given that real-world tensors are evolving constantly over time with missing entries and unexpected outliers, how can we estimate the missing entries?
Can we also predict future entries?
Can both imputation and prediction be performed accurately in an online manner?

In this work, we propose \method, a \textbf{S}easonality-aware \textbf{O}utlier-robust \textbf{F}actorization of \textbf{I}ncomplete stre\textbf{A}ming tensors.
\method extends CP factorization to leverage two temporal characteristics inherent in real-world time series: graduality and seasonality.\footnote{That is, real-world time series tend to change gradually with patterns recurring at regular intervals.}
Note that even identifying these patterns itself is challenging in real-world tensors due to the presence of missing entries and unexpected outliers.
\method addresses this challenge by smoothly and tightly integrating our extended CP factorization, outlier removal, and temporal-pattern detection so that they enhance each other.
Notably, \method conducts this integration in an online manner, scaling linearly with the number of new entries per time step.
After that, using discovered latent structures and temporal patterns, \method accurately imputes missing entries and also predicts future entries.
To this end, \method employs the Holt-Winters algorithm, a well-known time-series forecasting method.

Through extensive experiments on four real-world datasets, we demonstrate the advantages of \method over six state-of-the-art competitors, which only partially address the above questions (see Table~\ref{tab:comparison}).
Specifically, we show that \method has the following desirable properties:
\begin{itemize}[leftmargin=3mm]
	%    \item \textbf{Model}: we propose a new tensor factorization model considering temporal and seasonal characteristics in real-world time series.
	\item \textbf{Robust and accurate}: \method performs imputation and forecasting with up to $76\%$ and $71\%$ lower estimation error than its best competitors (Figures~\ref{fig:crown:imputation}-\ref{fig:crown:forecasting}).
	\item \textbf{Fast}: \method is  up to $935\times$ faster than the second-most accurate imputation method (Figure~\ref{fig:crown:accuracy_speed}).
	\item \textbf{Scalable}: \method is an online algorithm, and its time complexity is linear in the number of new tensor entries per time step (Figure~\ref{fig:crown:scalability} and Lemma~\ref{lemma:complexity:dynamic}).
\end{itemize}

\noindent{\textbf{Reproducibility}}:
The code and datasets used in the paper are available at \address.

The rest of this paper is organized as follows.
In Section~\ref{sec:related}, we review related work.
In Section~\ref{sec:prelim}, we introduce notations and give preliminaries.
In Section~\ref{sec:model} and~\ref{sec:method}, we propose our factorization model and optimization algorithm, respectively.
After presenting experimental results in Section~\ref{sec:exp}, we conclude in Section~\ref{sec:conclusion}.

\section{Related Work}
\label{sec:related}
\begin{figure*}[!t]
    \vspace{-4mm}
\end{figure*}

In this section, we review previous studies that are closely related to our work.

\textbf{Streaming Tensor Factorization}:
Extensive research on a streaming tensor factorization and completion based on CP factorization have been widely studied in the past few years. % ~\cite{zhou2016accelerating,smith2018streaming,mardani2015subspace,kasai2016online,song2017multi,zhang2018variational,najafi2019outlier}. 
Several online algorithms \cite{kwon2021slice,smith2018streaming, zhou2016accelerating} have been proposed for tracking the CP decomposition of an incremental tensor.
%Zhou et al.~\cite{zhou2016accelerating} proposed an accelerated online algorithm that tracks the CP decomposition of an incremental tensor.
%Smith et al.~\cite{smith2018streaming} proposed a streaming factorization algorithm using Alternating Direction Method of Multipliers (ADMM) solver.
However, since their approaches assume all tensor entries are observable, they are not directly applicable to tensor completion tasks.
To recover missing entries in tensor streams, Mardani et al.~\cite{mardani2015subspace} proposed \onlineSGD, an online tensor factorization and completion algorithm under the presence of missing values.
They used a stochastic gradient descent (SGD) to optimize their factorization model.
% To recover missing entries in tensor streams, Mardani et al.~\cite{mardani2015subspace} proposed \onlineSGD that is using a stochastic gradient descent (SGD) to optimize its factorization model.
Kasai~\cite{kasai2016online} solved the same problem with recursive least squares (RLS). It gives smaller imputation errors than \onlineSGD when subspaces change dramatically.
Considering a more general situation where tensors increase in multiple modes, Song et al.~\cite{song2017multi} proposed \mast, which handles such dynamics while imputing the missing entries.
% Song et al.~\cite{song2017multi} proposed a multi-aspect streaming tensor completion method called \mast, which can handle the all-mode-change tensor dynamics while imputing the missing entries. 
Robust factorization approaches for incomplete and contaminated tensor streams have also been studied.
Zhang et al.~\cite{zhang2018variational} proposed \brst, which aims to distinguish sparsely corrupted outliers from low-rank tensor streams via variational Bayesian inference.
Najafi et al.~\cite{najafi2019outlier} proposed \ormstc, an outlier-robust completion algorithm for multi-aspect incomplete tensor streams.
Although many studies have dealt with the problem of streaming tensor factorization for recovering missing entries, no studies have been conducted on utilizing temporal patterns inherent in tensor streams for robustness.

\textbf{Temporal Patterns in Tensor Factorization}:
% Several studies have been conducted which solve their own problems using temporal patterns obtained through tensor factorization.
Temporal information obtained by tensor factorization has proved effective in various studies.
Dunlavy et al.~\cite{dunlavy2011temporal} developed a temporal link prediction algorithm that applies $3$-way CP decomposition and extends the temporal factor matrix using the Holt-Winters method.
Araujo et al.~\cite{araujo2019tensorcast} proposed a coupled tensor factorization approach to forecast future values via the Holt-Winters method.
Takahashi et al.~\cite{takahashi2017autocyclone} proposed a factorization model to separate cyclic patterns from outliers in a static tensor.
Esp\'\i n-Noboa et al.~\cite{espin2016discovering} used non-negative tensor factorization to analyze temporal dynamics on a Taxi dataset.
For a streaming algorithm, Hooi et al.~\cite{hooi2019smf} proposed \smf, which utilizes seasonal patterns for updating its factorization model and forecasting future values.
While the above-mentioned approaches utilize temporal patterns in static or streaming tensors, they are vulnerable to unexpected outliers and not applicable when tensor entries are missing.

\textbf{Traditional Time-series Models}:
Traditional time-series forecasting or statistical regression models, such as autoregressive based models~\cite{akaike1969fitting}, Kalman filters~\cite{welch1995introduction}, and exponential smoothing~\cite{gardner1985exponential}, have been successful as simple and effective approaches for predicting future data.
Especially, seasonal ARIMA~\cite{hyndman2018forecasting} and triple exponential smoothing (also known as the Holt-Winters model~\cite{holt2004forecasting,winters1960forecasting}) have proved effective for predicting seasonal time series, whose values are periodic, by capturing repeated patterns in data.
However, these traditional approaches cannot be used if time series have missing values, and thus they are not suitable for our problems (i.e., imputing missing values and forecasting future data).

\textbf{Contrast with Competitors}:
Our proposed algorithm, \method, takes into account all the functionalities and properties that are partially handled by existing approaches (see Table~\ref{tab:comparison}) for accuracy and efficiency.
% We propose \method, a robust streaming tensor factorization algorithm that takes into account all the functions and properties that are partially handled by existing approaches (see Table~\ref{tab:comparison}).
% Specifically, \method captures and utilizes the temporal patterns (e.g., seasonality and trend) behind tensor streams to provide robustness to both missing values and outliers.
Specifically, \method captures temporal patterns (i.e., seasonality and trends) in incomplete and corrupted tensor streams accurately, and then it utilizes them to recover missing entries and forecast future entries in an online manner.
% \method is able to recover missings and forecast future data accurately. 

\section{Preliminaries and Notations}
\label{sec:prelim}

\begin{figure*}[!t]
  \vspace{-4mm}
\end{figure*}

\begin{table}[!t]
  \centering
  \caption{Table of symbols.}
  \label{tab:symbols}
  \scalebox{1}{
    \begin{tabular}{l|l}
      \toprule
      \textbf{Symbol}                                 & \textbf{Definition}                                                                    \\
      % \midrule
      %\multicolumn{2}{l}{Notations for Tensors}
      \midrule
      \multicolumn{2}{l}{Notations for Tensors, Matrices, and Vectors (Sections~\ref{sec:prelim:notation} and~\ref{sec:prelim:factorization})} \\
      \midrule
      $u$, $\uvec$, $\Umat$, $\TX$                    & scalar, vector, matrix, tensor                                                         \\
      $u_{i_{1}}$, $u_{i_{1}i_{2}}$                   & $i_{1}$-th entry of $\uvec$ and $(i_{1},i_{2})$-th enrty of $\Umat$                    \\
      $x_{i_{1}\dots i_{N}}$                          & $(i_{1}\dots i_{N})$-th entry of $\TX$                                                 \\
      $\uvec_{i}$, $\tildeu_{i}$                      & $i$-th row and column vectors of $\Umat$                                               \\
      % $\TOmega$ & binary tensor  tensor ($\in\mathbb{R}^{\IonetoIN}$)\\
      $\Umat^{\top}$, $\Umat^{-1}$, $\Umat^{\dagger}$ & transpose, inverse, and pseudoinverse of $\Umat$                                       \\
      $\norm{\Umat}_{F}$                              & Frobenius norm of $\Umat$                                                              \\
      %\midrule
      $\MX_{(n)}$                                     & mode-$n$ unfolding matrix of $\TX$                                                     \\
      $\khatrirao$                                    & Khatri-Rao product                                                                     \\
      $\hadamard$                                     & Hadamard product                                                                       \\
      $\llbracket\cdot\rrbracket$                     & Kruskal operator                                                                       \\
      \midrule
      \multicolumn{2}{l}{Notations for the Holt-Winters Method (Sections~\ref{sec:prelim:hw} and~\ref{sec:prelim:robust})}                     \\
      \midrule
      $l_{t},b_{t},s_{t}$                             & level, trend, and seasonal components at time $t$                                      \\
      $\alpha,\beta,\gamma$                           & \makecell[l]{smoothing parameters corresponding to the level,                          \\trend, and seasonal components}\\
      $\hsigma_{t}$                                   & scale of one-step-ahead forecast error at time $t$                                     \\
      $\Psi(\cdot)$, $\rho(\cdot)$                    & Huber $\Psi$-function and biweight $\rho$-function                                     \\
      \midrule
      \multicolumn{2}{l}{Notations for \method (Sections~\ref{sec:model} and~\ref{sec:method})}                                                \\
      \midrule
      $\TY$, $\TYt$                                   & input tensor and input subtensor at time $t$                                           \\
      $\TOmega$, $\TOmegat$                           & indicator tensor and indicator subtensor at time $t$                                   \\
      $\TO$, $\TOt$                                   & outlier tensor and outlier subtensor at time $t$                                       \\
      $\Lone$, $\Lm$                                  & temporal and seasonal smoothness constraint matrices                                   \\
      $\Un$                                           & mode-$n$ factor matrix                                                                 \\
      $\bSigma_{t}$                                   & one-step-ahead forecast error scale tensor at time $t$                                 \\
      $R$                                             & rank of CP factorization                                                               \\
      $m$                                             & seasonal period                                                                        \\
      $\mu$                                           & gradient step size                                                                     \\
      $\phi$                                          & smoothing parameter for $\bSigma_{t}$                                                  \\
      $\lambda_{1}$                                   & temporal smoothness control parameter                                                  \\
      $\lambda_{2}$                                   & seasonal smoothness control parameter                                                  \\
      $\lambda_{3}$                                   & sparsity control parameter for $\TO$ and $\TOt$                                        \\

      %	      $I_{n}$ & length of \\
      %	      $K$ & rank of $\TX$\\
      %	      $\MAn$ & $n$-th factor matrix ($\in\mathbb{R}^{I_{n}\times K}$)\\
      %	      $\Omega$ & set of indices of observable entries of $\TX$\\
      %	      $*$& Hadamard product \\
      %	      b&b\\
      \bottomrule
    \end{tabular}
  }
\end{table}

In this section, we first give some preliminaries on tensors and introduce some notations (Section~\ref{sec:prelim:notation}). Then, we briefly describe CP factorization (Section~\ref{sec:prelim:factorization}) and the standard and robust versions of Holt-Winters method (Section~\ref{sec:prelim:hw} and~\ref{sec:prelim:robust}), which \method is based on.

\subsection{Tensors and Notations}\label{sec:prelim:notation}
Table~\ref{tab:symbols} lists the symbols frequently used in this paper.

\noindent{\textbf{Symbols and indexing}}: We denote scalars by lowercase letters, e.g., $u$, vectors by boldface lowercase letters, e.g., $\uvec$, matrices by boldface capital letters, e.g., $\Umat$, and tensors by boldface calligraphic letters, e.g., $\tensor{X}$.
The order of a tensor is the number of modes, also known as ways or dimensions.
Consider an $N$-way tensor $\TX\in\mathbb{R}^{\IonetoIN}$, where $I_{n}$ is the length of mode $n$.
Given the integer $1\leq i_{n}\leq I_{n}$ for each mode $n=1,\dots,N$, the $(i_{1},\dots,i_{N})$-th entry of the tensor $\TX$ is denoted by $x_{i_{1}\dots i_{N}}$.
For each matrix $\Umat$, we denote its $(i,j)$-th element by $u_{ij}$, its $i$-th row vector by $\uvec_{i}$, and its $j$-th column vector by $\tildeu_{j}$.
%For a indexing \kijung{what is an indexing matrix?} matrix $\Umat\in\mathbb{R}^{I_{n}\times R}$, we denote its $(i,j)$-th element by $u_{ij}$, its $i$-th row vector by $\uvec_{i}$, and its $j$-th column vector by $\tildeu_{j}$.
For each vector $\uvec$, we denote its $i$-th element by $u_{i}$.
The inverse, Moore-Penrose pseudoinverse, transpose, and Frobenius norm of $\Umat$ are denoted by $\Umat^{-1}$, $\Umat^{\dagger}$, $\Umat^{\top}$, and $\norm{\Umat}_{F}$, respectively.

\noindent{\textbf{Matricization}}: Matricization, also called unfolding, is the process of reordering the elements of a given $N$-way tensor $\TX\in\mathbb{R}^{\IonetoIN}$ into a matrix.
The mode-$n$ unfolding matrix of $\TX$, which is denoted by $\MX_{(n)}\in\mathbb{R}^{I_{n}\times(\prod_{i\neq n}^{N}I_{i})}$, is obtained by considering the $n$-th mode as the rows of the matrix and collapsing the other modes into the columns of the matrix.
%We denote the mode-$n$ unfolding matrix of $\TX$ by .

\noindent{\textbf{Hadamard product}}: The Hadamard product of two matrices $\Umat$ and $\Wmat$ of the same size, which is denoted by $\Umat\hadamard\Wmat$, is their element-wise product.
The sequence of the Hadamard products $\Umat^{(N)}\hadamard\cdots\hadamard\Umat^{(1)}$ is denoted by $\bigHadamard_{n=1}^{N}\Umat^{(n)}$.
The Hadamard product is naturally extended to tensors.

\noindent{\textbf{Khatri-Rao product}}: Given matrices $\Umat\in\mathbb{R}^{I\times R}$ and $\Wmat\in\mathbb{R}^{J\times R}$, their Khatri-Rao product is denoted by $\Umat\khatrirao\Wmat$.
The result matrix, which is of size $IJ\times R$, is defined as:

{\small
\begin{equation}
  \Umat\khatrirao\Wmat=\begin{bmatrix}
    u_{11}\tildew_{1} & u_{12}\tildew_{2} & \cdots & u_{1R}\tildew_{R} \\
    u_{21}\tildew_{1} & u_{22}\tildew_{2} & \cdots & u_{2R}\tildew_{R} \\
    \vdots            & \vdots            & \ddots & \vdots            \\
    u_{I1}\tildew_{1} & u_{I2}\tildew_{2} & \cdots & u_{IR}\tildew_{R}
  \end{bmatrix}.
\end{equation}
}\normalsize
The sequence of the Khatri-Rao products $\Umat^{(N)}\khatrirao\cdots\khatrirao\Umat^{(1)}$ is denoted by $\bigKhatrirao_{n=1}^{N}\Umat^{(n)}$.

\subsection{\cp (CP) Factorization}\label{sec:prelim:factorization}

Among tensor factorization models, CP factorization, which is based on CP decomposition \cite{kolda2009tensor}, is most widely used due to its simplicity and effectiveness.
%The tensor factorization of incomplete tensors based on CP decomposition is the most widely used factorization model. 
\begin{definition}[Rank-$1$ Tensor]
  A tensor $\TX\in\mathbb{R}^{\IonetoIN}$ is a rank-$1$ tensor if it can be expressed as the outer product of $N$ vectors (i.e., $\TX=\uvec^{(1)}\circ\uvec^{(2)}\circ\cdots\circ\uvec^{(N)}$, where $\uvec^{(n)}\in\mathbb{R}^{I_{n}}$).
  The outer product is defined as $(\uvec^{(1)}\circ\uvec^{(2)}\circ\cdots\circ\uvec^{(N)})_{i_{1}\dots i_{N}}=u_{i_{1}}^{(1)}\cdots u_{i_{N}}^{(N)}$ for all $1\leq n \leq N$ and $1\leq i_{n}\leq I_{n}$.
  % The outer product is defined as $(\uvec^{(1)}\circ\uvec^{(2)}\circ\cdots\circ\uvec^{(N)})_{i_{1}\dots i_{N}}=u_{i_{1}}^{(1)}\cdots u_{i_{N}}^{(N)}$ for all $1\leq n \leq N$ and $1\leq i_{n}\leq I_{n}$.
\end{definition}

\begin{definition}[CP Decomposition~\cite{carroll1970analysis,harshman1970foundations}]
  Given an $N$-way tensor $\TX\in\mathbb{R}^{\IonetoIN}$ and a positive integer $R$, CP decomposition of rank $R$ approximates $\TX$ as the sum of $R$ rank-$1$ tensors, as formulated in Eq.~\eqref{eq:cpd}.

  {\small
  \begin{equation}
    \TX\approx\sum_{r=1}^{R} \tildeur^{(1)}\circ\tildeur^{(2)}\circ\cdots\circ\tildeur^{(N)}=\ktensor, \label{eq:cpd}
  \end{equation}
  }\normalsize
  where $\Un=[\tildeu_{1}^{(n)},\dots,\tildeu_{R}^{(n)}]\in\mathbb{R}^{I_{n}\times R}$ is called the mode-$n$ factor matrix, which can also be represented by row vectors as $\Un=[\uvec_{1}^{(n)},\dots,\uvec_{I_{n}}^{(n)}]^{\top}$.
\end{definition}

\begin{definition}[CP Factorization of Incomplete Tensors~\cite{acar2011scalable,tomasi2005parafac,shin2016fully}]
  Consider an $N$-way incomplete tensor $\TX\in\mathbb{R}^{\IonetoIN}$ and a rank $R$.
  Let $\TOmega$ be the binary tensor of the same size as $\TX$ each of whose entry indicates whether the corresponding entry in $\TX$ is observed, i.e., for all $1\leq n \leq N$ and $1\leq i_{n}\leq I_{n}$,

  {\small
      \begin{equation}
        \omega_{i_{1}\dots i_{N}}=\begin{cases}
          1 & \textnormal{if}\ x_{i_{1}\dots i_{N}} \ \textnormal{is known},   \\
          0 & \textnormal{if}\ x_{i_{1}\dots i_{N}} \ \textnormal{is missing},
        \end{cases}
      \end{equation}
    }\normalsize
  CP factorization is to find the factor matrices $\Uone,\dots,\UN$ that minimize (\ref{eq:prelim:cp}), where only the observed entries are taken into consideration.

    {\small
      \begin{equation} \label{eq:prelim:cp}
        \min_{\UonetoUN} \frac{1}{2}\norm{\TOmega\hadamard(\TX-\ktensor)}_{F}^{2}.
      \end{equation}
    }\normalsize
\end{definition}

For more details, \cite{kolda2009tensor,sidiropoulos2017tensor} provide comprehensive reviews of tensor factorization.

\subsection{Holt-Winters Method} \label{sec:prelim:hw}
The Holt-Winters (HW) method~\cite{holt2004forecasting,winters1960forecasting} is an effective forecasting method for time series with trend and seasonality.
There are two variations of it~\cite{hyndman2018forecasting}: the additive method, which is suitable when the seasonal variations are roughly constant, and the multiplicative method, which is preferred when the seasonal variations change proportional to the level of time series.
We focus on the additive model in this paper.

The additive HW method consists of one forecast equation and three smoothing equations that are for the \textit{level} (e.g., $l_{t}$), the \textit{trend} (e.g., $b_{t}$), and the \textit{seasonal component} (e.g., $s_{t}$), respectively, with corresponding smoothing parameters $0\leq\alpha\leq 1$, $0\leq\beta\leq 1$, and $0\leq\gamma\leq 1$.
The smoothing equations are defined as:

{\small
\begin{subequations} \label{eq:hw_smoothing}
  \begin{align}
    l_{t} & = \alpha(y_{t}-s_{t-m}) + (1-\alpha)(l_{t-1}+b_{t-1}), \label{eq:hw_level} \\
    b_{t} & = \beta(l_{t}-l_{t-1}) + (1-\beta)b_{t-1}, \label{eq:hw_trend}             \\
    s_{t} & = \gamma(y_{t}-l_{t-1}-b_{t-1}) + (1-\gamma)s_{t-m}. \label{eq:hw_season}
  \end{align}
\end{subequations}
}\normalsize
The forecast equation is defined as:

{\small
\begin{equation} \label{eq:hw_forecast}
  \hat{y}_{t+h|t} = l_{t} + hb_{t} + s_{t+h-m(\lfloor \frac{h-1}{m}\rfloor+1)},
\end{equation}
}\normalsize
where $\hat{y}_{t+h|t}$ is the $h$-step-ahead forecast of time series $\mathbf{y}$ at time $t$, and $m$ is the seasonal period.
Note that $\lfloor \frac{h-1}{m}\rfloor+1$ ensures that the estimates of the seasonal components used for forecasts are obtained in the last season of the time series.

% \kijung{Please describe what $\lfloor \frac{h-1}{m}\rfloor+1$ means - Complete}

Forecasts produced by the HW method are weighted averages of past observations, with the weights decreasing exponentially as the observations get older.
% \kijung{Can we provide some intuitive explanation? - Complete}
To use the HW method, the smoothing parameters and initial values of level, trend, and seasonal component need to be estimated.
We define the residuals of the one-step-ahead forecasts as $e_{t}=y_{t}-\hat{y}_{t|t-1}$ for $t=1,\dots,T$, where $T$ is the last time of the time-series.
Then, we can find the unknown parameters by minimizing the sum of squared errors (SSE) defined as $\sum_{t=1}^{T} e_{t}^{2}$~\cite{hyndman2018forecasting}.
% We use `BFGS-B' algorithm~\cite{byrd1995limited} to solve the optimization problem.

\subsection{Robust Holt-Winters Forecasting} \label{sec:prelim:robust}
The HW model is vulnerable to unusual events or outliers since the smoothing equations (\ref{eq:hw_smoothing}) involve current and past values of the time series including the outliers.
Thus, Gelper el al.~\cite{gelper2010robust} proposed a robust HW method based on a pre-cleaning mechanism that identifies and downweights outliers before updating the model.
The currupted observation $y_{t}$ is replaced with a `cleaned' version $y_{t}^{*}$ by the follwing rule:

{\small
\begin{equation}\label{eq:prelim:pre_cleaning}
  y_{t}^{*}=\Psi\Big(\frac{y_{t}-\hat{y}_{t|t-1}}{\hsigma_{t}}\Big)\hsigma_{t} + \hat{y}_{t|t-1},
\end{equation}
}\normalsize
where \small$\Psi(x)=\begin{cases}
    x        & \textnormal{if}\ |x|<k, \\
    sign(x)k & \textnormal{otherwise},
  \end{cases}$\normalsize \ and $\hat{\sigma}_{t}$ is an estimated scale of one-step-ahead forecast error at time $t$.
Note that, in the Huber $\Psi$-function~\cite{maronna2019robust}, the magnitude of $x$ is upper bounded by $k$.
The equation (\ref{eq:prelim:pre_cleaning}) can be interpreted as identifying unexpected high or low observations as outliers and replacing them by more likely values.

The time varying error scale $\hsigma_{t}$ in (\ref{eq:prelim:pre_cleaning}) is updated by the following equation:

{\small
\begin{equation}
  \hsigma^{2}_{t}=\phi\rho\Big(\frac{y_{t}-\hat{y}_{t|t-1}}{\hsigma_{t-1}}\Big)\hsigma^{2}_{t-1} + (1-\phi)\hsigma^{2}_{t-1},
\end{equation}
}\normalsize
where $\phi$ is the smoothing parameter and $\rho(x)$ is the biweight $\rho$-function~\cite{maronna2019robust}, which is defined as:

{\small
\begin{equation} \label{eq:biweight_rho}
  \rho(x)=\begin{cases}
    c_{k}(1-(1-(x/k)^{2})^{3}) & \textnormal{if}\ |x|\leq k, \\
    c_{k}                      & \textnormal{otherwise}.
  \end{cases}
\end{equation}
}\normalsize
It is common to set $k$ to $2$ and $c_{k}$ to $2.52$ in the Huber $\Psi$-function and the biweight $\rho$-function \cite{gelper2010robust}.
%In the Huber $\Psi$-function and the biweight $\rho$-function, the common choices are $k=2$ and $c_{k}=2.52$~\cite{gelper2010robust}.

% We use $k=2$ and $c_{k}=2.52$ for the biweight $\rho$-function. 

\section{Proposed Factorization Model}
\label{sec:model}

\begin{figure*}[!t]
  \vspace{-4mm}
\end{figure*}

In this section, we formally define our model of robust factorization of real-world tensor streams.
We leverage two types of temporal properties in real-world tensor streams: \textbf{temporal smoothness}, the property that the successive values tend to be close, and \textbf{seasonal smoothness}, the property that the values between consecutive seasons tend to be close.
% In real-world tensor streams, two types of temporal features can be observed: 
% We formulate the problem by considering two features inherent in the real-world time series:
% \kijung{Could you describe the following patterns more clearly? - Complete}
% \begin{itemize}
%     \item \textbf{Temporal smoothness}: the feature of time-series data that the successive values tend to be close.
%     \item \textbf{Seasonal smoothness}: the feature of seasonal data that the values between successive seasons tend to be close. %the seasonal patterns change gradually, not suddenly.
% \end{itemize}
% Figure~\ref{fig:smoothness} shows that real-world seasonal time-series are temporally and seasonally smooth.
% For example, the current indoor temperature is similar to that of $10$ minutes ago. Also, it is close to that of yesterday's from the same time. 
% As another example, the number of taxi trips from Near North Side to Lake View in Chicago this Friday is probably similar to that of last week.
% These are a concept of temporal smoothness and seasonal smoothness.
\change{For example, the current indoor temperature is likely to be similar to that of $10$ minutes ago and close to that of yesterday's from the same time.
  As another example, the number of taxi trips from Near North Side to Lake View in Chicago this Friday is probably similar to that of last week.
  If we can find such temporal patterns in tensor streams, then we can utilize them to estimate the missing values and detect outliers.}
In order to take into account these characteristics in tensor factorization, we impose smoothness constraints on the temporal factor matrix.

\subsection{Proposed Factorization Model for Static Tensors}

First, we consider a case where the input tensor is static.
Without loss of generality, we assume an $N$-way partially observed tensor $\TY\in\mathbb{R}^{\IonetoIN}$ where the $N$-th mode is temporal, and the others are non-temporal.
% increases infinitely over time.
%We refer to the $N$-th mode as the temporal mode and the remaining modes as the non-temporal modes.
Since the entries of $\TY$ might be contaminated by erroneous values, we assume that $\TY$ is a mixture of $\TX$, a clean low-rank tensor, and $\TO$, a sparse outlier tensor.
This is formulated as $\TY=\TX+\TO$, where both $\TX$ and $\TO$ are the same size as $\TY$.
% Then, we seek the factor matrices solving the following optimization problem:
In addition to $\TO$, we seek factor matrices $\allU$ that minimize (\ref{eq:batch:smoothObj}).

  {\small
    \begin{multline} \label{eq:batch:smoothObj}
      %    \min_{\allFM,\TO} \frac{1}{2}\bigg[\norm{\TOmega\hadamard(\TY-\TO-\TX)}_{F}^{2} \\ 
      %    + \lambda_{1}\norm{\Lone\FMN}_{F}^{2} + \lambda_{2}\norm{\Lm\FMN}_{F}^{2}\bigg] + \lambda_{3}\norm{\TO}_{1},\\
      % \min_{\allU,\TO} \norm{\TOmega\hadamard(\TY-\TO-\TX)}_{F}^{2} \\ 
      C(\allU,\TO) = \norm{\TOmega\hadamard(\TY-\TO-\TX)}_{F}^{2} \\
      + \lambda_{1}\norm{\Lone\UN}_{F}^{2} + \lambda_{2}\norm{\Lm\UN}_{F}^{2} + \lambda_{3}\norm{\TO}_{1},\\
      \textnormal{subject to} \ \TX=\ktensor, \norm{\tildeurn}_{2}=1, \\
      \forall{r}\in\{1,\cdots,R\}, \forall{n}\in\{1,\cdots,N-1\},
    \end{multline}
  }\normalsize
where $\lambda_{1}$ and $\lambda_{2}$ are the smoothness control parameters, $\lambda_{3}$ is the sparsity control parameter, $m$ is the seasonal period, and each matrix $\ML_{i}\in\mathbb{R}^{(I_{N}-i)\times I_{N}}$ is a smoothness constraint matrix.
As in \cite{yokota2016smooth}, in $\ML_{i}$, $l_{nn}=1$, and $l_{n(n+i)}=-1 \ \textnormal{for all}\ 0\leq n\leq I_{N}-i$, and the other entries are $0$.
% \kijung{mention that the other entries are $0$ - Complete}
%TODO: Matrix formulation
Note that the columns of the non-temporal factor matrices $\ntU$ are normalized to one, while those of the temporal factor matrix $\UN$ remain unnormalized.

% {\small
% \begin{equation}
%     \Lone=
%     \begin{bmatrix}
%         1 & -1 & 0 \\
%         0 & 1 & -1 
%     \end{bmatrix}
%     , \Lm=
%     \begin{bmatrix}
%         1 & -1 & 0 \\
%         0 & 1 & -1 
%     \end{bmatrix}
% \end{equation}
% }

The first and second penalty terms in (\ref{eq:batch:smoothObj}) encourage the temporal and seasonal smoothness in $\UN$, respectively.
%In the first penalty term, 
Specifically, minimizing $\norm{\Lone\UN}_{F}^{2}=\sum_{i=1}^{I_{N}-1}\norm{\uvec_{i}^{(N)}-\uvec_{i+1}^{(N)}}^{2}$ enforces that the values of consecutive temporal vectors do not change dramatically. %TODO: feature vectors????????
Similarly, %in the second penalty term, 
minimizing $\norm{\Lm\UN}_{F}^{2}=\sum_{i=1}^{I_{N}-m}\norm{\uvec_{i}^{(N)}-\uvec_{i+m}^{(N)}}^{2}$ imposes that the values of temporal vectors of successive seasons do not change abruptly.
The last penalty term in (\ref{eq:batch:smoothObj}) enforces sparsity of $\TO$.

\change{After solving the optimization problem, the missing entries can be recovered with the values of the low-rank approximation $\TXhat=\ktensor$ in the same position.}

\subsection{Proposed Factorization Model for Dynamic Tensors}

Now, we consider a case where the input tensor is dynamic. Specifically, we assume that $(N-1)$-way incomplete subtensor $\TY_{t}\in\mathbb{R}^{\IonetoINminus}$, where $t=1,2,\cdots$, are concatenated sequentially over time.
Then, at each time $t$, we aim to minimize (\ref{eq:streaming:smoothObj}). Note that if $t=I_{N}$, then (\ref{eq:streaming:smoothObj}) is equivalent to (\ref{eq:batch:smoothObj}), which is the batch optimization problem.
  % We estimate the latent factors by solving the time-wise exponentially weighted least squares (EWLS)~\cite{mardani2015subspace} defined as follows:
  % We seek the factor matrices solving the following optimization problem: % which uses time-wise exponentially weighted least squares (EWLS) criteria~\cite{mardani2015subspace}:
  %We seek factor matrices that minimize (\ref{eq:streaming:smoothObj}). % the following cost function: % which uses time-wise exponentially weighted least squares (EWLS) criteria~\cite{mardani2015subspace}:

  {\small
    \begin{multline} \label{eq:streaming:smoothObj}
      %    \min_{\allFM,\TO_{\tau}} \sum_{\tau=1}^{t}\bigg[\frac{1}{2}\Big[\norm{\TOmega_{\tau}\hadamard(\TY_{\tau}-\TO_{\tau}-\TX_{\tau})}_{F}^{2} \\ 
      %    + \lambda_{1}\norm{\ptau}_{F}^{2} + \lambda_{2}\norm{\qtau}_{F}^{2}\Big] + \lambda_{3}\norm{\TO_{\tau}}_{1}\bigg],\\
      % \min_{\allFM,\TO_{\tau}} \sum_{\tau=1}^{t}\bigg[\norm{\TOmega_{\tau}\hadamard(\TY_{\tau}-\TO_{\tau}-\TX_{\tau})}_{F}^{2} \\ 
      \hspace{-3mm} C_{t}(\ntU,\{\uN_{\tau},\TO_{\tau}\}_{\tau=1}^{t}) = \\
      %\hspace{-5mm}
      \sum_{\tau=1}^{t}\bigg[\norm{\TOmega_{\tau}\hadamard(\TY_{\tau}-\TO_{\tau}-\TX_{\tau})}_{F}^{2}
        + \lambda_{1}\norm{\ptau}_{F}^{2} + \lambda_{2}\norm{\qtau}_{F}^{2} + \lambda_{3}\norm{\TO_{\tau}}_{1}\bigg],\\
      \textnormal{subject to} \ \TX_{\tau}=\ktensorTau, \norm{\tildeurn}_{2}=1, \\
      \forall{r}\in\{1,\cdots,R\}, \forall{n}\in\{1,\cdots,N-1\},
    \end{multline}
  }\normalsize
where {\small$\ptau=\fv_{\tau-1}^{(N)}-\fv_{\tau}^{(N)}$} if $\tau>1$ and $\mathbf{0}$ otherwise, and {\small$\qtau=\fv_{\tau-m}^{(N)}-\fv_{\tau}^{(N)}$} if $\tau>m$ and $\mathbf{0}$ otherwise.
% \kijung{please check the following sentence}
The temporal vector $\uvec_{\tau}^{(N)}$ is the $\tau$-th row vector of the temporal factor matrix and the temporal component of the subtensor $\TY_{\tau}$.
% We refer to $\uvec_{\tau}^{(N)}$ as the temporal vector at each time $\tau$.

\section{Proposed Method}
\label{sec:method}

\begin{figure*}[!t]
    \vspace{-4mm}
\end{figure*}

In this section, we introduce \method, a robust streaming tensor factorization and completion algorithm for seasonal tensor streams.
\method aims to find factor matrices and outlier subtensors that minimize (\ref{eq:streaming:smoothObj}) incrementally in an online manner.
%via capturing seasonal patterns inherent in real-world tensor streams.
% \kijung{Could you provide some connections between \method and the previous section? - Complete}
% The purpose of our method is to find an approximate solution of (\ref{eq:streaming:smoothObj}) by considering only one tensor slice $\TYt$ at a time.
\method consists of the following three steps.
\begin{enumerate}[leftmargin=*]
    \item {\bf Initialization}: initializes our model using the data streamed over a short time interval (e.g. 3 seasons),
    \item {\bf Fitting the Holt-Winters model}: decomposes the temporal factor into the level, trend, and seasonal components,
    \item {\bf Dynamic Update}: repeatedly observes a newly arrived subtensor and updates our model robustly to outliers using the trained seasonal patterns.
\end{enumerate}

\subsection{Initialization} \label{sec:method:init}

\begin{algorithm}[t]
    \small
    \caption{Initialization}\label{alg:init}
    \DontPrintSemicolon
    \SetNoFillComment
    \SetKwInOut{Input}{Input}
    \SetKwInOut{Output}{Output}
    \SetKwProg{Procedure}{Procedure}{}{}

    \Input{$\{\TY_{t},\TOmega_{t}\}_{t=1}^{\ti}$, $R$, $m$, $\lambda_{1}$, $\lambda_{2}$, $\lambda_{3}$}
    \Output{(1) completed tensor $\TXhat_{init}=\{\TXhat_{t}\}_{t=1}^{\ti}$, \newline (2) factor matrices $\allFM$} % \newline (3) outliers $\TO_{init}=\{\TO_{t}\}_{t=1}^{\ti}$}
    % $\{\TXhat_{t}\}_{t=1}^{\ti}$
    $\TY_{init}\leftarrow[\TY_{1},\TY_{2},\cdots,\TY_{\ti}]$\\
    $\TOmega_{init}\leftarrow[\TOmega_{1},\TOmega_{2},\cdots,\TOmega_{\ti}]$\\
    $\TO_{init}\leftarrow\boldsymbol{\emptyset}_{\IonetoINminus\times\ti}$\\
    randomly initialize $\allFM$\\
    $\lambda_{3,init}\leftarrow\lambda_{3}$\\
    \Repeat{$\frac{\norm{\TXhat_{pre}-\TXhat_{init}}_{F}}{\norm{\TXhat_{pre}}_{F}}<tol$}{ \label{alg:init:start}
        % $\TY^{*}\leftarrow \TY_{init}-\TO_{init}$\\
        $\TXhat_{init},\allFM\leftarrow\textnormal{\methodALS}(\TO_{init}, ..., \allFM)$ \\
        % $\TXhat_{init},\allFM\leftarrow\textnormal{\methodALS}(\TY^{*}, ..., \allFM)$ \\
        $\TO_{init}\leftarrow \textnormal{SoftThresholding}(\TOmega_{init}\hadamard(\TY_{init}-\TXhat_{init}), \lambda_{3})$\\
        $\lambda_{3}\leftarrow d\cdot\lambda_{3}$\\
        \If{$\lambda_{3}<\lambda_{3,init}/100$}{
            $\lambda_{3}\leftarrow\lambda_{3,init}/100$\\
        }
    } \label{alg:init:end}
    \normalsize
    % \vspace*{4pt}
\end{algorithm}
% \vspace{-5mm}
\begin{algorithm}[t]
    \small
    \caption{\methodALS: Batch Update in \method}\label{alg:als}
    \DontPrintSemicolon
    \SetNoFillComment
    \SetKwInOut{Input}{Input}
    \SetKwInOut{Output}{Output}
    \SetKwProg{Procedure}{Procedure}{}{}

    \Input{(1) $\TO$, $\TY$, $\TOmega$, $R$, $m$, $\lambda_{1}$, $\lambda_{2}$, \newline (2) initial factor matrices $\allFM$}
    \Output{(1) completed tensor $\TXhat$, \newline (2) updated factor matrices $\allFM$}

    % $fitness=1-\frac{\TOmega\hadamard(\TY-\ktensor)}{\norm{\TOmega\hadamard\TY}}$\\
    $\TY^{*}=\TY-\TO$\\
    \Repeat{$\Delta fitness<tol$}{
    % \tcc{Update the non-temporal factors}
    \For{$n=1,\cdots,N-1$}{
    \For{$i_{n}=1,\cdots,I_{n}$}{
    Calculate $\Binn$ and $\cinn$ using (\ref{eq:als:Binn}) and (\ref{eq:als:cinn})\\
    % Compute $\Binn$ by (\ref{eq:als:Binn})\\
    % Compute $\cinn$ by (\ref{eq:als:cinn})\\
    Update $\uinn$ using (\ref{eq:als:nontemporal})
    }
    \For{$r=1,\cdots,R$}{
        $\tildeurN\leftarrow\tildeurN\cdot\norm{\tildeurn}_{2}$\\
        $\tildeurn\leftarrow\tildeurn/\norm{\tildeurn}_{2}$
    }
    }
    % \tcc{Update the temporal factor}
    \For{$i_{N}=1,\cdots,I_{N}$}{
    Calculate $\BiNN$ and $\ciNN$ using (\ref{eq:als:Binn}) and (\ref{eq:als:cinn})\\
    % Compute $\Binn$ by (\ref{eq:als:Binn})\\
    % Compute $\cinn$ by (\ref{eq:als:cinn})\\
    Update $\uiNN$ using (\ref{eq:als:temporal})
    }
    $\TXhat\leftarrow\ktensor$\\
    $fitness\leftarrow 1-\frac{\norm{\TOmega\hadamard(\TY^{*}-\TXhat)}_{F}}{\norm{\TOmega\hadamard\TY^{*}}_{F}}$
    }
    \normalsize
\end{algorithm}

We first initialize all the factor matrices $\allFM$ by solving the batch optimization problem in (\ref{eq:batch:smoothObj}) using a subset of the corrupted tensor data over a short period of time.
Let $\ti$ denotes the start-up period.
We use the first $3$ seasons for initialization (i.e. $\ti=3m$), following the general convention for initializing the Holt-Winters method~\cite{hyndman2018forecasting}.

Algorithm~\ref{alg:init} describes the overall procedure of the initialization step.
First, we make a batch tensor $\TY_{init}\in\mathbb{R}^{\IonetoINminus\times\ti}$ by concatenating $\ti$ subtensors $\TY_{t}$.
Next, we factorize the outlier-removed tensor $\TY^{*}=\TY_{init}-\TO_{init}$ to get factor matrices, including a temporally and seasonally smooth temporal factor matrix, using \methodALS (Algorithm~\ref{alg:als}), which we describe below.
After that, we update the outlier tensor $\TO_{init}$ by applying the element-wise soft-thresholding with the threshold $\lambda_{3}$, defined as (\ref{eq:softthresholding}), to $\TOmega_{init}\hadamard(\TY_{init}-\TXhat_{init})$.
\begin{equation} \label{eq:softthresholding}
    \textnormal{SoftThresholding}(x,\lambda_{3})=\textnormal{sign}(x)\cdot\max(|x|-\lambda_{3},0).
\end{equation}
These two tasks, \methodALS and SoftThresholding, are repeated until the relative change of the recovered tensor $\TXhat_{init}$ in two successive iterations is less than the tolerance.
Note that, we update $\lambda_{3}$ to $d\cdot\lambda_{3}$ after each soft-thresholding.
This helps $\TXhat_{init}$ converge quickly.
Conceptually, it can be thought of as filtering out large outliers in the first few iterations and small outliers in the later iterations.
We set $d=0.85$.

In \methodALS, we use the alternating least squares (ALS) method to minimize the objective function in (\ref{eq:batch:smoothObj}) as its name implies.
% ALS is a widely used method for complete tensor factorization~\cite{smilde2005multi} as well as incomplete tensor factorization~\cite{tomasi2005parafac,zhou2008large}.
The ALS approach updates the factor matrices alternately in such a way that one matrix is updated while fixing the others.
We update the non-temporal factor matrices one by one and row by row, as formulated in Theorem~\ref{theorem:als:nontemporal}.
\begin{theorem}[Update rule for $\uinn$] \label{theorem:als:nontemporal}
    For each row $\uinn$  of each non-temporal matrix $\Un$,  (\ref{eq:als:nontemporal}) holds.
        {\small
            \begin{equation} \label{eq:als:nontemporal}
                \argmin\nolimits_{\uinn}C(\allU,\TO)={\Binn}^{-1}\cinn,
            \end{equation}
        }\normalsize
    where
        {\small
            \begin{align} \label{eq:als:Binn}
                \Binn & = \ \sum_{\mathclap{\inOmega}}\ \ \ \ \ \ \bigHadamardBinn(\bigHadamardBinn)^{\top},
                \\ \label{eq:als:cinn}
                \cinn & = \ \sum_{\mathclap{\inOmega}}\ \ \ystar\bigHadamardBinn,
                \vspace{-3mm}
            \end{align}
        }\normalsize
    \hspace{-1mm}$\ystar=y_{i_{1},\dots,i_{N}}-o_{i_{1},\dots,i_{N}}$, and $\Omegainn$ is the set of indices of the observed entries whose $n$-th mode's index is $i_{n}$.
\end{theorem}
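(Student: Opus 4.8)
The plan is to exploit the fact that, because $n\le N-1$, the row $\uinn$ of the non-temporal matrix $\Un$ enters $C(\allU,\TO)$ only through the reconstruction term $\norm{\TOmega\hadamard(\TY-\TO-\TX)}_{F}^{2}$: the smoothness penalties $\lambda_{1}\norm{\Lone\UN}_{F}^{2}$ and $\lambda_{2}\norm{\Lm\UN}_{F}^{2}$ depend only on the temporal matrix $\UN$, and $\lambda_{3}\norm{\TO}_{1}$ depends only on $\TO$. Hence, with every other quantity held fixed, minimizing $C$ over $\uinn$ is an unconstrained least-squares problem in the $R$-vector $\uinn$. (The unit-norm constraint on the columns of $\Un$ is not imposed inside this subproblem; it is restored by the explicit column renormalization performed immediately after the update in Algorithm~\ref{alg:als}.)

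Next I would expand $\TX=\ktensor$ entrywise, $x_{i_{1}\dots i_{N}}=\sum_{r=1}^{R}\prod_{l=1}^{N}u^{(l)}_{i_{l}r}$, and record two elementary facts: (i) an entry $x_{j_{1}\dots j_{N}}$ depends on $\uinn$ precisely when $j_{n}=i_{n}$; and (ii) for such an entry, $x_{i_{1}\dots i_{N}}=\sum_{r}u^{(n)}_{i_{n}r}\prod_{l\neq n}u^{(l)}_{i_{l}r}=\uinn^{\top}\bigl(\bigHadamardBinn\bigr)$, the inner product of $\uinn$ with the Hadamard product of the corresponding rows of the remaining factor matrices. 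Writing $\ystar=y_{i_{1},\dots,i_{N}}-o_{i_{1},\dots,i_{N}}$ and summing only over observed entries (which absorbs the mask $\TOmega$), the $\uinn$-dependent part of $C$ is
\[
f(\uinn)=\sum_{\inOmega}\Big(\ystar-\uinn^{\top}\bigHadamardBinn\Big)^{2}+\textnormal{const}.
\]

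Then I would differentiate $f$ with respect to $\uinn$, set the gradient to $\mathbf{0}$, and collect terms to obtain the normal equations
\[
\Bigg(\sum_{\inOmega}\bigHadamardBinn\big(\bigHadamardBinn\big)^{\top}\Bigg)\uinn=\sum_{\inOmega}\ystar\,\bigHadamardBinn,
\]
i.e. $\Binn\uinn=\cinn$ with $\Binn,\cinn$ exactly as in (\ref{eq:als:Binn})--(\ref{eq:als:cinn}); solving yields $\uinn=\Binn^{-1}\cinn$. Since $f$ is a sum of squares it is convex, $\Binn$ is its (suitably scaled) Hessian and is positive semidefinite, so whenever $\Binn$ is invertible this stationary point is the unique global minimizer, which establishes (\ref{eq:als:nontemporal}).

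I expect the only real care to lie in the index bookkeeping of the second step — verifying that $\uinn$ affects precisely the slice of entries with $n$-th index $i_{n}$ and that each such per-entry contribution collapses to the inner product $\uinn^{\top}\bigl(\bigHadamardBinn\bigr)$ — together with the side remark that the column-norm constraint is deferred to the renormalization step rather than enforced within this subproblem. The differentiation and the passage to the normal equations are routine.
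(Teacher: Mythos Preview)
Your proposal is correct and follows essentially the same approach as the paper: both arguments isolate the dependence of $C$ on $\uinn$ through the masked reconstruction term, differentiate, and arrive at the normal equations $\Binn\uinn=\cinn$. Your write-up is in fact slightly more complete, since you make explicit the convexity argument justifying that the stationary point is the minimizer and you address the deferred column-norm constraint, both of which the paper leaves implicit.
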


\begin{proof}
    % For all $1\leq n\leq N-1$ and $1\leq j\leq R$, a partial derivative of the cost function $C(\allU,\TO)$ with respect to $u^{(n)}_{i_{n}j}$ 
    For all $1\leq i_{n}\leq I_{n}$ and $1\leq j \leq R$,
    {\small
            \begin{equation*}
                \frac{\partial C}{\partial u^{(n)}_{i_{n}j}} = \ \ \ \sum_{\mathclap{\inOmega}}\ \ \ 2\Big(\big(\sum_{r=1}^{R}\prod_{l=1}^{N}u^{(l)}_{i_{l}r}-\ystar\big)\prod_{l\neq n}u^{(l)}_{i_{l}j}\Big)=0.
            \end{equation*}
        }\normalsize
    It is equivalent to
        {\footnotesize
            \begin{multline*}
                \sum_{\mathclap{\inOmegasplit}}\ \  \Big(\sum_{r=1}^{R}\big(u^{(n)}_{i_{n}r}\prod_{l\neq n}u^{(l)}_{i_{l}r}\big)\prod_{l\neq n}u^{(l)}_{i_{l}j}\Big)=
                \sum_{\mathclap{\inOmegasplit}}\ \  \Big(\ystar\prod_{l\neq n}u^{(l)}_{i_{l}j}\Big), \forall j.
            \end{multline*}
        }\normalsize
    Then, vectorize the equation as:
    {\small
    \begin{equation*}
        \Binn\uinn=\cinn \Leftrightarrow \uinn={\Binn}^{-1}\cinn.\qedhere
    \end{equation*}
    }\normalsize
\end{proof}

We next update the temporal factor matrix $\UN$ row by row as formulated in Theorem~\ref{theorem:als:temporal}.
\begin{theorem}[Update Rule for $\uiNN$] \label{theorem:als:temporal}
    For each row $\uiNN$  of the temporal matrix $\UN$,  (\ref{eq:als:temporal}) holds.
\end{theorem}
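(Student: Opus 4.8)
The plan is to mirror the proof of Theorem~\ref{theorem:als:nontemporal}, now carrying along the two smoothness penalties, since these are the only terms in $C(\allU,\TO)$ besides the reconstruction term that depend on the temporal factor matrix $\UN$; the sparsity term $\lambda_{3}\norm{\TO}_{1}$ is constant in $\UN$ and is dropped, and the normalization constraints $\norm{\tildeurn}_{2}=1$ apply only for $n\le N-1$, so the minimization over a row $\uiNN$ is unconstrained. First I would fix a row index $i_{N}$ and a column index $j$ and differentiate $C$ with respect to $\uNiNj$.

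The reconstruction term $\norm{\TOmega\hadamard(\TY-\TO-\TX)}_{F}^{2}$ contributes exactly the expression that appears in the proof of Theorem~\ref{theorem:als:nontemporal} specialized to $n=N$: its derivative, after the same algebraic manipulation and vectorization over $j$, is $2(\BiNN\uiNN-\ciNN)$ with $\BiNN$, $\ciNN$ as in (\ref{eq:als:Binn})--(\ref{eq:als:cinn}) and $\ystar=y_{i_{1},\dots,i_{N}}-o_{i_{1},\dots,i_{N}}$. For the smoothness terms I would expand $\norm{\Lone\UN}_{F}^{2}=\sum_{i=1}^{I_{N}-1}\norm{\uvec_{i}^{(N)}-\uvec_{i+1}^{(N)}}^{2}$ and observe that the row $\uiNN$ occurs in at most two summands (as the left term when $i_{N}\le I_{N}-1$, as the right term when $i_{N}\ge 2$), so $\partial(\lambda_{1}\norm{\Lone\UN}_{F}^{2})/\partial\uNiNj$ is $2\lambda_{1}$ times the sum over the one or two temporal neighbors of $(\uNiNj-u^{(N)}_{i'j})$. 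The seasonal penalty $\norm{\Lm\UN}_{F}^{2}=\sum_{i=1}^{I_{N}-m}\norm{\uvec_{i}^{(N)}-\uvec_{i+m}^{(N)}}^{2}$ is treated identically, contributing the analogous terms from the one or two seasonal neighbors $i_{N}\pm m$.

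Then I would set the total derivative to zero, move the coefficient of $\uNiNj$ to the left and the neighbor terms to the right, and vectorize over $j\in\{1,\dots,R\}$. This produces a linear system whose coefficient matrix is $\BiNN$ plus $(\lambda_{1}\nu_{1}(i_{N})+\lambda_{2}\nu_{2}(i_{N}))\eyeR$, where $\nu_{1}(i_{N})$ and $\nu_{2}(i_{N})$ are the numbers of temporal and seasonal neighbors of row $i_{N}$, and whose right-hand side is $\ciNN$ plus $\lambda_{1}$ and $\lambda_{2}$ times the sums of the corresponding neighbor rows of $\UN$. Inverting this matrix yields the closed form asserted in (\ref{eq:als:temporal}).

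The hard part will be the boundary bookkeeping rather than any deep step: the first and last temporal rows have only one temporal neighbor, and rows in the first or last season (index $\le m$ or $>I_{N}-m$) have only one seasonal neighbor, so the diagonal shift and the neighbor sums must be stated with indicators or an explicit case split instead of fixed constants. Apart from that, the derivation is a routine extension of the computation in the proof of Theorem~\ref{theorem:als:nontemporal}.
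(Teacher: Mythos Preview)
Your proposal is correct and mirrors the paper's own proof: the paper sets $\partial C/\partial u^{(N)}_{i_{N}j}=0$, obtaining the reconstruction contribution exactly as in Theorem~\ref{theorem:als:nontemporal} plus penalty derivatives $2K_{i_{N}j}$ and $2H_{i_{N}j}$ (your temporal and seasonal neighbor terms), then vectorizes and inverts, with the boundary bookkeeping handled by the five-way case split you anticipated.
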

\begin{proof}
    For all $1\leq i_{N}\leq I_{N}$ and $1\leq j \leq R$,
    {\small
            \begin{multline} \label{eq:als:temporal:derivative}
                \frac{\partial C}{\partial u^{(N)}_{i_{N}j}} = \ \ \ \sum_{\mathclap{\inOmegaN}}\ \ \ 2\Big(\big(\sum_{r=1}^{R}\prod_{l=1}^{N}u^{(l)}_{i_{l}r}-\ystar\big)\prod_{l\neq N}u^{(l)}_{i_{l}j}\Big) \\
                % + K_{i_{N}j} = 0,
                + 2K_{i_{N}j} + 2H_{i_{N}j} = 0,
                % -2\lambda_{1}p_{i_{N}j}-2\lambda_{2}q_{i_{N}j} = 0,
            \end{multline}
        }\normalsize
    where $K_{i_{N}j}$ and $H_{i_{N}j}$ are defined as (\ref{eq:kappa_eta}).
    We vectorize the solution of (\ref{eq:als:temporal:derivative}) as (\ref{eq:als:temporal}).
    See the supplementary document~\cite{sofia2020supple} for a full proof.
\end{proof}
The iterations are repeated until the fitness change in two consecutive iterations is less than the tolerance.

% \begin{equation} \label{eq:als:nontemporal}
%     \uinn={\Binn}^{-1}\cinn,
% \end{equation}
% where $\Binn$ is the $R$ by $R$ matrix whose entries are
% \begin{equation} \label{eq:als:Binn}
%     \Binn = \sum_{\inOmega} \bigHadamardBinn(\bigHadamardBinn)^{\top},
% \end{equation}
% and $\cinn$ is the vector of size $R$ whose entries are
% \begin{equation} \label{eq:als:cinn}
%     \cinn = \sum_{\inOmega} y_{i_{1},\cdots,i_{N}}\bigHadamardBinn,
% \end{equation}
% and $\inOmega$ is the subset of the observed entries whose $n$-th mode's index is $i_{n}$.
% \kijung{provide an intuition about this optimization method or justify it by showing that it minimizes something..}

\begin{figure*}[!t]
    \vspace{-4mm}
    \hrulefill
    \small
    \begin{align}
         & \uiNN=
        \begin{cases}
            \big(\BiNN+(\lambda_{1}+\lambda_{2})\eyeR\big)^{-1}\big(\ciNN+\lambda_{1}\uvec_{i_{N}+1}^{(N)}+\lambda_{2}\uvec_{i_{N}+m}^{(N)}\big)                                                  & \textnormal{if}\ i_{N}=1,                        \\
            \big(\BiNN+(2\lambda_{1}+\lambda_{2})\eyeR\big)^{-1}\big(\ciNN+\lambda_{1}(\uvec_{i_{N}-1}^{(N)}+\uvec_{i_{N}+1}^{(N)})+\lambda_{2}\uvec_{i_{N}+m}^{(N)}\big)                         & \textnormal{else if}\ 1<i_{N}\leq m,             \\
            \big(\BiNN+2(\lambda_{1}+\lambda_{2})\eyeR\big)^{-1}\big(\ciNN+\lambda_{1}(\uvec_{i_{N}-1}^{(N)}+\uvec_{i_{N}+1}^{(N)})+\lambda_{2}(\uvec_{i_{N}-m}^{(N)}+\uvec_{i_{N}+m}^{(N)})\big) & \textnormal{else if}\ m<i_{N}\leq I_{N}-m,       \\
            \big(\BiNN+(2\lambda_{1}+\lambda_{2})\eyeR\big)^{-1}\big(\ciNN+\lambda_{1}(\uvec_{i_{N}-1}^{(N)}+\uvec_{i_{N}+1}^{(N)})+\lambda_{2}\uvec_{i_{N}-m}^{(N)}\big)                         & \textnormal{else if}\ I_{N}-m<i_{N}\leq I_{N}-1, \\
            \big(\BiNN+(\lambda_{1}+\lambda_{2})\eyeR\big)^{-1}\big(\ciNN+\lambda_{1}\uvec_{i_{N}-1}^{(N)}+\lambda_{2}\uvec_{i_{N}-m}^{(N)}\big)                                                  & \textnormal{otherwise}.                          \\
        \end{cases} \label{eq:als:temporal}                                                              \\
         & K_{i_{N}j}  =\begin{cases}
            \lambda_{1}(\uNiNj-\uNiNjplus)                         & \textnormal{if}\ i_{N}=1,      \\
            -\lambda_{1}(\uNiNjminus-\uNiNj)                       & \textnormal{if}\  i_{N}=I_{N}, \\
            2\lambda_{1}\uNiNj-\lambda_{1}(\uNiNjminus+\uNiNjplus) & \textnormal{otherwise},        \\
        \end{cases} \text{~}  H_{i_{N}j}=\begin{cases}
            \lambda_{2}(\uNiNj-\uNiNjplusm)                          & \textnormal{if}\ 1\leq i_{N}\leq m, \\
            -\lambda_{2}(\uNiNjminusm-\uNiNj)                        & \textnormal{if}\ i_{N} > I_{N}-m,   \\
            2\lambda_{2}\uNiNj-\lambda_{2}(\uNiNjminusm+\uNiNjplusm) & \textnormal{otherwise}.             \\
        \end{cases} \label{eq:kappa_eta}
    \end{align}
    \normalsize
    \hrulefill
    %\vspace*{4pt}
\end{figure*}

\subsection{Fitting the Holt-Winters model} \label{sec:method:hw}
% Through initialization, we can get the initial factor matrices $\allUnti$, whose temporal factor matrix $\UN[\ti]$ is temporally and seasonally smooth.
Through initialization, we can get a temporally and seasonally smooth temporal factor matrix $\UN$.
Each of the column vectors of $\UN$ (i.e., $\tildeuN_{1},\tildeuN_{2},\cdots,\tildeuN_{R}$) can be thought of as seasonal time series of length $\ti$ with seasonal period $m$.
In this step, we capture seasonal patterns and trend from each of $\tildeuN_{1},\tildeuN_{2},\cdots,\tildeuN_{R}$ by fitting the additive Holt-Winters model (see Section~\ref{sec:prelim:hw} for details).

We optimize the HW model by BFGS-B~\cite{byrd1995limited}, which belongs to quasi-Newton methods for solving non-linear optimization problem with box constraints on variables.
For each $\tildeuN_{r}$, we can get the level $\tilde{\level}_{r}$, trend $\tilde{\trend}_{r}$, and seasonal component $\tilde{\seasonal}_{r}$, and the corresponding smoothing parameters (i.e., $\alpha_{r}$, $\beta_{r}$, and $\gamma_{r}$, which are $r$-th entry of vectors $\sAlpha,\sBeta,\sGamma\in\mathbb{R}^{R}$, respectively).
% Then we can predict the temporal vector of the next time-stamp by the HW forecast equation in (\ref{eq:hw_forecast}).
As seen in (\ref{eq:hw_smoothing}) and  (\ref{eq:hw_forecast}), the HW model requires only the last values of the level and trend component, and the last one season of the seasonal component.
Thus, only the last time of level $\level_{\ti}$ and trend $\trend_{\ti}$, and the last $m$ values of the seasonal component (i.e., $\seasonal_{\ti-m+1},\dots,\seasonal_{\ti}$) are needed.

% Thus, we just need $\level_{\ti}=[l_{\ti 1},\cdots,l_{\ti R}]^{\top}$ for level, $\1$

% \begin{algorithm}[!t]
%     \caption{Initialization}\label{alg:hw_fitting}
%     \DontPrintSemicolon
% 	\SetNoFillComment
% 	\SetKwInOut{Input}{Input}
% 	\SetKwInOut{Output}{Output}
% 	\SetKwProg{Procedure}{Procedure}{}{}

%     \Input{(1) $\{\TY_{t},\TOmega_{t}\}_{t=1}^{\ti}$, $R$, $m$, $\lambda_{1}$, $\lambda_{2}$, $\lambda_{3}$}
%     \Output{(1) completed tensor $\{\TXhat_{t}\}_{t=1}^{\ti}$, \newline (2) factors $\allFM$, (3) outliers $\TO_{init}$}

%     $\TY_{init}\leftarrow[\TY_{1},\TY_{2},\cdots,\TY_{\ti}]$\\
%     $\TOmega_{init}\leftarrow[\TOmega_{1},\TOmega_{2},\cdots,\TOmega_{\ti}]$\\
%     $\TO_{init}\leftarrow\boldsymbol{\emptyset}_{\IonetoINminus\times\ti}$\\
%     randomly initialize $\allFM$\\
%     \Repeat{$\frac{\norm{\TXhat_{pre}-\TXhat_{init}}}{\norm{\TXhat_{pre}}}<tol$}{
%         $\TY_{\TO}\leftarrow \TY_{init}-\TO_{init}$\\
%         $\TXhat_{init},\allFM\leftarrow \textnormal{\methodALS}(\TY_{\TO}, \TOmega_{init}, R, m, \lambda_{1}, \lambda_{2}, \allFM)$ \\
%         $\TO_{init}\leftarrow \textnormal{SoftThresholding}(\TY_{init}-\TXhat_{init}, \lambda_{3})$
%     }
% \end{algorithm}

\subsection{Dynamic Update} \label{sec:method:update}

% \SetInd{0.3em}{1em}
\DecMargin{0.5em}
\begin{algorithm}[t]
    \small
    \caption{Dynamic Updates in \method}\label{alg:dynamic}
    \DontPrintSemicolon
    \SetNoFillComment
    \SetKwInOut{Input}{Input}
    \SetKwInOut{Output}{Output}
    \SetKwProg{Function}{Function}{}{}
    \SetKwFunction{Preclean}{PRE_CLEANING}
    \SetKwFunction{Delete}{DELETE}

    \Input{(1) $\{\TY_{t},\TOmega_{t}\}_{t=\ti+1}^{\infty}$, $R$, $m$, $\lambda_{1}$, $\lambda_{2}$, $\lambda_{3}$, $\mu$, $\phi$, \newline (2) $\ntUnti$, $\{\uNt\}_{t=\ti-m+1}^{\ti}$, \newline (3) HW factors $\level_{\ti}$, $\trend_{\ti}$, $\{\seasonal_{t}\}_{t=\ti-m+1}^{\ti}$, $\sAlpha$, $\sBeta$, $\sGamma$}
    % \Output{(1) Completed tensor slices $\{\TXhat_{t}\}_{t=t_{init}+1}^{\infty}$, \newline (2) factors $\allFM$}

    $\bSigma_{\ti}\leftarrow\lambda_{3}/100\times\boldsymbol{1}_{\IonetoINminus}$\\

    \For{$t=\ti+1,\ti+2,\cdots$}{
        % \tcc{Estimate the outliers}
        $\uNhat_{t|t-1}\leftarrow\level_{t-1}+\trend_{t-1}+\seasonal_{t-m}$\\
        $\TYthat\leftarrow\llbracket\ntUntminus;\uNhat_{t|t-1}\rrbracket$\\
        % Forecast $\uNhat_{t|t-1}$ with (\ref{eq:forecast:uNhat})\\
        % Forecast $\TYthat$ with (\ref{eq:forecast:Ythat})\\
        Estimate $\TOt$ with (\ref{eq:dynamic:outliers})\\
        Update $\bSigma_{t}$ with (\ref{eq:update:bSigma})\\
        % $\TOt=\textnormal{SoftThresholding}(\TOmegat\hadamard(\TYt-\TYthat),\lambda_{3})$\\
        % \tcc{Update the non-temporal factor matrices}
        \For{$n=1,\cdots,N-1$}{
            Update $\Un_{t}$ using (\ref{eq:gradient:nontemp})
        }
        % \tcc{Update the temporal vector}
        Update $\uN_{t}$ using (\ref{eq:gradient:temp})\\
        % \tcc{Update the HW components}
        Update $\level_{t}$, $\trend_{t}$, $\seasonal_{t}$ using (\ref{eq:update:hwfactors})\\
        % \tcc{Compute and update the outliers}
        % \tcc{\TODO}
        $\TXthat\leftarrow\ktensort$\\
        % $\TOt=\textnormal{SoftThresholding}(\TOmegat\hadamard(\TYt-\TXhat),\lambda_{3})$
    }
    \normalsize
\end{algorithm}

We let $\ntUnt$ be the non-temporal factor matrices after processing the $t$-th subtensor $\TYt$.
At time $t$, we receive $\TYt$ and have the previous estimates of the non-temporal factor matrices $\ntUntminus$ and the previous $m$ estimates of the temporal vectors $\uN_{t-m},\cdots,\uN_{t-1}$.
We also have the previous level and trend components $\level_{t-1}$ and $\trend_{t-1}$, and the previous $m$ seasonal components $\seasonal_{t-m},\cdots,\seasonal_{t-1}$.
% Our model is updated through the following $5$ steps per iteration.
Through the following steps, we can update our factorization model and impute the missing values incrementally. % until the entire stream is consumed.
% After updating the factor matrices, the missing values on the subtensor $\TYt$ can be estimated and the outlier subtensor $\TOt$ can be estimated.
Algorithm~\ref{alg:dynamic} describes the procedure of the dynamic update step.

\subsubsection{Estimate $\TOt$}
We first estimate the outlier subtensor $\TOt$.
We predict the temporal vector by one-step-ahead Holt-Winters' forecast (see Section~\ref{sec:prelim:hw} for details) as follows:

{\small
\begin{equation} \label{eq:forecast:uNhat}
    \uNhat_{t|t-1}=\level_{t-1}+\trend_{t-1}+\seasonal_{t-m}.
\end{equation}
}\normalsize
We then predict the next subtensor $\TYthat$ as follows:
{\small
\begin{equation} \label{eq:forecast:Ythat}
    \TYthat=\llbracket\ntUntminus,\uNhat_{t|t-1}\rrbracket.
\end{equation}
}\normalsize
We regard the observations that deviate significantly from the prediction $\TYthat$ as outliers.
Specifically, we identify the outliers by checking whether the difference between observation and prediction is greater than twice of the scale of error.
By extending the Gelper's pre-cleaning approach (see Section~\ref{sec:prelim:robust}) to a tensor, we can estimate $\TOt$ as follows:

{\small
\begin{align}
    \TYtstar             & =\Psi\Big(\frac{\TYt-\TYthat}{\bSigma_{t-1}}\Big)\bSigma_{t-1}+\TYthat = \TYt-\TOt, \nonumber             \\
    % &=\TYt-\TOt, \nonumber\\
    \Leftrightarrow \TOt & = \TYt-\TYthat-\Psi\Big(\frac{\TYt-\TYthat}{\bSigma_{t-1}}\Big)\bSigma_{t-1}, \label{eq:dynamic:outliers}
\end{align}
}\normalsize
% where $\Psi(\TX)$ is to apply \small$\Psi(x)=\begin{cases}
%         x        & \textnormal{if}\ |x|<k, \\
%         sign(x)k & \textnormal{otherwise},
%     \end{cases}$\normalsize to each element of $\TX$ (we set $k=2$), and $\bSigma_{t-1}\in\mathbb{R}^{\IonetoINminus}$ is an error scale tensor each of whose entries is the scale of one-step-ahead forecast error in the corresponding entry.
where $\Psi(\cdot)$ is the element-wise Huber $\Psi$-function (we set $k=2$), and $\bSigma_{t-1}\in\mathbb{R}^{\IonetoINminus}$ is an error scale tensor each of whose entries is the scale of one-step-ahead forecast error in the corresponding entry.

In order to enable our model to adapt, we update the error scale tensor as follows:

{\small
\begin{equation} \label{eq:update:bSigma}
    \bSigma^{2}_{t}=\phi\rho\Big(\frac{\TYt-\TYthat}{\bSigma_{t-1}}\Big)\bSigma^{2}_{t-1}+(1-\phi)\bSigma^{2}_{t-1},
\end{equation}
}\normalsize
where $0\leq\phi\leq 1$ is a smoothing parameter and $\rho(x)$ is the element-wise biweight $\rho$-function, defined as (\ref{eq:biweight_rho}).
We set $k=2$ and $c_{k}=2.52$ for the biweight $\rho$-function.
Note that, the main difference between Gelper's approach and our method is that our method rejects outliers first and updates the error scale tensor, while Gelper's approach updates the error scale first.
% \kijung{Could you provide a reason why we deviate from Gelper's?}
The reason is that the error scale tensor can be contaminated by extremely large outliers if the error scale updates first.
We calculate the error scale for each entry because the variation may differ in different entries.
The initial value of all entries in $\bSigma_{\ti}$ are set to $\lambda_{3}/100$.

\subsubsection{Update $\ntUnt$}
Ideally, we need to update the non-temporal factor matrices considering all the historical data as seen in (\ref{eq:streaming:smoothObj}).
However, this is not feasible since we are dealing with a tensor stream whose length could be infinite.
% We therefore focus only on the current input subtensor $\TYt$ and update each of the factor matrices using stochastic gradient descent (SGD). \kijung{is it SGD or just GD?}
We therefore focus only on the current input subtensor $\TYt$ and update each of the factor matrices using gradient descent (GD).
% \kijung{is it SGD or just GD?}
To this end, we define a new cost function $\ft$ considering only the $t$-th summand of (\ref{eq:streaming:smoothObj}) as follows:

{\small
\begin{multline} \label{eq:ft}
    \ft(\ntU,\uN) = \norm{\TOmega_{t}\hadamard(\TYt-\TOt-\ktensoruN)}_{F}^{2} \\
    + \lambda_{1}\norm{\uN_{t-1}-\uN}_{F}^{2} + \lambda_{2}\norm{\uN_{t-m}-\uN}_{F}^{2} + \lambda_{3}\norm{\TOt}_{1}.
\end{multline}
}\normalsize
Let $\TRt$ be a residual subtensor defined as $\TOmegat\hadamard(\TYt-\TOt-\TYthat)$.
The non-temporal factor matrices are updated by taking a step of size $\mu$ in the direction of minimizing the cost function in (\ref{eq:ft}) as follows:

{\small
\begin{align} \label{eq:gradient:nontemp}
    \Un_{t} & = \Un_{t-1}-\mu\frac{\partial\ft(\ntUntminus,\uNhat_{t|t-1})}{\partial\Un} \nonumber \\
            & = \Un_{t-1}+2\mu\mathbf{R}_{(n)}
    \bigKhatrirao_{l=1,l\neq n}^{N-1}\Umat^{(l)}_{t-1}\cdot\textnormal{diag}(\uNhat_{t|t-1}),
    % &=\Un[t-1] \nonumber \\
    % &
    % \begin{aligned}
    %     {}=\Un[t-1]+2\mu\mathbf{R}_{(n)}
    %     \times\bigKhatrirao_{l=1,l\neq n}^{N-1}\Umat^{(l)}[t-1]\cdot\textnormal{diag}(\uNhat_{t}),
    % \end{aligned}
\end{align}
}\normalsize
where $\mathbf{R}_{(n)}$ is the mode-$n$ matricization of $\TRt$. %, and $\khatrirao$ is the Khatrirao product.  

\subsubsection{Update $\uN_{t}$}
Next, we update the temporal vector $\uN_{t}$ by a gradient descent step of size $\mu$ as follows:

{\small
\begin{align} \label{eq:gradient:temp}
    \uN_{t} & = \uNhat_{t|t-1}-\mu\frac{\partial\ft(\ntUntminus,\uNhat_{t|t-1})}{\partial\uN} \nonumber \\
            &
    \begin{aligned}
        % + \lambda_{1}(\uN[t-1]-\uNhat[t])+\lambda_{2}(\uN[t-m]-\uNhat[t])\bigg].
        {}=\uNhat_{t|t-1}+2\mu\Big[(\bigKhatrirao_{n=1}^{N-1}\Umat^{(n)}_{t-1})^{\top}\cdot\textnormal{vec}(\TRt)+ \lambda_{1}\uN_{t-1} \\
            +\lambda_{2}\uN_{t-m}-(\lambda_{1}+\lambda_{2})\uNhat_{t|t-1})\Big],
    \end{aligned}
\end{align}
}\normalsize
where $\textnormal{vec}(\cdot)$ is the vectorization operator.
% After all the factors are updated, the columns of $\Uone[t],\cdots,\UNminus[t]$ are nomalized to one and the weights are absorbed into $\uN[t]$. 

% \begin{multline} \label{eq:obj_slice}
%     \min_{\uN[t]} \frac{1}{2}\norm{\TOmega_{t}\hadamard(\TYt-\TOt-\TYthat)}_{F}^{2} \\ 
%     + \frac{\lambda_{1}}{2}\norm{\pvec[t]}_{F}^{2} + \frac{\lambda_{2}}{2}\norm{\qvec[t]}_{F}^{2} + \lambda_{3}\norm{\TOt}_{1},
% \end{multline}
% where $\pvec[t]=\uN[t-1]-\uN[t]$ and $\qvec[t]=\uN[t-m]=\uN[t]$.
% In this case, we have a closed-form solution:
% \begin{multline} \label{eq:uNt_update}
%     \uN[t]=\big(\BN[t]+(\lambda_{1}+\lambda_{2})\eyeR\big)^{-1} \\
%     \times\big(\cN[t]+\lambda_{1}\uN[t-1]+\lambda_{2}\uN[t-m]\big),
% \end{multline}
% where $\BN[t]$ is a $R$ by $R$ matrix whose entries are
% \begin{equation} \label{eq:BNt}
%     \BN[t]=\sum_{\inMinusOmegat} \bigHadamardult(\bigHadamardult)^{\top},
% \end{equation}
% and $\cN[t]$ is a length $R$ vector whose entries are
% \begin{equation} \label{eq:cNt}
%     \cN[t]=\sum_{\inMinusOmegat} y_{t,i_{1}\cdots i_{N-1}}\bigHadamardult.
% \end{equation}

\subsubsection{Update $\level_{t},\trend_{t},\seasonal_{t}$}
We update the level, trend, and seasonal components of the Holt-Winters model with the updated temporal vector by (\ref{eq:update:hwfactors}), which extends (\ref{eq:hw_smoothing}) to vectors:

{\small
\begin{subequations} \label{eq:update:hwfactors}
    \begin{equation} \label{eq:update:level}
        \level_{t} = \textnormal{diag}(\sAlpha)(\uN_{t}-\seasonal_{t-m})+(\eyeR-\textnormal{diag}(\sAlpha))(\level_{t-1}+\trend_{t-1}),
    \end{equation}
    \begin{equation} \label{eq:update:trend}
        \trend_{t} = \textnormal{diag}(\sBeta)(\level_{t}-\level_{t-1})+(\eyeR-\textnormal{diag}(\sBeta))\trend_{t-1},
    \end{equation}
    \begin{equation} \label{eq:update:seasonal}
        \seasonal_{t} = \textnormal{diag}(\sGamma)(\uN_{t}-\level_{t-1}-\trend_{t-1})+(\eyeR-\textnormal{diag}(\sGamma))\seasonal_{t-m},
    \end{equation}
\end{subequations}
}\normalsize
where diag($\cdot$) is an operator that creates a matrix with the elements of input vector on the main diagonal and $\eyeR$ is an $R$-by-$R$ identity matrix.
% \kijung{Should we mention the dimensionality of $\alpha$, $\beta$, and $\gamma$ somewhere?}

\subsubsection{Compute $\TXthat$}
Lastly, we can get $\TXthat$ by:
{\small
\begin{equation}
    \TXthat=\ktensort.
\end{equation}
}\normalsize
Using the reconstructed subtensor $\TXthat$, we can estimate the missing values on $\TYt$.

% The outlier subtensor $\TOt$ is updated by applying the element-wise soft-thresholding with the threshold $\lambda_{3}$, to $\TOmegat\hadamard(\TYt-\TXthat)$.
% Using the reconstructed subtensor $\TXthat$, we can estimate the missing values on $\TYt$.

\subsection{Forecast} \label{sec:method:forecasting}
Let $t_{end}$ be the last timestamp of the stream.
Given any $t=t_{end}+h$, where $h$ is a positive integer, we can forecast a future temporal vector $\uNhat_{t|t_{end}}$ using the level, trend, and seasonal components by applying (\ref{eq:hw_forecast}) to each of its elements.
We also can forecast a future subtensor $\hat{\TY}_{t|t_{end}}$ using the most recent non-temporal factor matrices $\{\Un_{t_{end}}\}_{n=1}^{N-1}$ and the predicted temporal vector $\uNhat_{t|t_{end}}$ by:

{\small
\begin{equation}
    \hat{\TY}_{t|t_{end}}=\llbracket\{\Un_{t_{end}}\}_{n=1}^{N-1},\uNhat_{t|t_{end}}\rrbracket.
\end{equation}
}\normalsize

% \begin{figure*}[!t]
%   \vspace{-4mm}
% \end{figure*}

\begin{figure*}[!t]
    \vspace{-4mm}
    \centering
    \subfigure[Ground Truth]{\label{fig:als:ground}
        \includegraphics[width=0.18\linewidth]{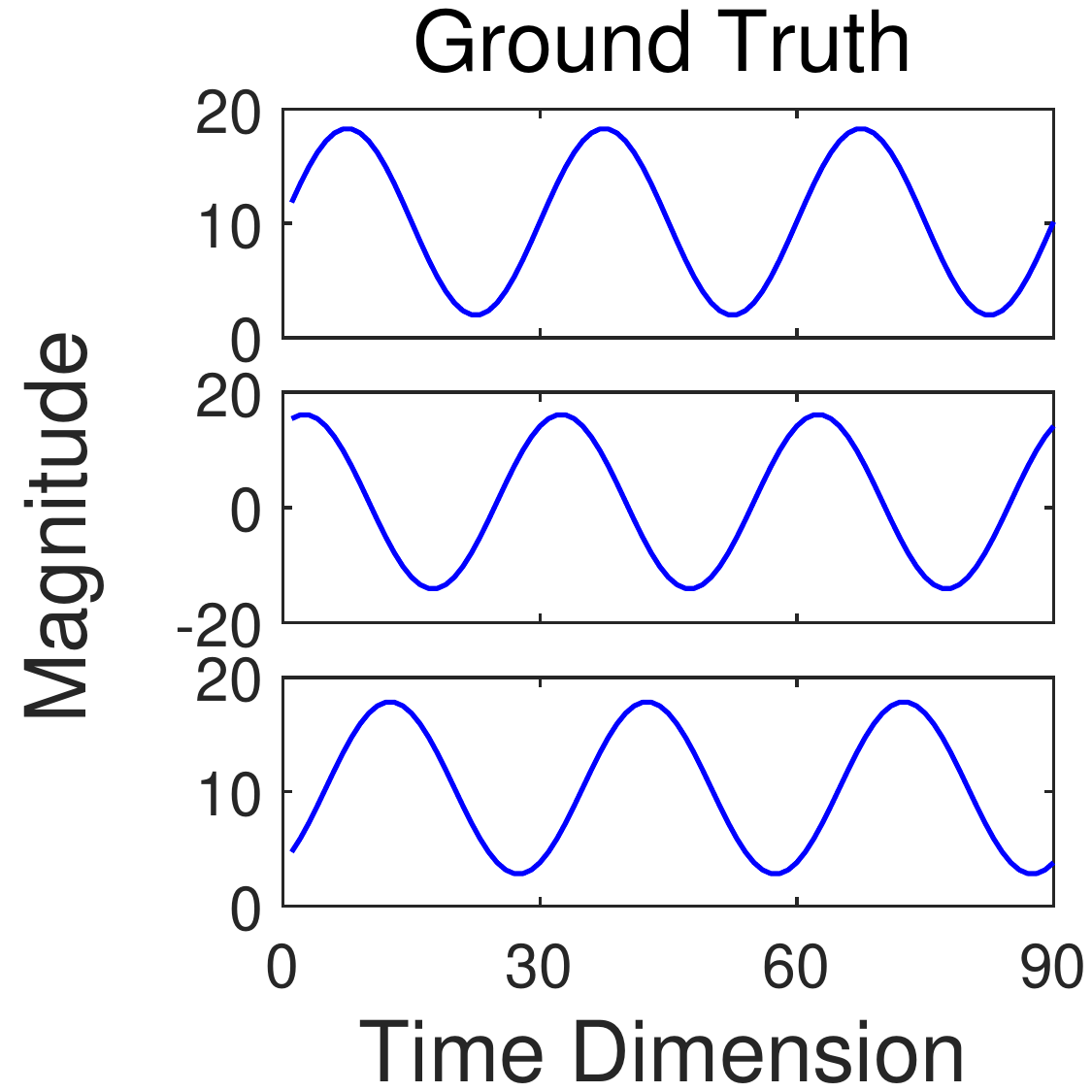}
    }
    \subfigure[Initialization with the vanilla ALS~\cite{zhou2008large}]{\label{fig:als:vanilla}
        \includegraphics[width=0.668\linewidth]{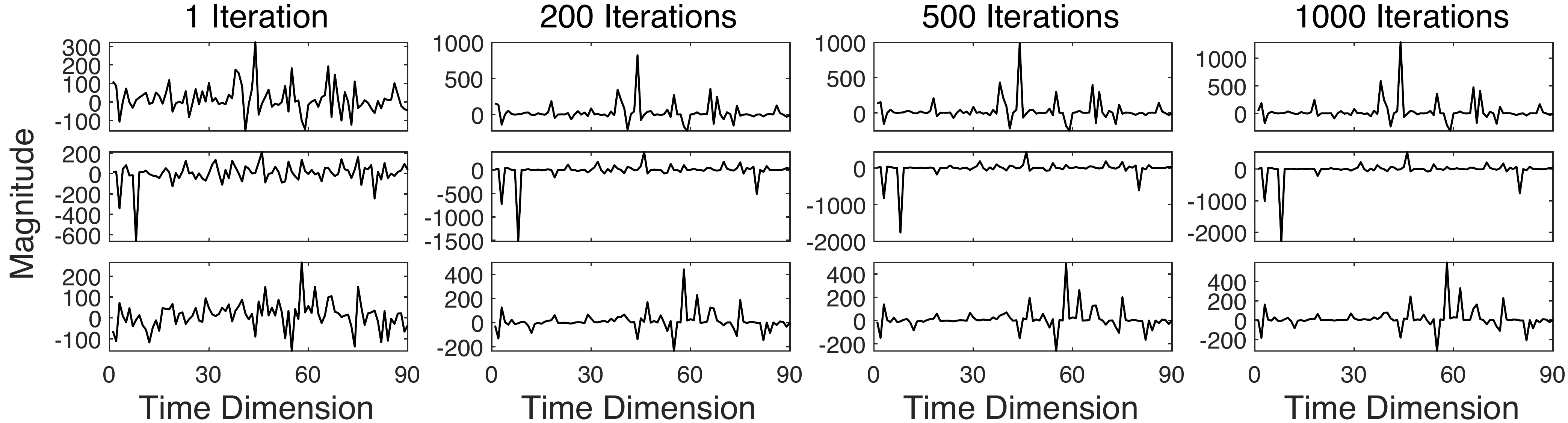}
    }\\
    \subfigure[Initialization with \methodALS (Proposed)]{\label{fig:als:sofia}
        \includegraphics[width=0.668\linewidth]{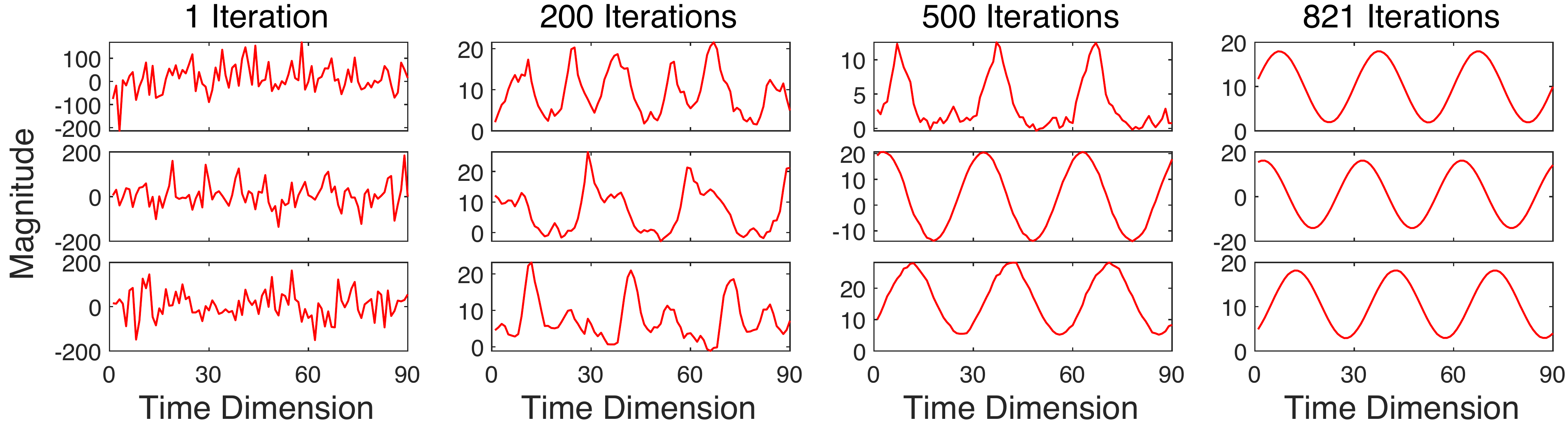}
    }
    \subfigure[Normalized Residual Error]{\label{fig:als:nre}
        \includegraphics[width=0.18\linewidth]{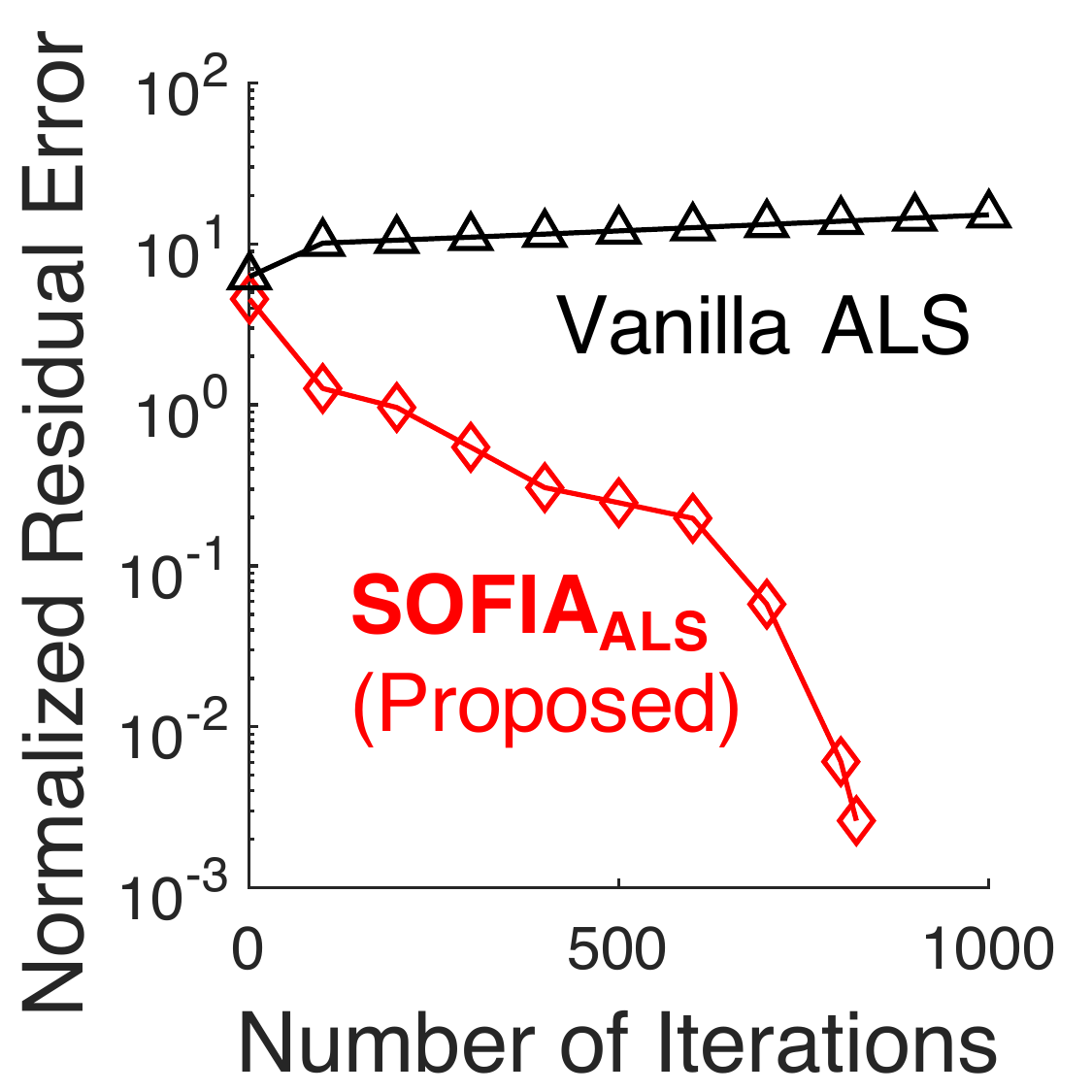}
    }
    \caption[]{\label{fig:exp:sofia_als}
        % The normalized residual error under 4 experimental settings from the mildest (leftmost) to the harshest (rightmost).
        % \textbf{\method is the most accurate} in all the tensor streams and all the experimental settings.
        \textbf{\methodALS accurately captured temporal patterns} from an incomplete and noisy tensor in the initialization step.
        We used a synthetic tensor with the temporal factor matrix shown in (a) and performed experiments under the $(90,20,7)$ environment.
        (b) and (c) show the evolution of the temporal factor matrix as the outer iteration (lines~\ref{alg:init:start}-\ref{alg:init:end} in Algorithm~\ref{alg:init}) proceeded, when the vanilla ALS and \methodALS were used in the initialization step, respectively.
        (d) shows the normalized residual error between the ground truth and the temporal factor matrix obtained by each of the algorithms as the outer iteration proceeded.
        \textbf{Using temporal and seasonal smoothness was greatly helpful} for \methodALS to find the underlying temporal patterns.
    }
\end{figure*}

% \vspace{-1mm}
\subsection{Time Complexity} \label{sec:method:complexity}
% \subsubsection{Time Complexity}
The time complexity of \method is the sum of the complexity of each step (i.e., initialization, HW fitting, and dynamic updates).
Since the time cost of fitting the HW model  depends only on the length of the series (i.e., $O(\ti)$), it is not a dominant part of the overall complexity.
% The time cost of the HW fitting is $O(\ti)$, and this is not a dominant part affecting the overall complexity. 
The time complexities of Algorithms~\ref{alg:init} and~\ref{alg:dynamic} are formulated in Lemmas~\ref{lemma:complexity:init} and \ref{lemma:complexity:dynamic}, respectively.
% Let $a=\sum_{t=1}^{\ti} nnz(\TOmegat)$ be the number of non-zero entries of the tensor .
\begin{lemma}[Time Complexity of Initialization in \method] \label{lemma:complexity:init}
    The time complexity of Algorithm~\ref{alg:init} is $O\big(|\TOmega_{init}|NR(N+R)+R^{3}(\sum_{n=1}^{N-1}I_{n}+\ti)\big)$ per iteration.
\end{lemma}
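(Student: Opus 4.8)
The plan is to charge the running time of one iteration to two sources: the single call to \methodALS (Algorithm~\ref{alg:als}) and the element-wise SoftThresholding step, and then show the latter is negligible. SoftThresholding is applied to $\TOmega_{init}\hadamard(\TY_{init}-\TXhat_{init})$, which only touches observed entries, so it runs in $O(|\TOmega_{init}|)$, and rescaling $\lambda_{3}$ is $O(1)$; hence it suffices to bound a single pass of the inner repeat loop inside \methodALS, which I take to be the ``iteration'' of the statement (each outer iteration of Algorithm~\ref{alg:init} runs several such sweeps plus one dominated SoftThresholding step).

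For one such sweep I would split the work into (i) forming the normal-equation quantities $\Binn,\cinn$ (and their temporal counterparts $\BiNN,\ciNN$) for every row of every factor matrix, and (ii) solving the associated $R\times R$ linear systems. For (i), the key observation from (\ref{eq:als:Binn})--(\ref{eq:als:cinn}) is that each observed index tuple $(i_{1},\dots,i_{N})$ contributes to exactly one $\Binn$ for each mode $n\in\{1,\dots,N\}$; since $\sum_{i_{n}}|\Omegainn|=|\TOmega_{init}|$ for every fixed $n$, the nested loops perform exactly $N|\TOmega_{init}|$ such contributions. Each contribution costs $O(NR)$ to compute the length-$R$ Hadamard product $\bigHadamardBinn$ over the $N-1$ other modes, plus $O(R^{2})$ for the rank-one update of an $R\times R$ matrix (and $O(NR)$ more to accumulate into $\cinn$, reusing the same product), so step (i) is $O(|\TOmega_{init}|NR(N+R))$. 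For (ii), each row update---(\ref{eq:als:nontemporal}) for non-temporal factors and (\ref{eq:als:temporal}) for the temporal one---inverts an $R\times R$ matrix ($O(R^{3})$) and does one matrix--vector product ($O(R^{2})$); the extra additive terms and the diagonal shift $(\lambda_{1}+\lambda_{2})\eyeR$ in (\ref{eq:als:temporal}) add only $O(R)$ per temporal row. With $\sum_{n=1}^{N-1}I_{n}$ non-temporal rows and $I_{N}=\ti$ temporal rows, step (ii) is $O(R^{3}(\sum_{n=1}^{N-1}I_{n}+\ti))$.

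Finally I would verify that the remaining operations are dominated: column normalization is $O(R\sum_{n=1}^{N-1}I_{n})$; the reconstruction $\TXhat\leftarrow\ktensor$ and the fitness ratio need only the observed entries, each $O(NR)$, i.e. $O(|\TOmega_{init}|NR)$ in total; and $\TY^{*}=\TY-\TO$ is $O(|\TOmega_{init}|)$. Summing the two dominant terms gives the claimed $O\big(|\TOmega_{init}|NR(N+R)+R^{3}(\sum_{n=1}^{N-1}I_{n}+\ti)\big)$. I expect the only delicate point to be the accounting in step (i): one must argue carefully that the mode-by-mode, row-by-row traversal visits each observed entry exactly $N$ times---no more---and, crucially, that nothing forces us to materialize the full dense tensor of size $\prod_{n=1}^{N-1}I_{n}\times\ti$ (otherwise a $\prod_{n}I_{n}$ term would contaminate the bound); everything else is standard dense-linear-algebra counting.
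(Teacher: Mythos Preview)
Your proposal is correct and follows essentially the same argument as the paper: both analyze the per-row cost of \methodALS by decomposing it into the Hadamard-product computation ($O(|\Omegainn|NR)$), the accumulation of $\Binn$ ($O(|\Omegainn|R^{2})$) and $\cinn$ ($O(|\Omegainn|R)$), and the $R\times R$ inversion ($O(R^{3})$), then sum over all rows of all modes using $\sum_{i_{n}}|\Omegainn|=|\TOmega_{init}|$ and note that SoftThresholding is dominated. Your write-up is in fact more careful than the paper's, since you explicitly check that column normalization, the reconstruction $\TXhat$, and the fitness computation are all dominated, and you flag the ambiguity about whether ``iteration'' refers to the outer loop of Algorithm~\ref{alg:init} or the inner sweep of Algorithm~\ref{alg:als}; the paper's proof silently adopts the same per-sweep reading you do.
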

\begin{proof}
    % The most dominant procedure which affects the complexity of algorithm 1 is factorizeALS (Algorithm~\ref{alg:als}).
    In Algorithm~\ref{alg:als},
    updating each row $\uinn$ of the factor matrices by (\ref{eq:als:nontemporal}) and (\ref{eq:als:temporal}) takes $O(|\Omegainn|R(N+R)+R^{3})$ time.
    It is composed of $O(|\Omegainn|RN)$ time to compute $\bigHadamardBinn$ for all the entries in $\Omegainn$, $O(|\Omegainn|R^{2})$ time to compute $\Binn$, $O(|\Omegainn|R)$ time to compute $\cinn$, and $O(R^{3})$ time to invert $\Binn$.
    We update all factor matrices one by one and row by row, and thus the overall complexity of Algorithm~\ref{alg:als}, which is the dominant part in each iteration of Algorithm~\ref{alg:init},
    is $O\big(|\TOmega_{init}|NR(N+R)+R^{3}(\sum_{n=1}^{N-1}I_{n}+\ti)\big)$.
\end{proof}

\begin{lemma}[Time Complexity of Dynamic Updates in \method] \label{lemma:complexity:dynamic}
    The time complexity of the iteration of
    Algorithm~\ref{alg:dynamic} at time $t$ is $O\big(|\TOmegat|NR\big)$.
\end{lemma}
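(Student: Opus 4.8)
The plan is to walk line by line through Algorithm~\ref{alg:dynamic} and charge the cost of each operation to $|\TOmegat|$, $N$, and $R$, using two structural facts. First, the residual subtensor $\TRt=\TOmegat\hadamard(\TYt-\TOt-\TYthat)$ is supported only on the observed entries, so it has at most $|\TOmegat|$ nonzeros, and every quantity derived from it can be evaluated by iterating over those nonzeros rather than over the full index set of size $\prod_{n=1}^{N-1}I_{n}$. Second, the Holt--Winters components $\level_{t-1}$, $\trend_{t-1}$, $\seasonal_{t-m}$ and the temporal vectors $\uNhat_{t|t-1}$, $\uN_{t}$ all have length $R$, so the forecast in (\ref{eq:forecast:uNhat}) and the component updates in (\ref{eq:update:hwfactors}), being elementwise or diagonal-matrix operations, each cost $O(R)$.

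Next I would account for the per-observed-entry work. For each observed index $(i_{1},\dots,i_{N-1})$, compute once the length-$R$ vector $\pi_{i_{1}\dots i_{N-1}}=\bigHadamard_{n=1}^{N-1}\uvec^{(n)}_{i_{n}}[t-1]$ in $O(NR)$ time; then the corresponding entry of $\TYthat$ from (\ref{eq:forecast:Ythat}) equals $\pi_{i_{1}\dots i_{N-1}}^{\top}\uNhat_{t|t-1}$, an $O(R)$ dot product. Given $\TYthat$ on the observed support, estimating the outlier entries via (\ref{eq:dynamic:outliers}), forming the residual $\TRt$, and updating the error-scale entries via (\ref{eq:update:bSigma}) each touch a single scalar per entry and cost $O(1)$. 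Summed over the $|\TOmegat|$ observed entries this is $O(|\TOmegat|NR)$, dominated by the Hadamard-product precomputation.

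The gradient updates are the crux. For the temporal vector, (\ref{eq:gradient:temp}) requires $\bigl(\bigKhatrirao_{n=1}^{N-1}\Umat^{(n)}_{t-1}\bigr)^{\top}\mathrm{vec}(\TRt)=\sum_{(i_{1},\dots,i_{N-1})}r_{i_{1}\dots i_{N-1}}\,\pi_{i_{1}\dots i_{N-1}}$, which reuses the already-computed $\pi$'s for a cost of $O(|\TOmegat|R)$, plus $O(R)$ for the smoothness terms $\lambda_{1}\uN_{t-1}$, $\lambda_{2}\uN_{t-m}$, $(\lambda_{1}+\lambda_{2})\uNhat_{t|t-1}$. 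For the $N-1$ non-temporal matrices, (\ref{eq:gradient:nontemp}) needs, for each mode $n$, the product $\mathbf{R}_{(n)}\bigl(\bigKhatrirao_{l\neq n}\Umat^{(l)}_{t-1}\bigr)\mathrm{diag}(\uNhat_{t|t-1})$, whose $i_{n}$-th row accumulates, over the observed entries sharing that $i_{n}$, the vector $r_{i_{1}\dots i_{N-1}}\bigl(\bigHadamard_{l\neq n}\uvec^{(l)}_{i_{l}}[t-1]\bigr)\hadamard\uNhat_{t|t-1}$. Here lies the main obstacle: computing $\bigHadamard_{l\neq n}\uvec^{(l)}_{i_{l}}[t-1]$ from scratch for each of the $N-1$ modes costs $O(NR)$ per (mode, entry) pair, i.e.\ $O(|\TOmegat|N^{2}R)$ overall, which would violate the claimed bound. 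The resolution is to obtain all $N-1$ leave-one-out Hadamard products at a fixed entry together in $O(NR)$ total via prefix and suffix Hadamard products of $\uvec^{(1)}_{i_{1}}[t-1],\dots,\uvec^{(N-1)}_{i_{N-1}}[t-1]$ (this also avoids dividing out a possibly-zero coordinate); each leave-one-out product then costs $O(R)$, so all non-temporal gradients together cost $O(|\TOmegat|NR)$.

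Finally I would sum up: every line of Algorithm~\ref{alg:dynamic} costs $O(|\TOmegat|NR)$, $O(|\TOmegat|)$, or $O(R)$; assuming $|\TOmegat|\ge 1$ (which we may do without loss of generality) we have $R=O(|\TOmegat|NR)$, so the per-step total is $O(|\TOmegat|NR)$. One caveat I would state explicitly: the last line $\TXthat=\ktensort$ is used only to read off imputed values, so restricted to the observed support it reuses the $\pi$'s and costs $O(|\TOmegat|NR)$, while materializing the full dense slab would cost $O\bigl(NR\prod_{n=1}^{N-1}I_{n}\bigr)$ --- which coincides with $O(|\TOmegat|NR)$ whenever a constant fraction of entries is observed, or can be avoided by evaluating missing entries on demand at $O(NR)$ each (and similarly $\bSigma_{t}$ and $\TOt$ are stored only on the observed support). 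Apart from this bookkeeping remark, the argument is routine; the single place that genuinely needs care is the prefix/suffix Hadamard trick for (\ref{eq:gradient:nontemp}).
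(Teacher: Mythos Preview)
Your proposal is correct and follows the same line-by-line cost-accounting strategy as the paper's proof, exploiting the sparsity of $\TRt$ to restrict all work to the $|\TOmegat|$ observed entries. The one substantive addition you make is the prefix/suffix Hadamard-product trick to obtain all $N-1$ leave-one-out products at each observed entry in $O(NR)$ rather than $O(N^{2}R)$; the paper's proof simply asserts that updating all of $\ntUnt$ costs $O(|\TOmegat|NR)$ without explaining how the extra factor of $N$ from looping over modes is avoided, so your argument is actually more careful here. You also flag the caveat about materializing $\TXthat$ on the full slab, which the paper's proof omits entirely. In short: same approach, but your version fills in a step the paper glosses over and adds a useful bookkeeping remark.
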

\begin{proof}
    % For the iteration at time $t$, predicting $\TYthat$ takes $O(R\prod_{n=1}^{N-1}I_{n})$ time. 
    For the iteration at time $t$, predicting $\TYthat$ and updating $\bSigma_{t}$ and $\TOt$ only for the observed entries at time $t$ takes $O(|\TOmegat|NR)$ time.
    % Updating $\ntUnt$ takes $O(R\prod_{l=1,l\neq n}^{N-1}I_{l}+RI_{n}|\TOmegat|)$ time, which is composed of $O(R\prod_{l=1,l\neq n}^{N-1}I_{l})$ time to compute $\bigKhatrirao_{l=1,l\neq n}^{N-1}\Umat^{(l)}_{t-1}$, $O(RI_{n}|\TOmegat|)$ time to compute the product of $\mathbf{R}_{(n)}$, and $\bigKhatrirao_{l=1,l\neq n}^{N-1}\Umat^{(l)}_{t-1}$.
    Updating $\ntUnt$ takes $O(|\TOmegat|NR)$ time if we compute only the entries of  $\bigKhatrirao_{l=1,l\neq n}^{N-1}\Umat^{(l)}_{t-1}$ multiplied with the non-zeros in $\mathbf{R}_{(n)}$, whose number is $|\TOmegat|$, when computing  $\mathbf{R}_{(n)}\bigKhatrirao_{l=1,l\neq n}^{N-1}\Umat^{(l)}_{t-1}$.
    Updating $\uNt$ also takes $O\big(|\TOmegat|NR\big)$ time if we compute only the entries of  $\bigKhatrirao_{n=1}^{N-1}\Umat^{(n)}_{t-1}$ multiplied with the non-zeros in $\textnormal{vec}(\TRt)$, whose number is $|\TOmegat|$, when computing  $(\bigKhatrirao_{n=1}^{N-1}\Umat^{(n)}_{t-1})^{\top}\cdot \textnormal{vec}(\TRt)$.
    Thus, the overall time complexity is $O\big(|\TOmegat|NR\big)$.
\end{proof}
Since the initialization step is executed only once at the start, after initialization, \method takes time proportional to the number of observed entries in the received subtensor (i.e., $|\TOmegat|$), as shown in Lemma~\ref{lemma:complexity:dynamic}.

\section{Experiments}
\label{sec:exp}
% \begin{figure*}[!t]
%   \vspace{-10mm}
% \end{figure*}

In this section, we review our experiments to answer the following questions:
\begin{itemize}
	\setlength{\itemindent}{-.1in}
	\item \textbf{Q1. Initialization Accuracy}: How accurately does \methodALS capture seasonal patterns in time-series? %compared to the vanilla ALS~\cite{zhou2008large}?
	\item \textbf{Q2. Imputation Accuracy}: How accurately does \method estimate missing entries compared to its best competitors?
	\item \textbf{Q3. Speed}: How fast is \method?
	\item \textbf{Q4. Forecasting Accuracy}: How precisely does \method predict future entries?
	\item \textbf{Q5. Scalability}: How does \method scale with regard to the size of the input tensor?
\end{itemize}

\begin{figure*}[!t]
	\centering
	\vspace{-4mm}
	\includegraphics[width=0.60\linewidth]{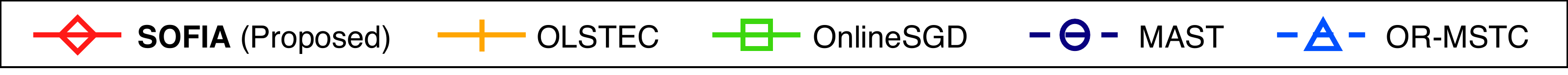} \\ \vspace{1mm}
	\includegraphics[width=0.23\linewidth]{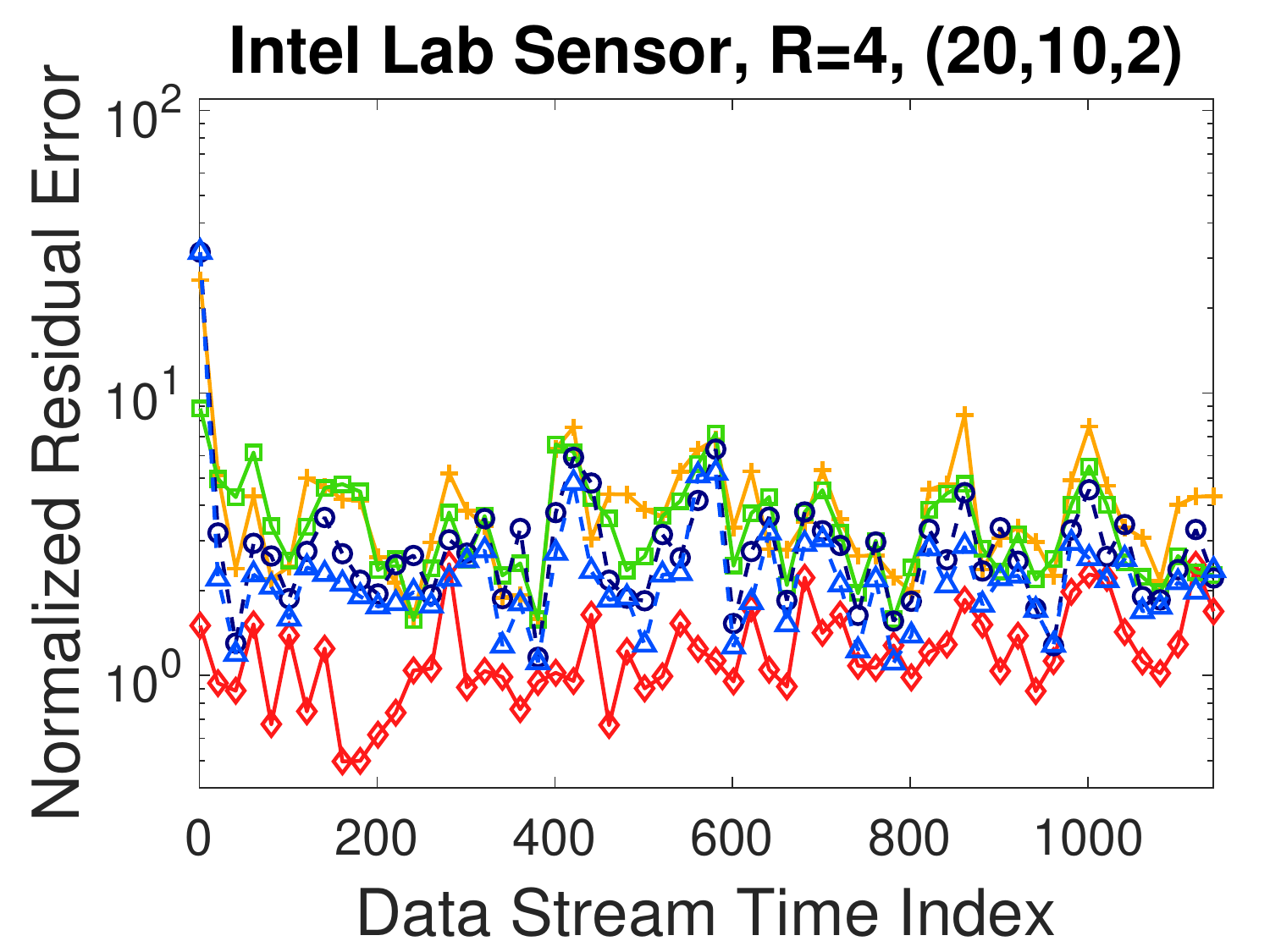}
	\includegraphics[width=0.23\linewidth]{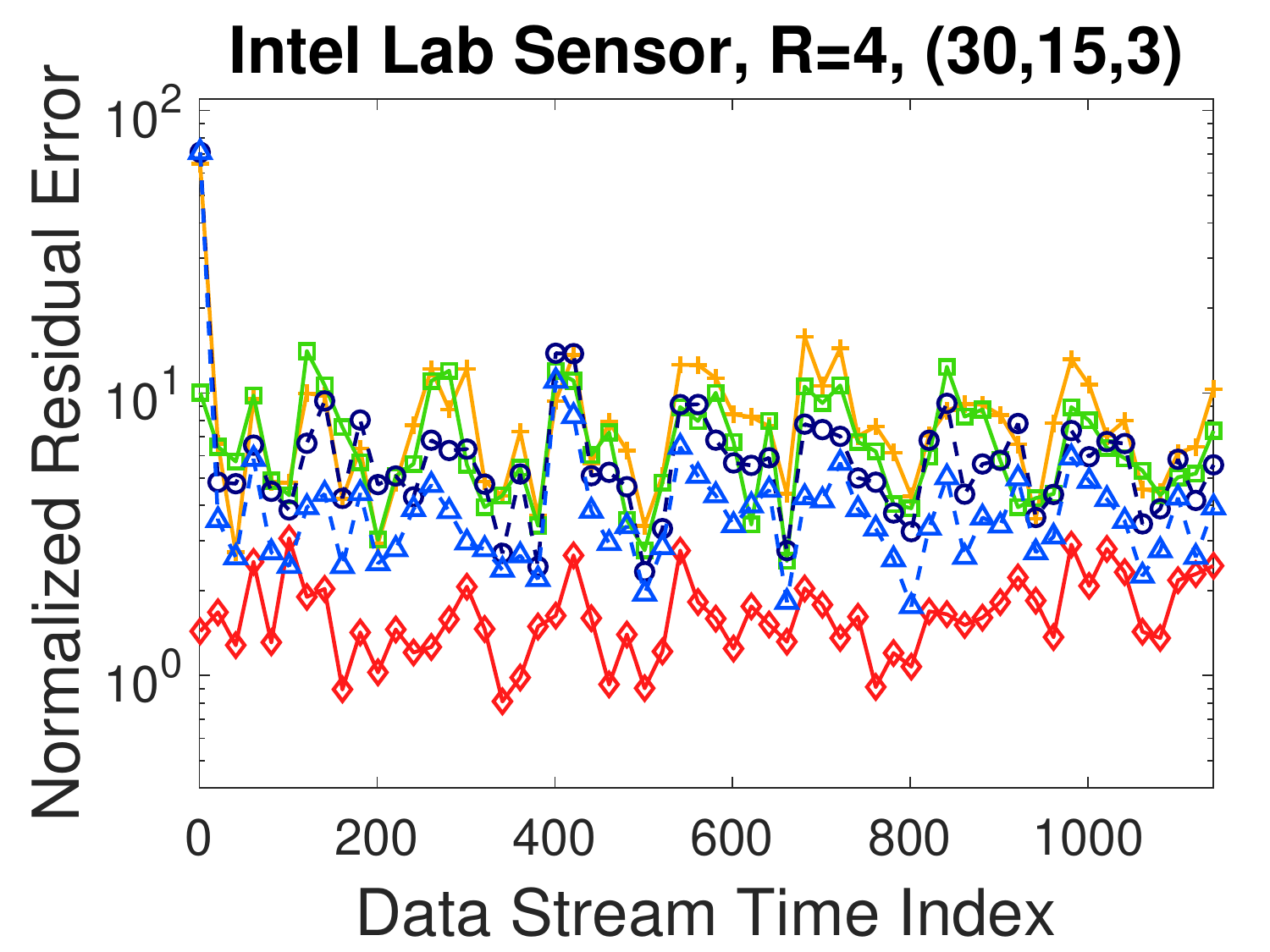}
	\includegraphics[width=0.23\linewidth]{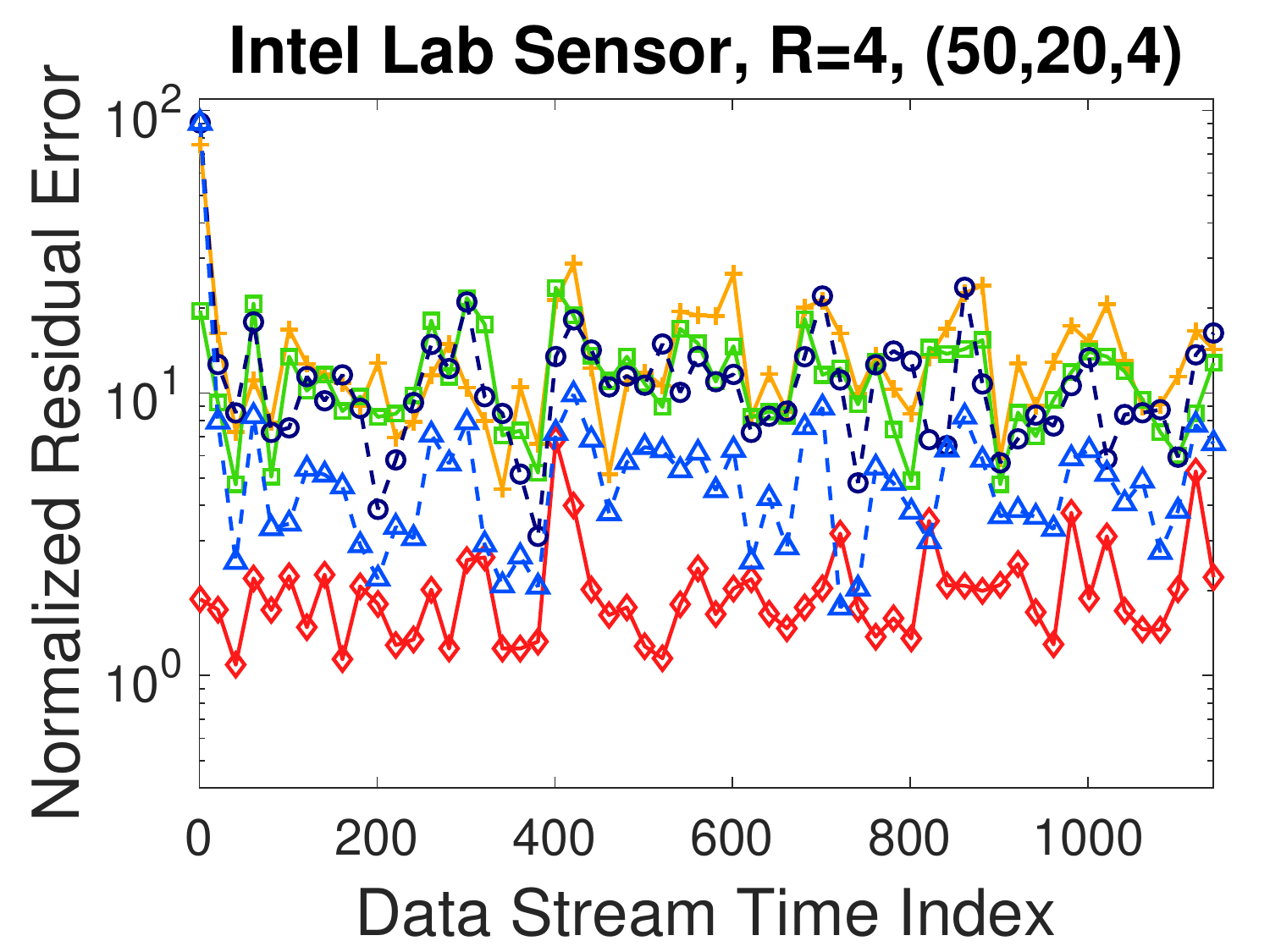}
	\includegraphics[width=0.23\linewidth]{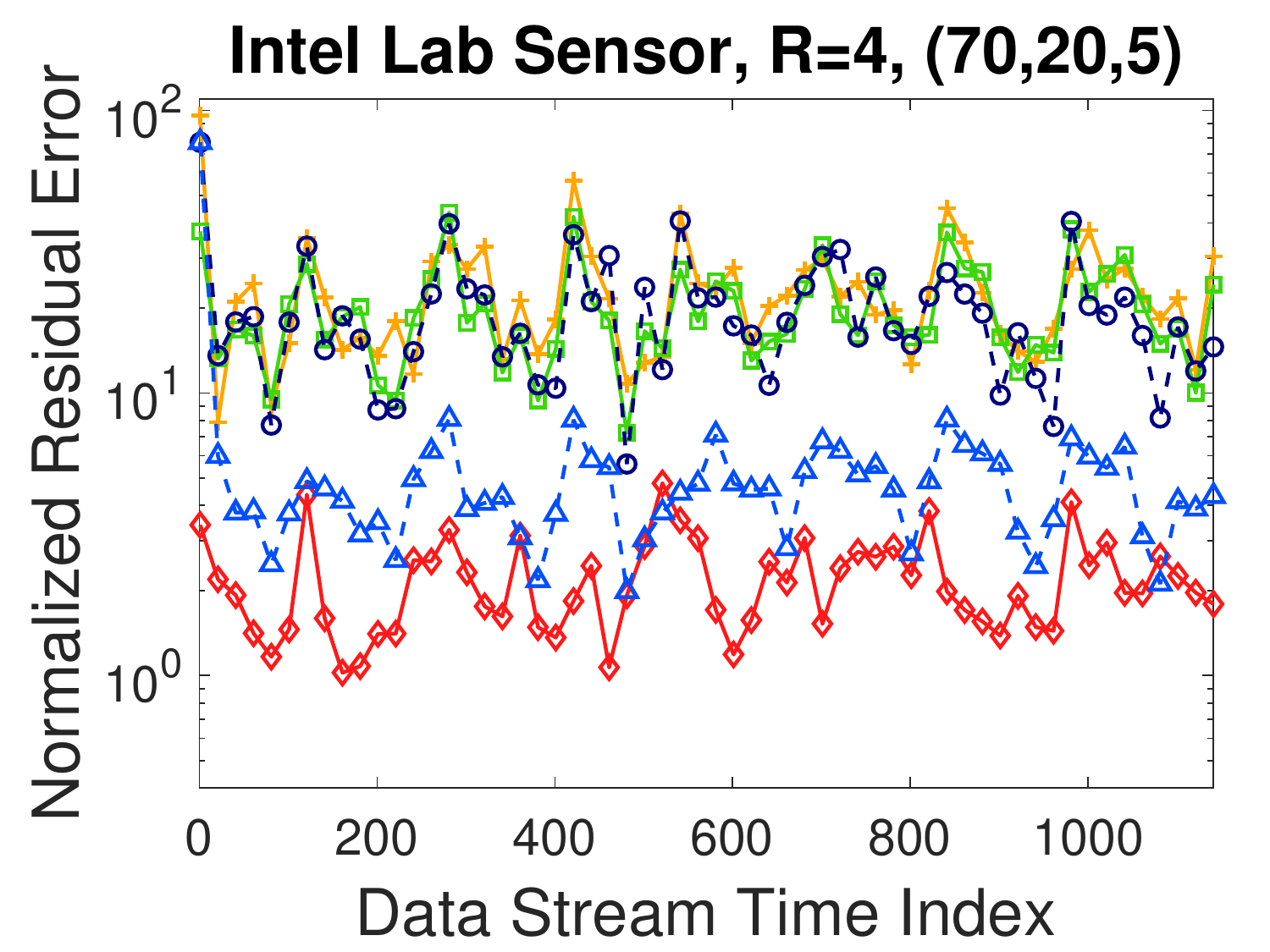} \\ \vspace{1mm}
	\includegraphics[width=0.23\linewidth]{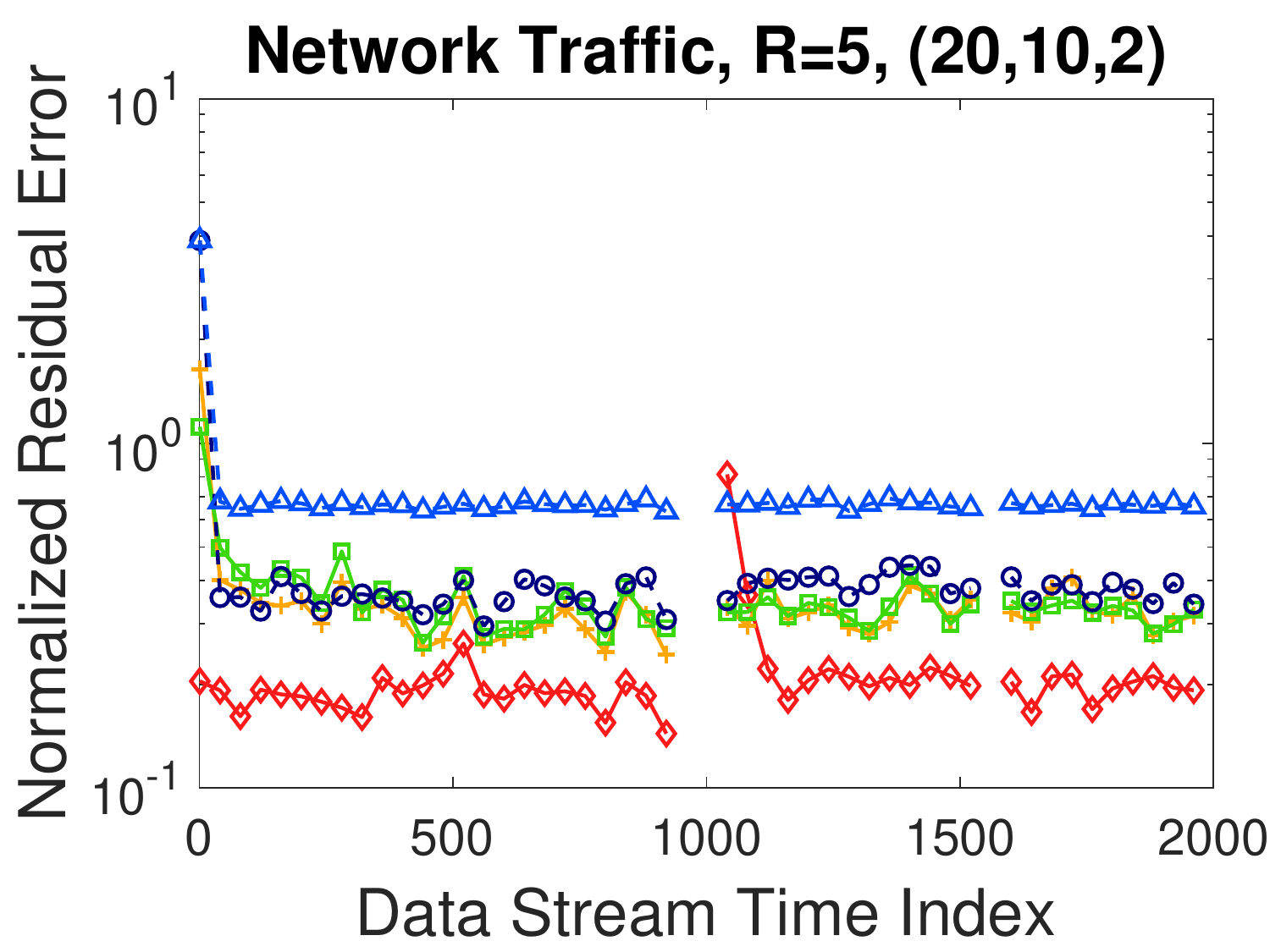}
	\includegraphics[width=0.23\linewidth]{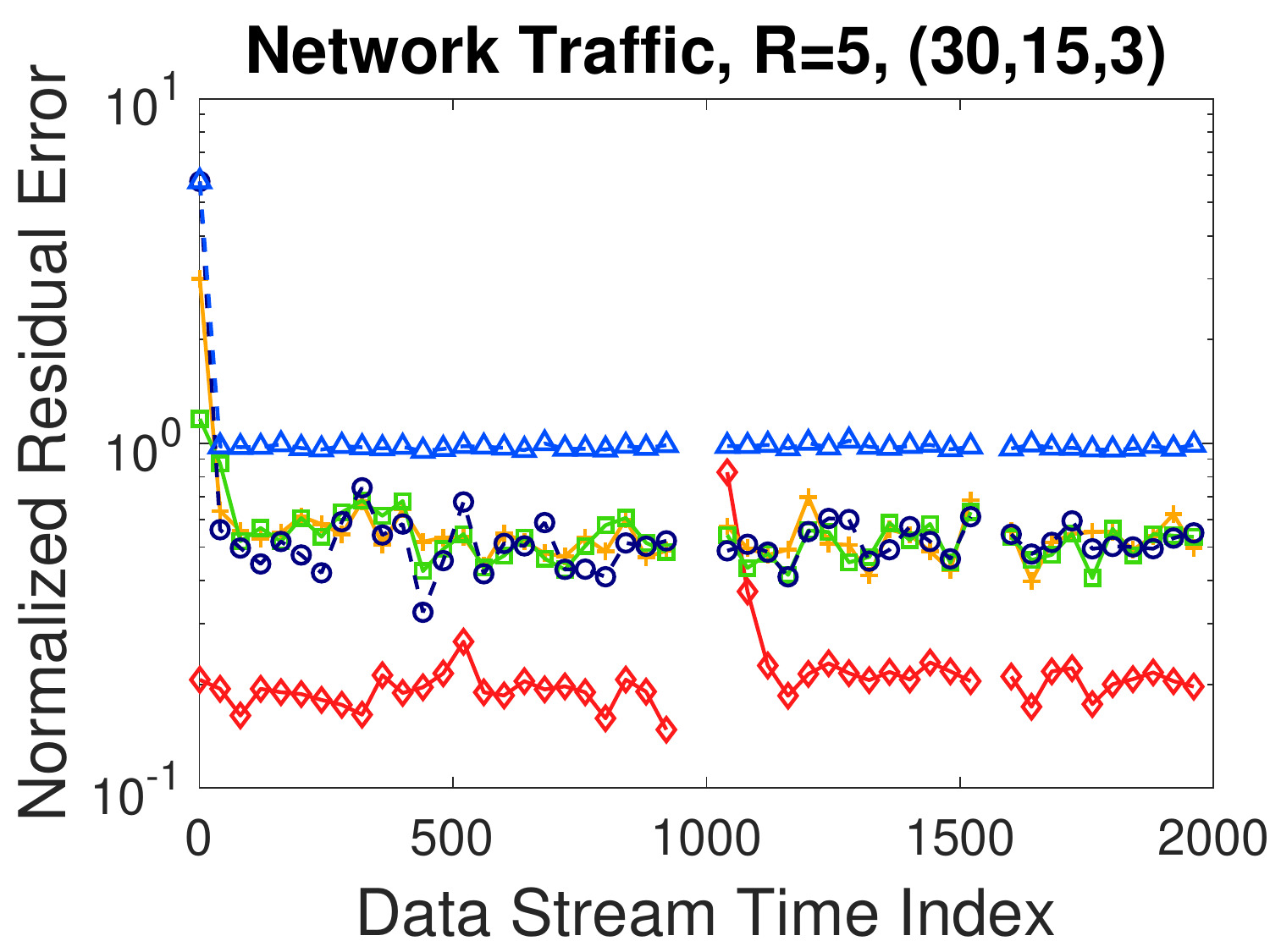}
	\includegraphics[width=0.23\linewidth]{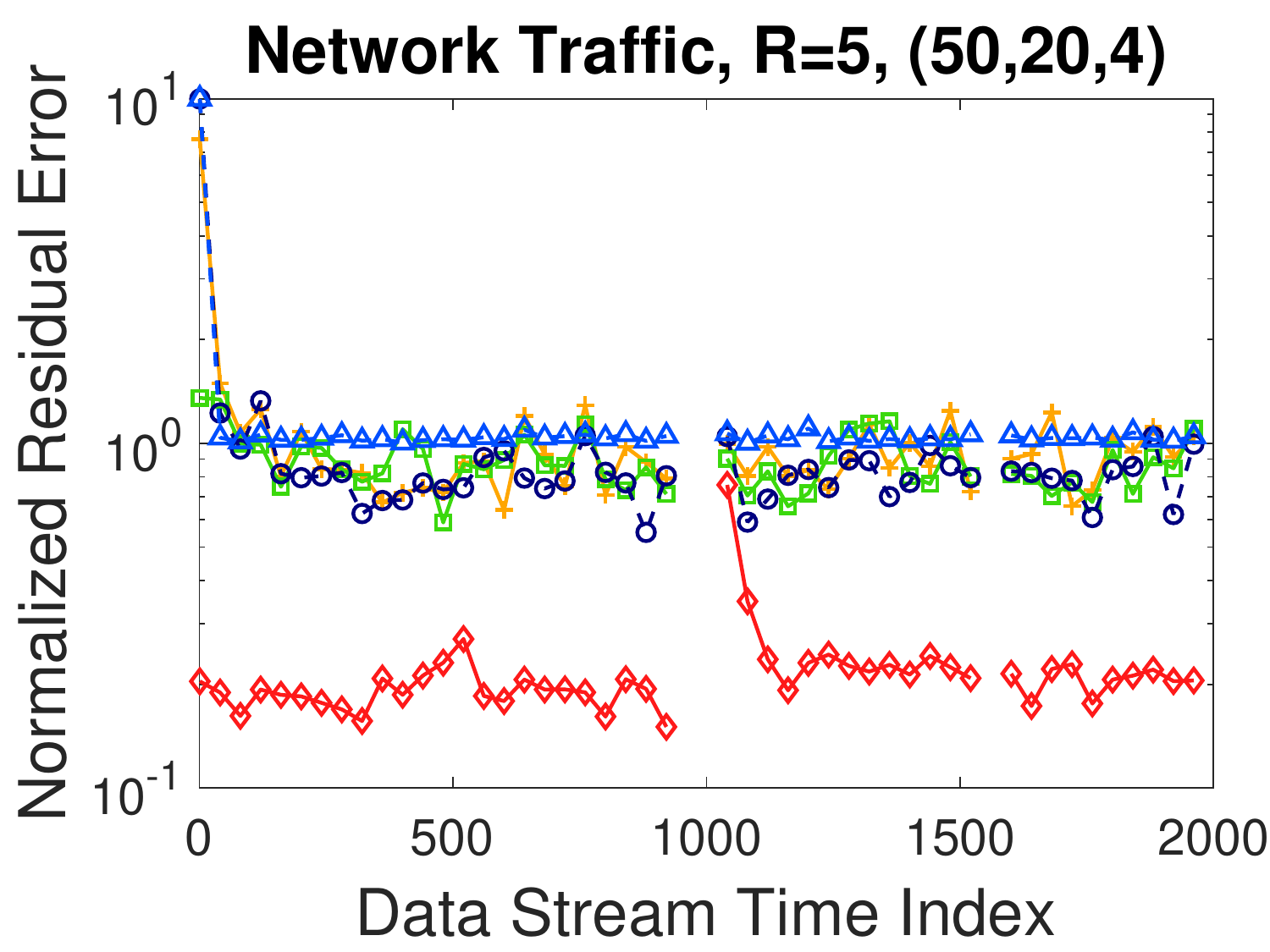}
	\includegraphics[width=0.23\linewidth]{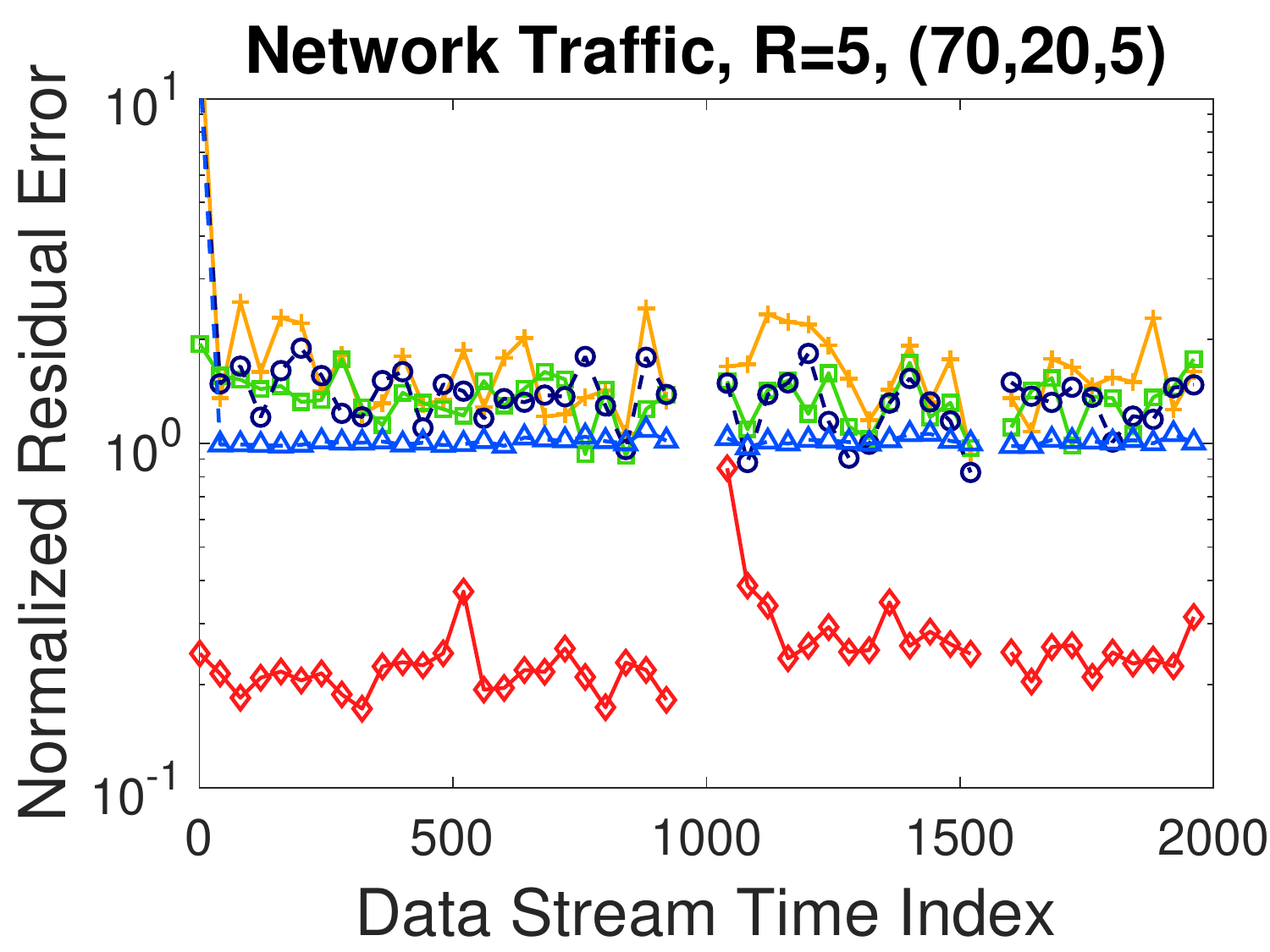} \\ \vspace{1mm}
	\includegraphics[width=0.23\linewidth]{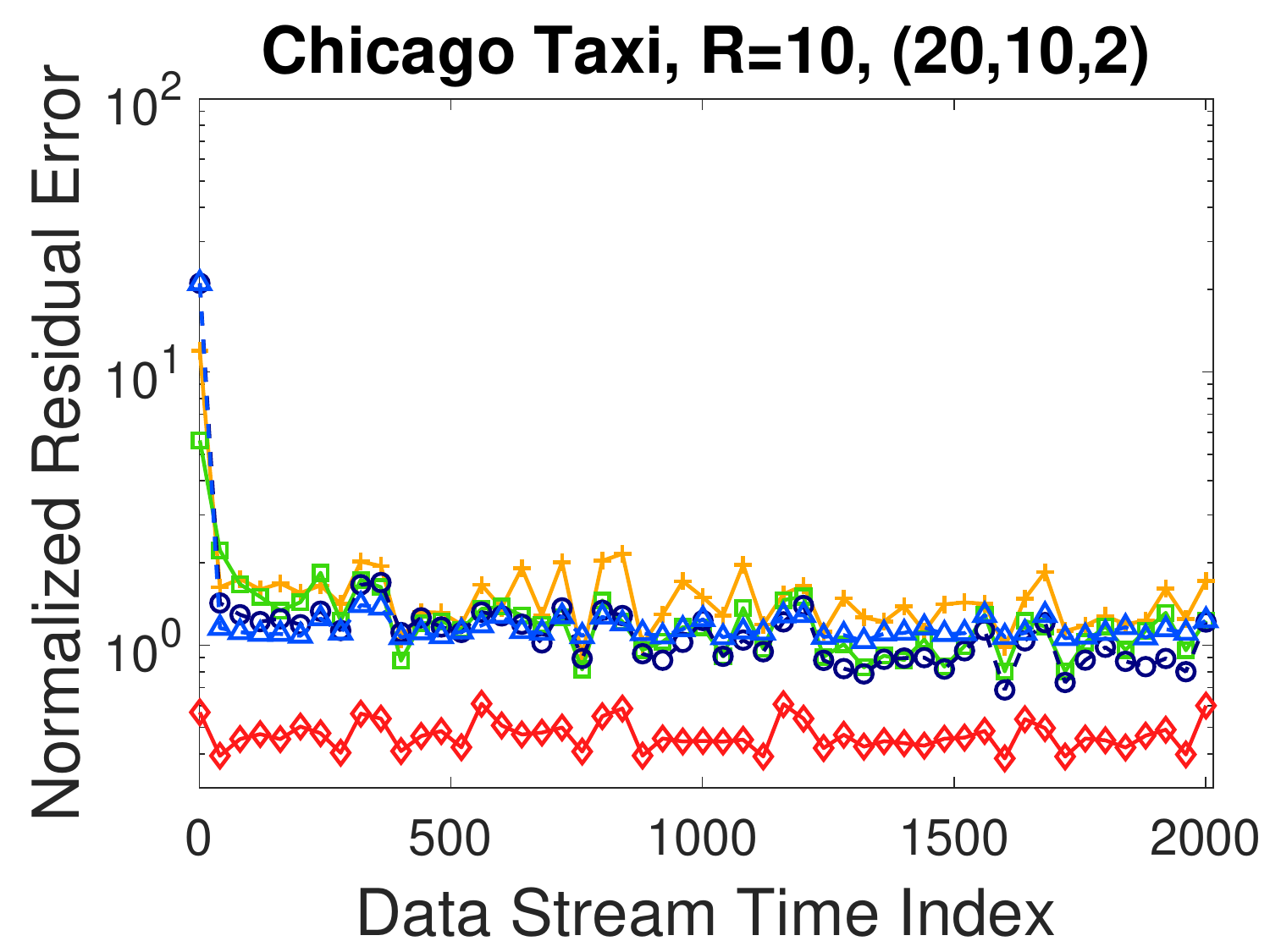}
	\includegraphics[width=0.23\linewidth]{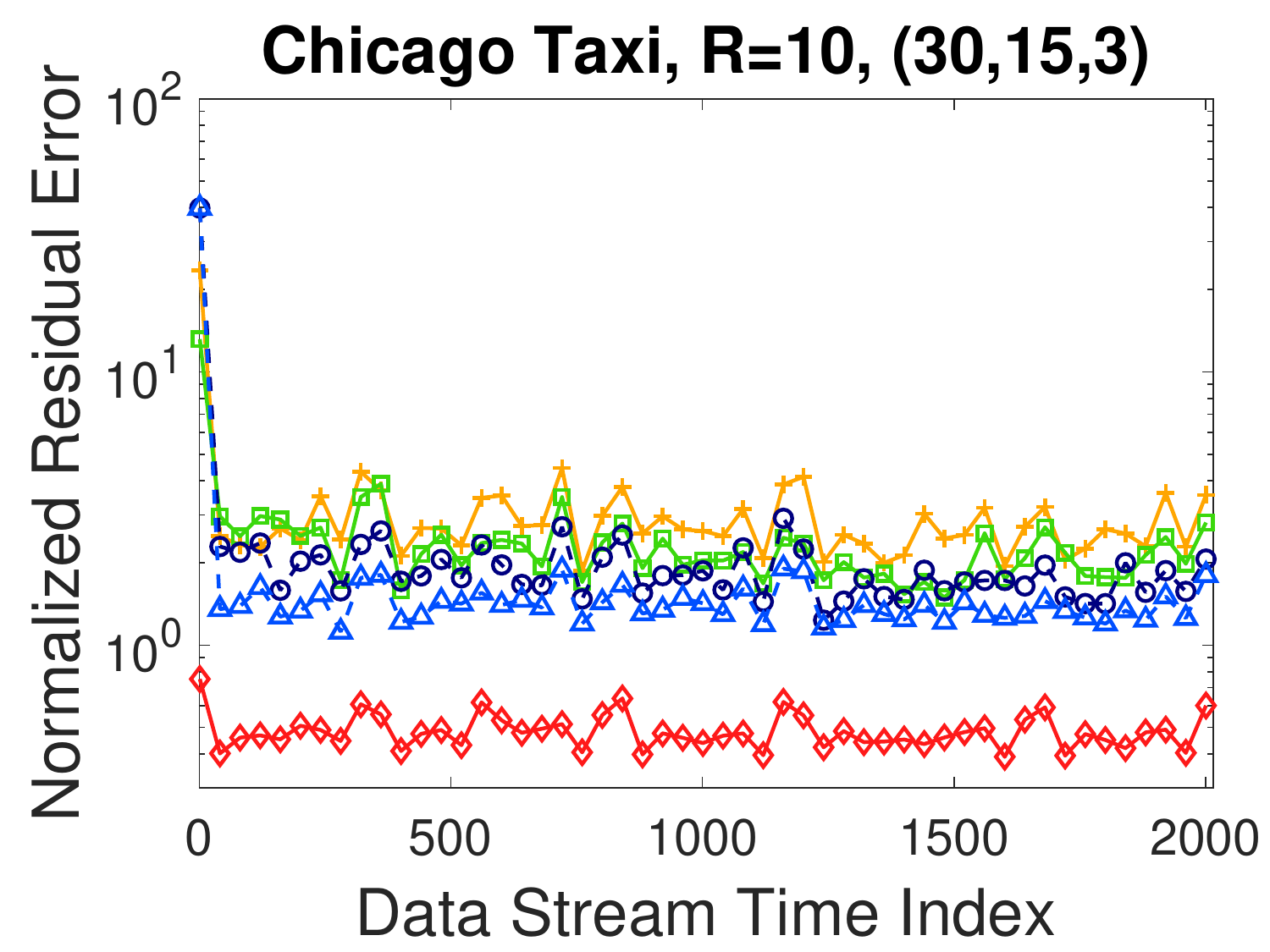}
	\includegraphics[width=0.23\linewidth]{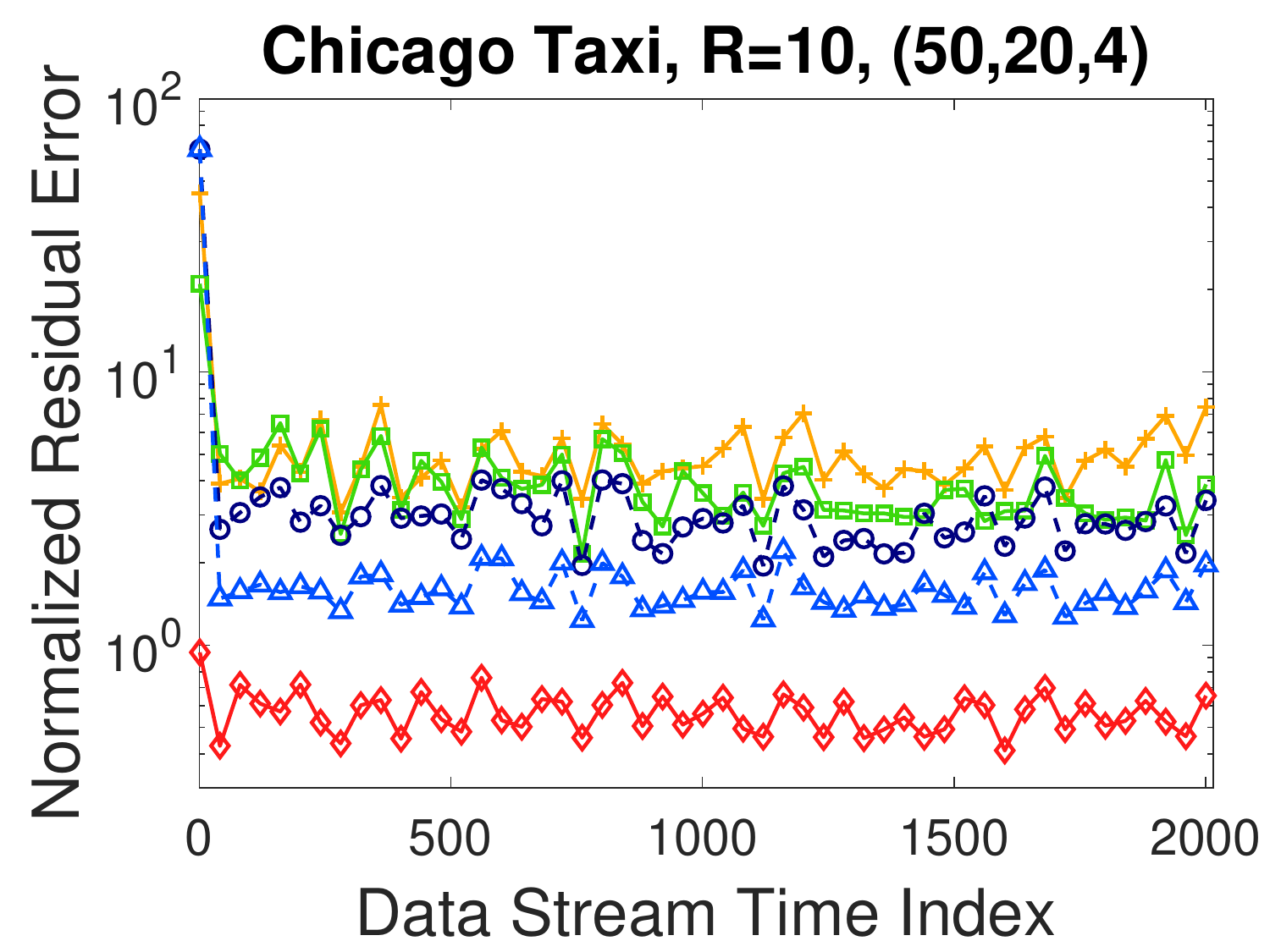}
	\includegraphics[width=0.23\linewidth]{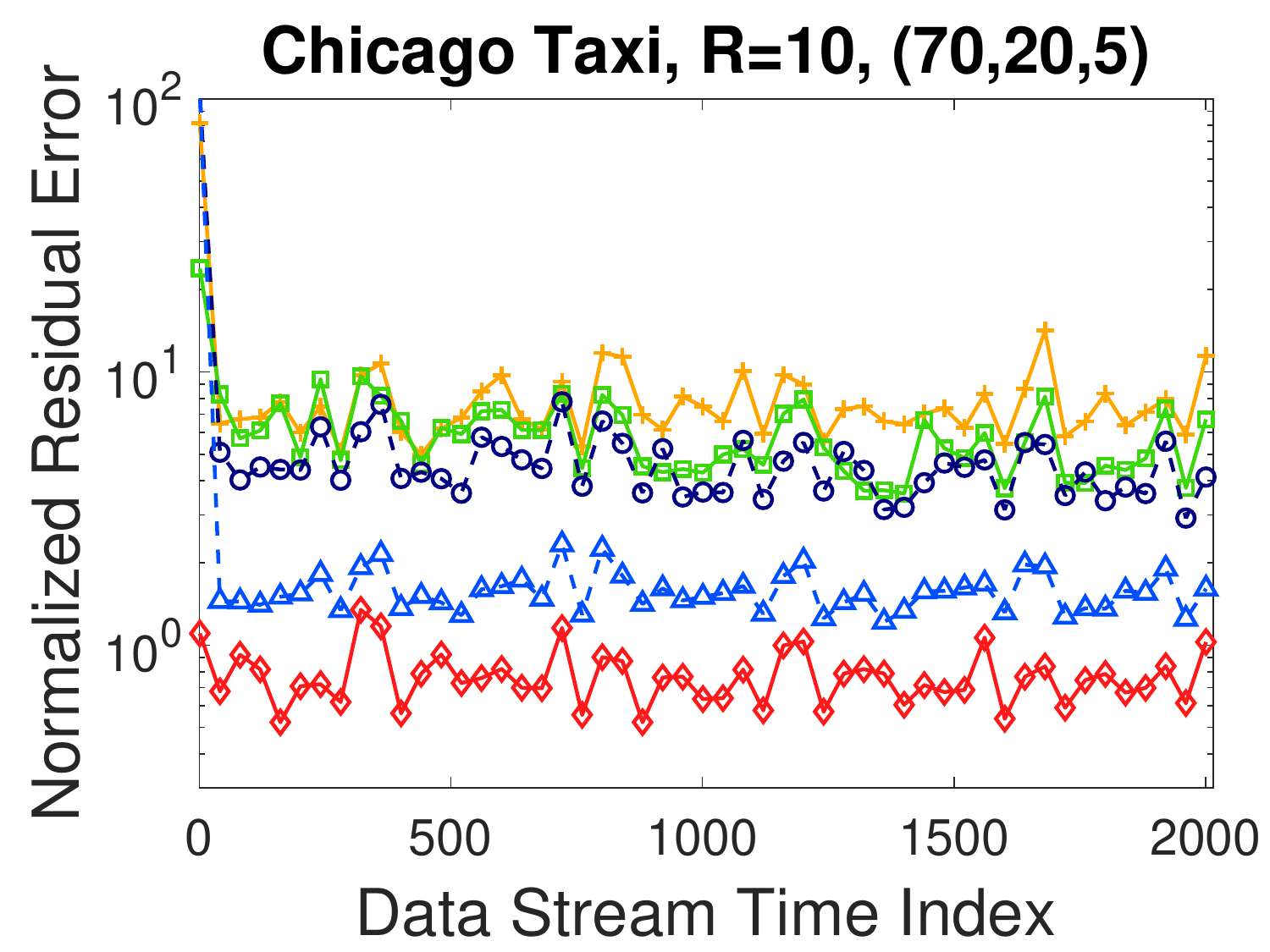} \\ \vspace{1mm}
	\includegraphics[width=0.23\linewidth]{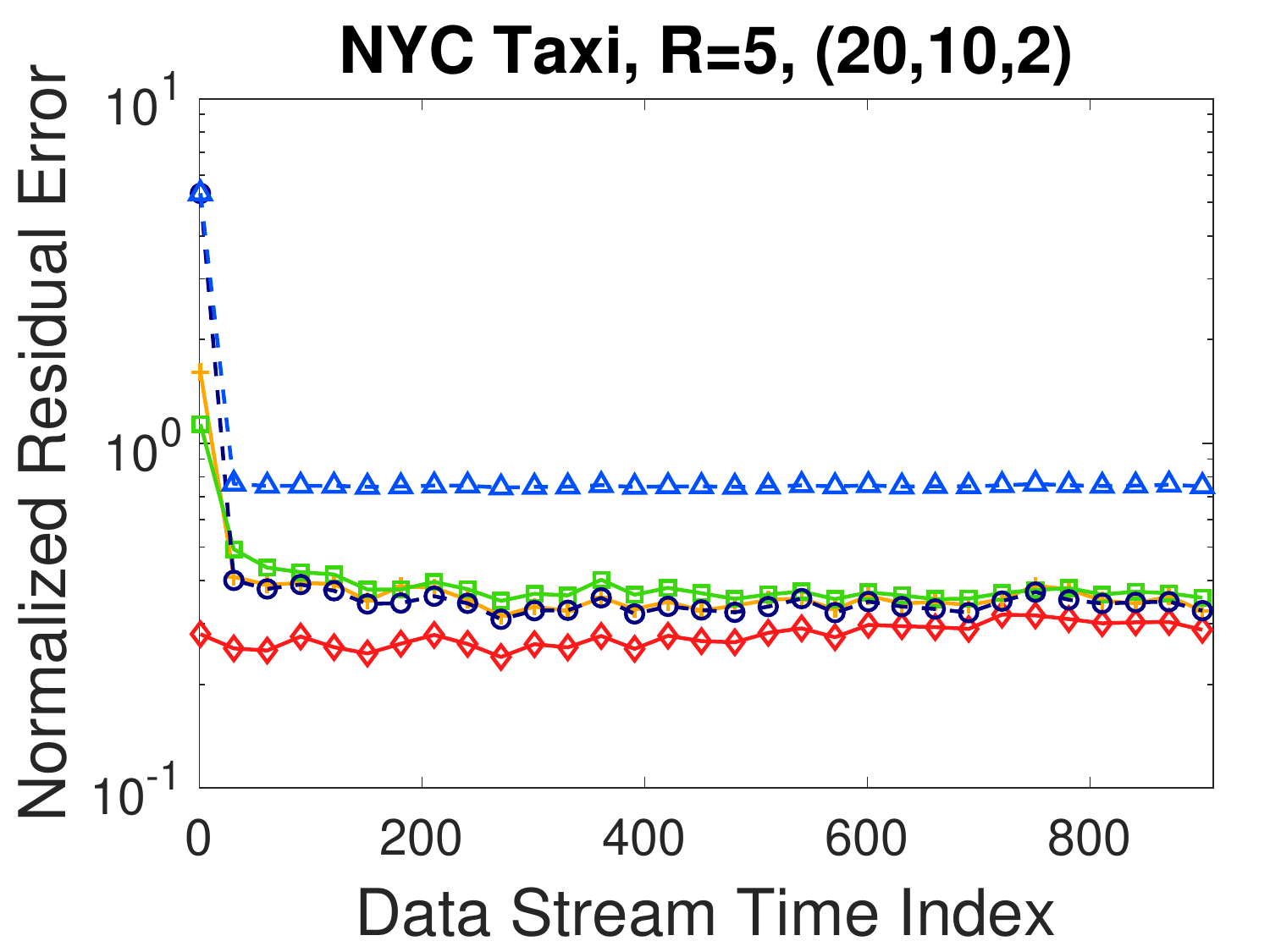}
	\includegraphics[width=0.23\linewidth]{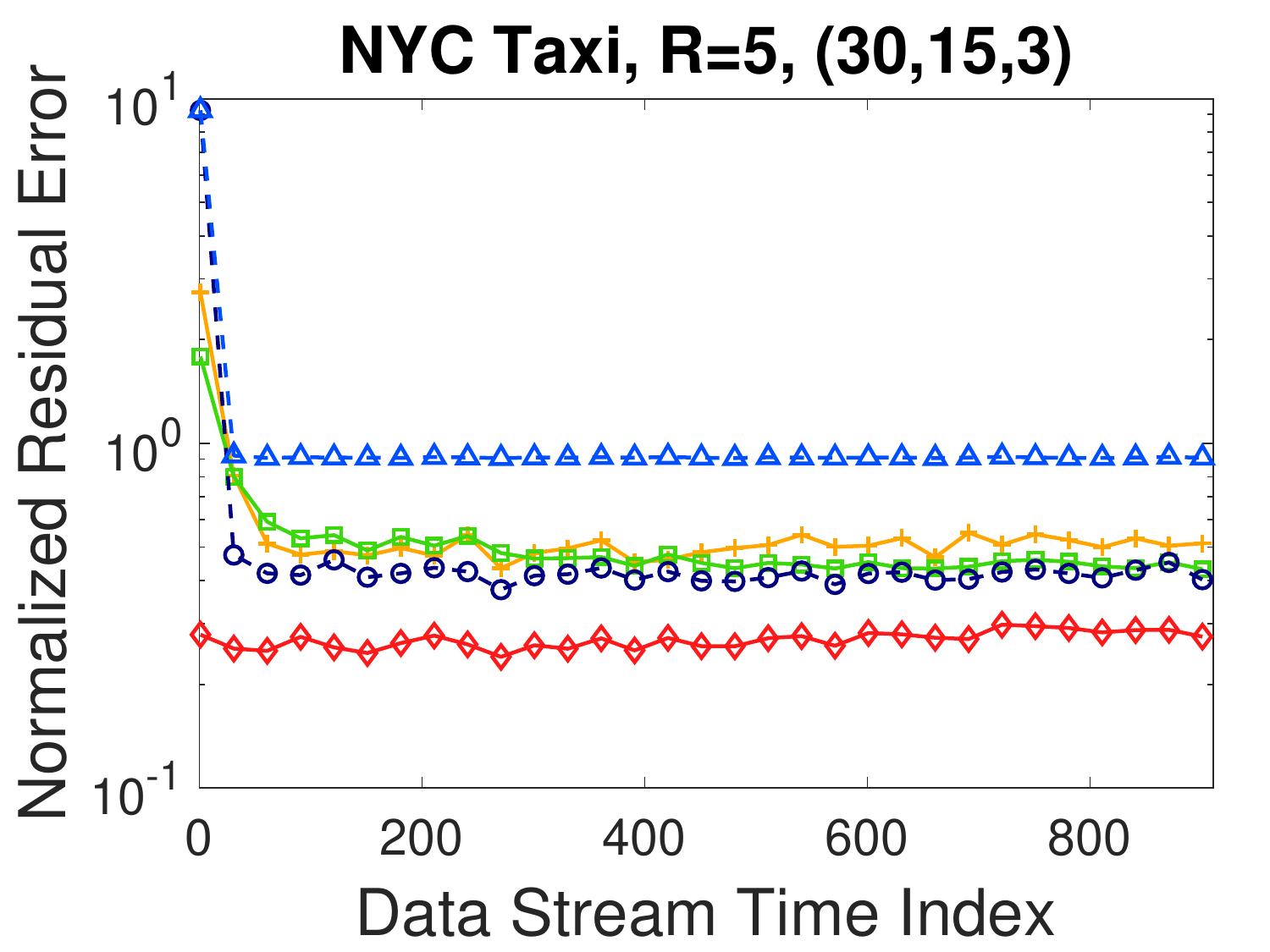}
	\includegraphics[width=0.23\linewidth]{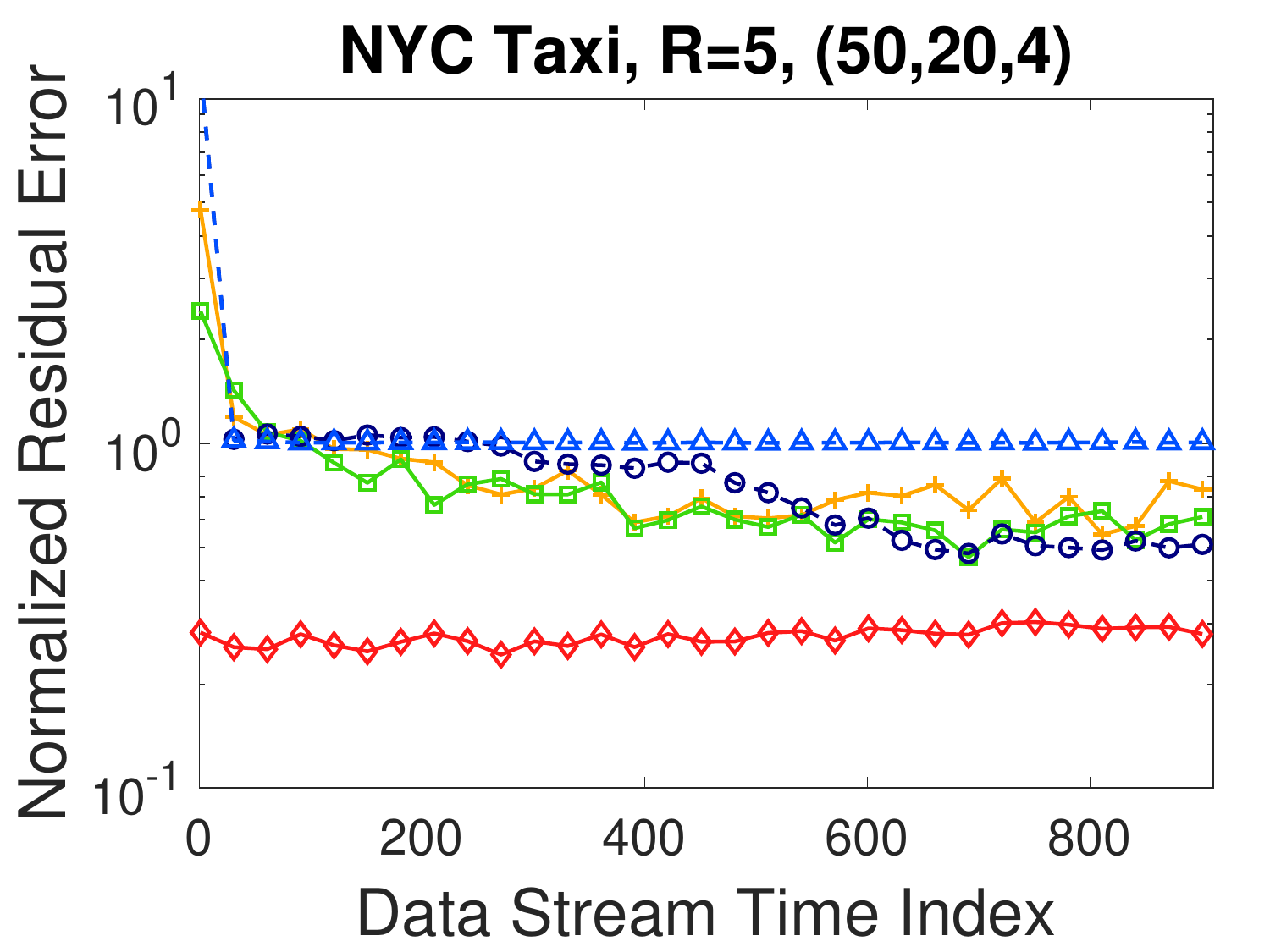}
	\includegraphics[width=0.23\linewidth]{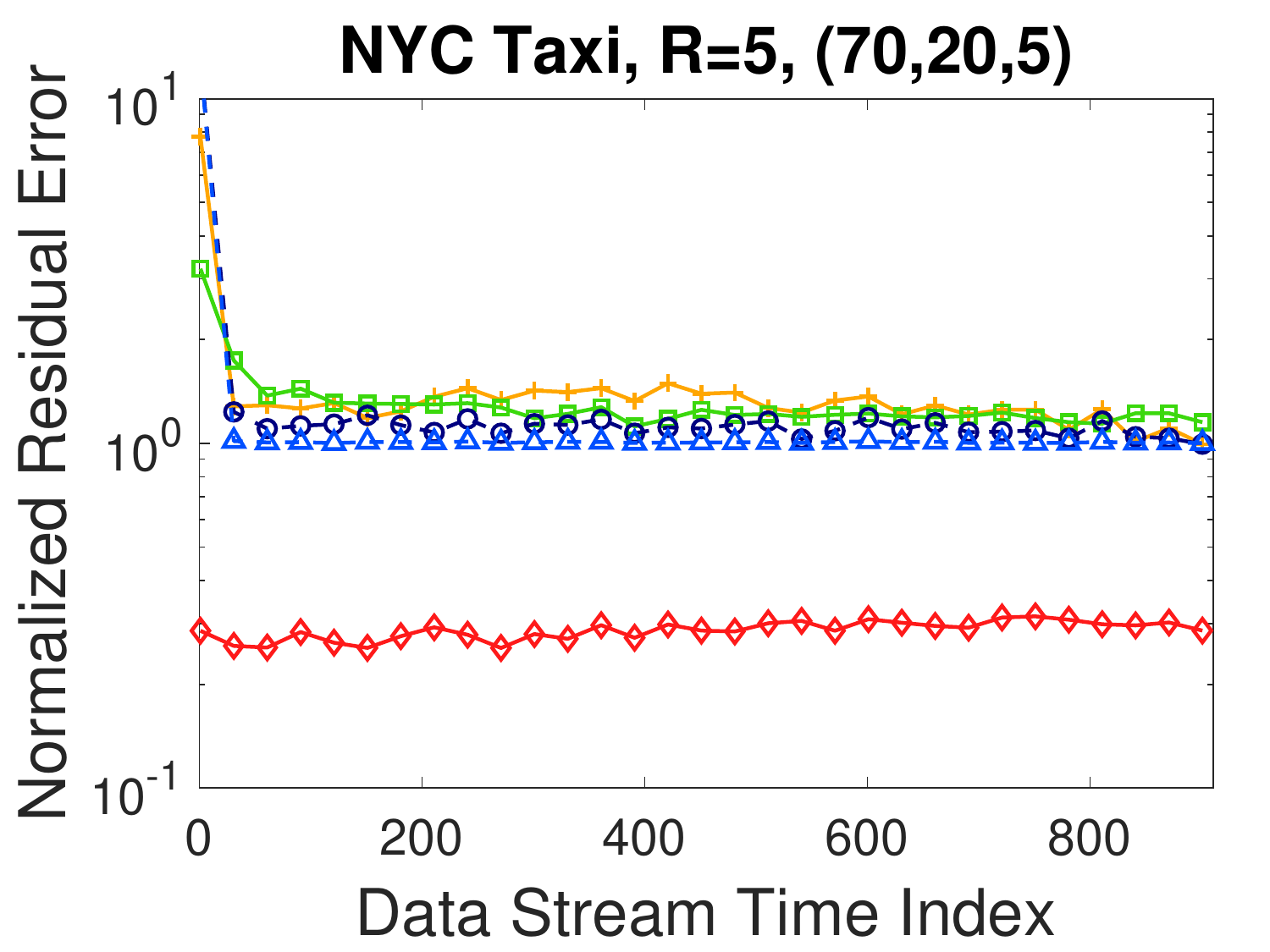}
	\\
	\vspace{-2mm}
	\caption[]{\label{fig:exp:imputation:nre}
		The normalized residual error under 4 experimental settings from the mildest (leftmost) to the harshest (rightmost).
		\textbf{\method was the most accurate} in all the tensor streams under all the experimental settings.
	}
\end{figure*}

\subsection{Experiment Specifications}\label{sec:exp:spec}
% \subsubsection{Machine and Implementation}
\noindent\underline{\textbf{\smash{Machine and Implementation}}}:
We implemented our algorithm and all competitors in Matlab; all the experiments were conducted on a PC with a 3.70GHz Intel i5-9600K CPU and 64GB memory.
% Our code and datasets are publicly available at \address.

\begin{table}[t!]
	\centering
	\caption{Summary of datasets.}
	\label{tab:datasets}
	\scalebox{0.95}{
		\begin{tabular}{llll}
			\toprule
			{\bf Dataset}          & {\bf Dimension}             & {\bf Period} & {\bf Granularity in Time} \\
			\midrule
			% {\bf Synthetic} & $30\times30\times30\times500^{*}$ & $30$ & - \\
			{\bf Intel Lab Sensor} & $54\times4\times1152^{*}$   & $144$        & every $10$ minutes        \\
			{\bf Network Traffic}  & $23\times23\times2000^{*}$  & $168$        & hourly                    \\
			{\bf Chicago Taxi}     & $77\times77\times2016^{*}$  & $168$        & hourly                    \\
			{\bf NYC Taxi}         & $265\times265\times904^{*}$ & $7$          & daily                     \\
			\bottomrule
			\multicolumn{4}{l}{The time mode is marked with an asterisk (*).} %\\
		\end{tabular}
	}
\end{table}

% \subsubsection{Datasets}
\noindent\underline{\textbf{\smash{Datasets}}}:
We conducted experiments on 4 real-world datasets that are summarized in Table~\ref{tab:datasets}.
\begin{itemize}[leftmargin=*]
	% \item \textbf{NYC-Taxi}\footnote{\url{https://www1.nyc.gov/site/tlc/about/tlc-trip-record-data.page}}: The yellow taxi trip records in New York City. It includes fields capturing pick-up and drop-off locations, pick-up and drop-off times, and so on. We create a tensor with \textit{(source, destimation, pick-up date)} tuples, and take $\log_{2}(x+1)$ for each entry $x$ to adjust for scaling bias in the amount of traffic.
	\item \textbf{Intel Lab Sensor}~\cite{madden2003intel}: The 4 indoor environmental sensor data collected from 54 positions in the Intel Berkeley Research Lab. We made a tensor with \textit{(position, sensor, time)} triples with a 10-minute interval and standardized the observations from each sensor.
	      %   The granuality of the  with a 10-minute interval and
	\item \textbf{Network Traffic}~\cite{uhlig2006providing}: The network traffic records between 23 routers. We made a tensor with \textit{(source, destination, time)} triples with an 1-hour interval and used $\log_{2}(x+1)$ for each entry $x$ to adjust for scaling bias in the amount of traffic.
	\item \textbf{Chicago Taxi}\footnote{\url{https://data.cityofchicago.org/Transportation/Taxi-Trips/wrvz-psew}}: The taxi trip data in Chicago. We created a tensor with \textit{(source, destination, pick-up time)} triples with an 1-hour interval and used $\log_{2}(x+1)$ for each entry $x$.
	\item \textbf{NYC Taxi}\footnote{\url{https://www1.nyc.gov/site/tlc/about/tlc-trip-record-data.page}}: The yellow taxi trip records in New York City. We created a tensor with \textit{(source, destination, pick-up date)} triples and used $\log_{2}(x+1)$ for each entry $x$.
\end{itemize}

% \begin{table}[t!]
% 	\centering
% 	\caption{Summary of datasets.}
% 	\label{tab:datasets}
% 	\scalebox{0.95}{
% 		\begin{tabular}{llll}
% 			\toprule
% 			{\bf Dataset}          & {\bf Dimension}             & {\bf Period} & {\bf Granularity in Time} \\
% 			\midrule
% 			% {\bf Synthetic} & $30\times30\times30\times500^{*}$ & $30$ & - \\
% 			{\bf Intel Lab Sensor} & $54\times4\times1152^{*}$   & $144$        & every $10$ minutes        \\
% 			{\bf Network Traffic}  & $23\times23\times2000^{*}$  & $168$        & hourly                    \\
% 			{\bf Chicago Taxi}     & $77\times77\times2016^{*}$  & $168$        & hourly                    \\
% 			{\bf NYC Taxi}         & $265\times265\times904^{*}$ & $7$          & daily                     \\
% 			\bottomrule
% 			\multicolumn{4}{l}{The time mode is marked with an asterisk (*).} %\\
% 		\end{tabular}
% 	}
% \end{table}

\noindent\underline{\textbf{\smash{Competitors}}}:
To evaluate our method, we compare our method with the following seven competitors:
(1) \textbf{\onlineSGD}~\cite{mardani2015subspace}, a streaming CP factorization method optimized by SGD,
(2) \textbf{\olstec}~\cite{kasai2016online}, a streaming CP factorization method optimized by recursive least square (RLS),
(3) \textbf{\mast}~\cite{song2017multi}, a multi-aspect streaming tensor completion method,
(4) \textbf{\brst}~\cite{zhang2018variational}, an outlier-robust streaming tensor factorization approach based on bayesian inference,
(5) \textbf{\ormstc}~\cite{najafi2019outlier}, a robust multi-aspect streaming tensor completion algorithm,
(6) \textbf{\smf}~\cite{hooi2019smf}, a streaming matrix factorization method that is able to forecast future values using seasonal patterns, and
(7) \textbf{\cphw}~\cite{dunlavy2011temporal}, which can predict future values based on a static tensor factorization and the HW method.
% They are briefly described in Section~\ref{sec:related}. 
The first 5 approaches are used to compare the imputation performance and the last 2 algorithms are used to compare the forecasting performance.

% \subsubsection{Evaluation Metrics}
\noindent\underline{\textbf{Evaluation Metrics}}:
We use the following four metrics to measure the accuracy and efficiency of each algorithm:
\begin{itemize}
	\setlength{\itemindent}{-.15in}
	      \footnotesize
	\item {\small\textbf{Normalized Residual Error (NRE)}}: $\dfrac{\norm{\TXthat-\TXt}_{F}}{\norm{\TXt}_{F}}$
	\item {\small\textbf{Running Average Error (ARE)}}: $\dfrac{1}{T}\sum_{t=1}^{T}\dfrac{\norm{\TXthat-\TXt}_{F}}{\norm{\TXt}_{F}}$
	\item {\small\textbf{Average Forecasting Error (AFE)}}: $\dfrac{1}{t_{f}}\sum_{h=1}^{t_{f}}\dfrac{\norm{\TXhat_{t+h|t}-\TX_{t+h}}_{F}}{\norm{\TX_{t+h}}_{F}}$
	\item {\small\textbf{Average Running Time (ART)}}: $\dfrac{1}{T-\ti-1}\sum_{t=\ti+1}^{T}RT(t)$
	      \normalsize
\end{itemize}
where $T$ is the length of the entire stream, $t_{f}$ is the forecasting time steps, and $RT(t)$ is the running time to process a subtensor at time step $t$.
Since the initialization is executed only once, ART is calculated except for the time spent on initialization. Algorithms without initialization are set to $\ti=0$.
In the following experiments, we computed each evaluation metric $5$ times for each algorithm, and the mean is reported.

\begin{figure*}[!t]
	\centering
	\vspace{-4mm}
	\includegraphics[width=0.60\linewidth]{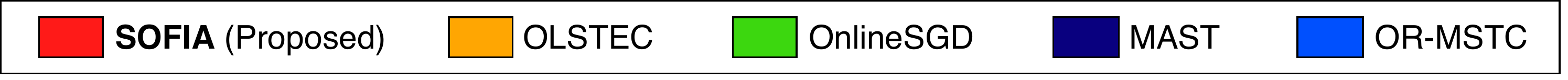}
	\vspace{-2mm}
\end{figure*}

\begin{figure}[!t]
	\centering
	\subfigure[Intel Lab Sensor]{\label{fig:art:traffic}
		\includegraphics[width=0.9\linewidth]{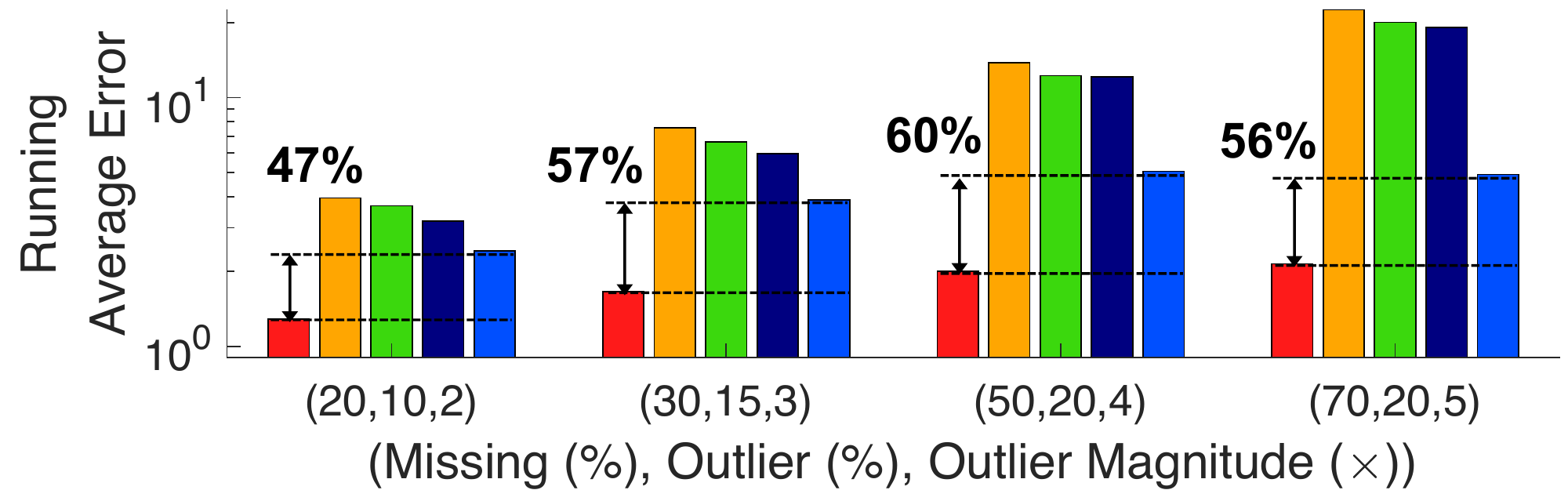}
	}\\
	\vspace{-3mm}
	\subfigure[Network Traffic]{\label{fig:art:traffic}
		\includegraphics[width=0.9\linewidth]{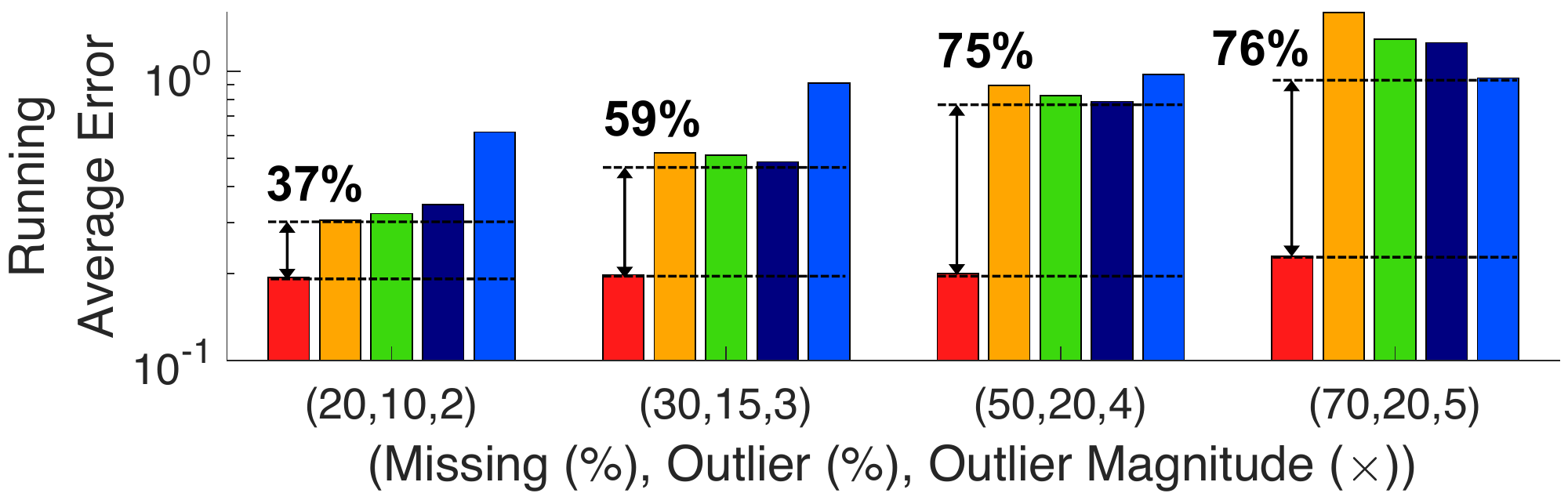}
	}\\
	\vspace{-3mm}
	\subfigure[Chicago Taxi]{\label{fig:art:traffic}
		\includegraphics[width=0.9\linewidth]{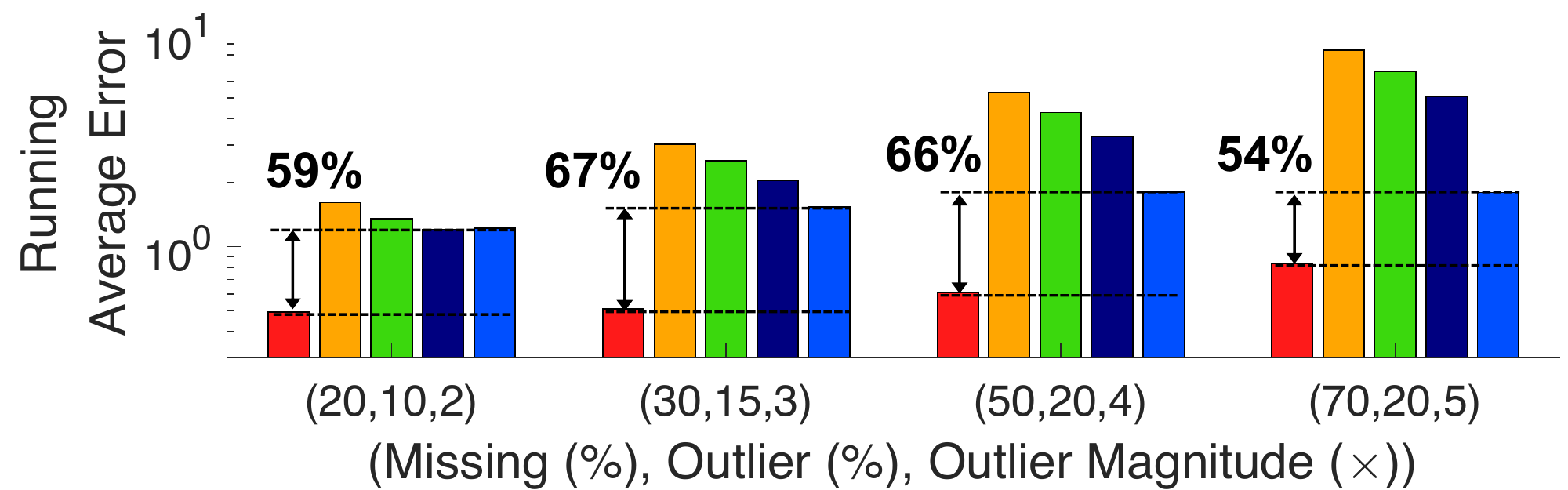}
	}\\
	\vspace{-3mm}
	\subfigure[NYC Taxi]{\label{fig:art:traffic}
		\includegraphics[width=0.9\linewidth]{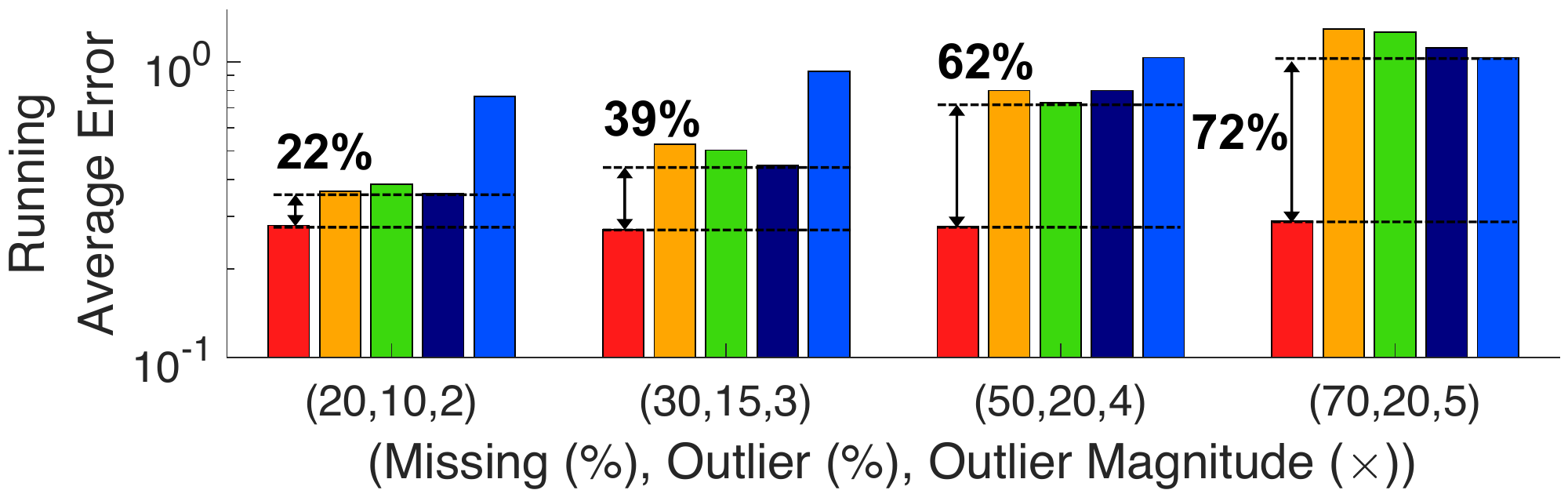}
	}\\
	\vspace{-2mm}
	\caption[]{\label{fig:exp:imputation:rae}
		The running average error under 4 experimental settings from the mildest (leftmost) to the harshest (rightmost).
		\textbf{\method was the most accurate} in all the tensor streams and all the experimental settings.
	}
\end{figure}

\begin{figure}[!t]
	\centering
	\subfigure[Intel Lab Sensor]{\label{fig:art:intel}
		\includegraphics[width=0.9\linewidth]{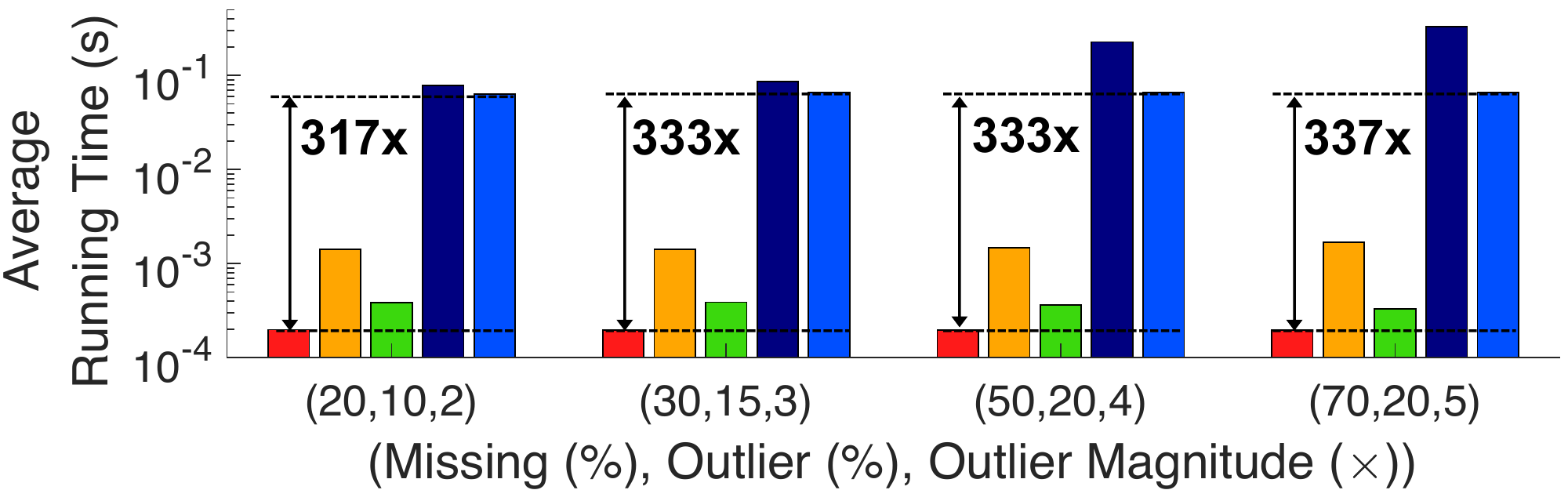}
	}\\
	\vspace{-3mm}
	\subfigure[Network Traffic]{\label{fig:art:network}
		\includegraphics[width=0.9\linewidth]{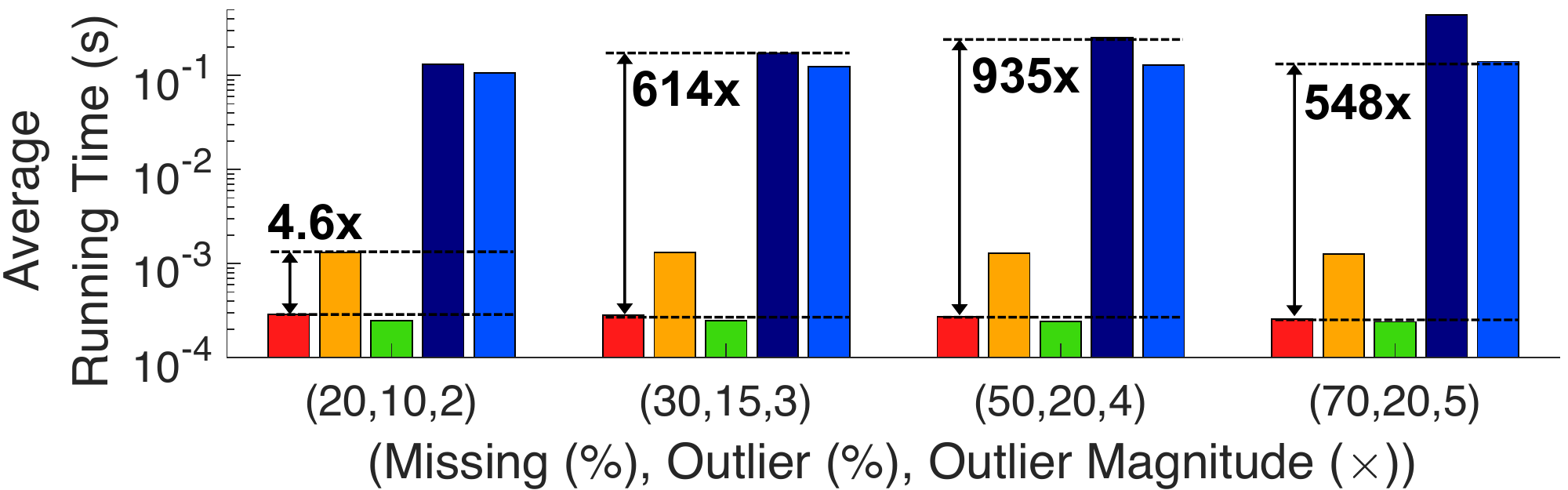}
	}\\
	\vspace{-3mm}
	\subfigure[Chicago Taxi]{\label{fig:art:chicago}
		\includegraphics[width=0.9\linewidth]{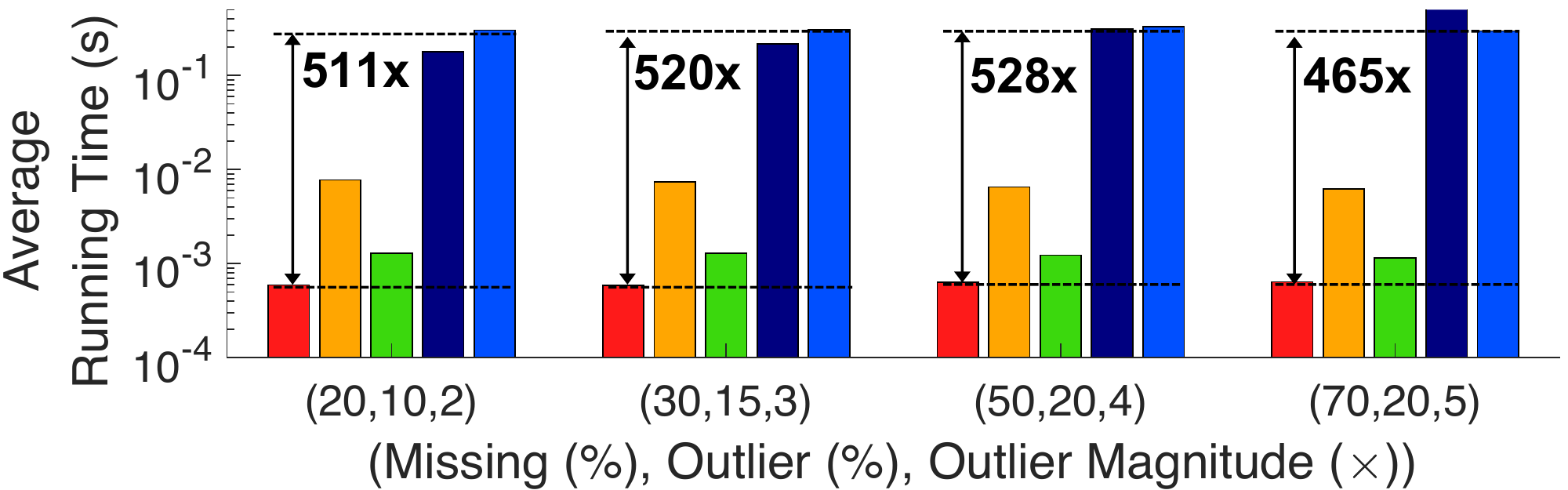}
	}\\
	\vspace{-3mm}
	\subfigure[NYC Taxi]{\label{fig:art:nyc}
		\includegraphics[width=0.9\linewidth]{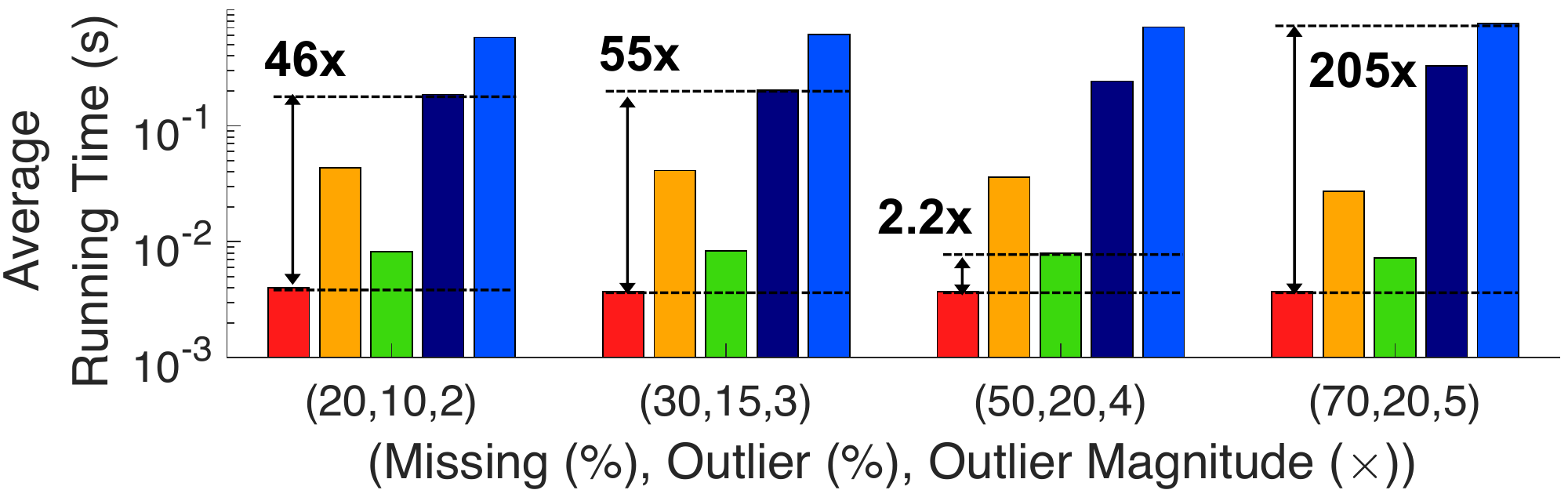}
	}\\
	\vspace{-2mm}
	\caption[]{\label{fig:exp:art}
		The average running time to process one subtensor under 4 experimental settings from the mildest (leftmost) to the harshest (rightmost).
		\textbf{\method was up to $935\times$ faster} than the second-most accurate algorithm.
	}
\end{figure}

\noindent\underline{\smash{\textbf{Parameter Setting}}}:
% In \method, we applied grid search to find optimal values for each hyperparameters (i.e., $\lambda_{1}$, $\lambda_{2}$, $\lambda_{3}$, $\mu$, and $\phi$) from $\{10^{-4},10^{-3},\dots,10^{2}\}$.
Unless otherwise stated, we used $\lambda_{1}=\lambda_{2}=10^{-3}$, $\lambda_{3}=10$, $\mu=0.1$, and $\phi=0.01$ as default parameters.
For baseline methods, we tuned their hyperparameters using grid search or following their authors' suggestions.
We set the maximum number of iterations and the tolerance rate to $300$ and $10^{-4}$ for all the methods.
The rank is adjusted using 10 ranks varying from 4 to 20 based on running average error.

\noindent\underline{\smash{\textbf{Missing and Outlier}}}:
A $Y\%$ of randomly selected entries are corrupted by outliers and $X\%$ of randomly selected entries are ignored and treated as missings.
The magnitude of each outlier is $-Z\cdot \max(\TX)$ or $Z\cdot\max(\TX)$ with equal probability, where $\max(\TX)$ is the maximum entry value of the entire ground truth tensor.
We use a tuple of $(X,Y,Z)$ to denote the experimental setting.
For example, $(70,20,5)$ represents that $70\%$ of entries are missing and $20\%$ of entries are contaminated by outliers whose magnitude is $-5\cdot\max(\TX)$ or $5\cdot\max(\TX)$.

% \begin{figure*}[!t]
% 	\centering
% 	\includegraphics[width=0.65\linewidth]{FIG/rae_label.pdf}
% 	\subfigure[Intel Lab Sensor]{\label{fig:art:traffic}
% 		\includegraphics[width=0.45\linewidth]{FIG/rae/rae_nyc.pdf}
% 	}
% 	\subfigure[Network Traffic]{\label{fig:art:traffic}
% 		\includegraphics[width=0.45\linewidth]{FIG/rae/rae_nyc.pdf}
% 	}\\
% 	\vspace{-3mm}
% 	\subfigure[Intel Lab Sensor]{\label{fig:art:traffic}
% 		\includegraphics[width=0.45\linewidth]{FIG/rae/rae_nyc.pdf}
% 	}
% 	\subfigure[Network Traffic]{\label{fig:art:traffic}
% 		\includegraphics[width=0.45\linewidth]{FIG/rae/rae_nyc.pdf}
% 	}\\
% 	\vspace{-3mm}
% 	\subfigure[Intel Lab Sensor]{\label{fig:art:traffic}
% 		\includegraphics[width=0.45\linewidth]{FIG/rae/rae_nyc.pdf}
% 	}
% 	\subfigure[Network Traffic]{\label{fig:art:traffic}
% 		\includegraphics[width=0.45\linewidth]{FIG/rae/rae_nyc.pdf}
% 	}\\
% 	\vspace{-3mm}
% 	\subfigure[Intel Lab Sensor]{\label{fig:art:traffic}
% 		\includegraphics[width=0.45\linewidth]{FIG/rae/rae_nyc.pdf}
% 	}
% 	\subfigure[Network Traffic]{\label{fig:art:traffic}
% 		\includegraphics[width=0.45\linewidth]{FIG/rae/rae_nyc.pdf}
% 	}
%     \caption[]{\label{fig:nre_art}
%     sdf
%     }
% \end{figure*}

\subsection{Q1. Initialization Accuracy}\label{sec:exp:sofia_als}
% Our first experiment is to verify the effectiveness of \methodALS in the initialization step.
We evaluated how precisely the initialization step using \methodALS discovers temporal patterns in an incomplete and noisy tensor as the number of outer iterations (i.e., lines~\ref{alg:init:start}-\ref{alg:init:end} in Algorithm~\ref{alg:init}) increases.
We used a low-rank synthetic tensor of size $30\times 30\times 90$ generated by rank-3 factor matrices, i.e., $\Umat^{(1)},\Umat^{(2)}\in\mathbb{R}^{30\times 3}$, and $\Umat^{(3)}\in\mathbb{R}^{90\times 3}$.
To model a tensor that has temporal patterns, the $r$-th column of the temporal factor matrix was formulated as $\tilde{\uvec_{r}}^{(3)}=[a_{r}\sin((2\pi/m)i+b_{r})+c_{r}]$, where $i=1,\dots,90$ and $m=30$, for $r=1,2,3$.
The coeffieicnts $a_{r}$, $b_{r}$, and $c_{r}$ were set to values selected uniformly at random from $[-2,2]$, $[0,2\pi]$, and $[-2,2]$, respectively.
Figure~\ref{fig:als:ground} shows the ground-truth temporal factor matrix.
% After creating a synthetic tensor, we set the experimental environment to $(90,20,7)$, which is very harsh.
After that, we set the experimental environment to $(90,20,7)$, which is extremely harsh.

We extracted the temporal patterns from the contaminated tensor using two methods: 1) initialization with the vanilla ALS~\cite{zhou2008large} and 2) initialization with \methodALS. %instead of \methodALS.
As shown in Figures~\ref{fig:als:vanilla}-\ref{fig:als:nre}, the method using \methodALS was able to restore the temporal factor matrix accurately as outer iteration proceeded, while the method using the vanilla ALS did not.
This is because temporal and seasonal smoothness considered in \methodALS are greatly helpful to find the underlying patterns even in a situation where $90\%$ of data were lost and many extreme outliers existed.

\begin{figure*}[!t]
	\centering
	\vspace{-4mm}
	\includegraphics[width=0.75\linewidth]{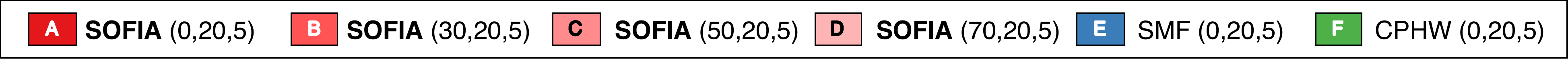}\\
	\vspace{-2mm}
	\subfigure[Intel Lab Sensor]{\label{fig:forecast:intel}
		\includegraphics[width=0.20\linewidth]{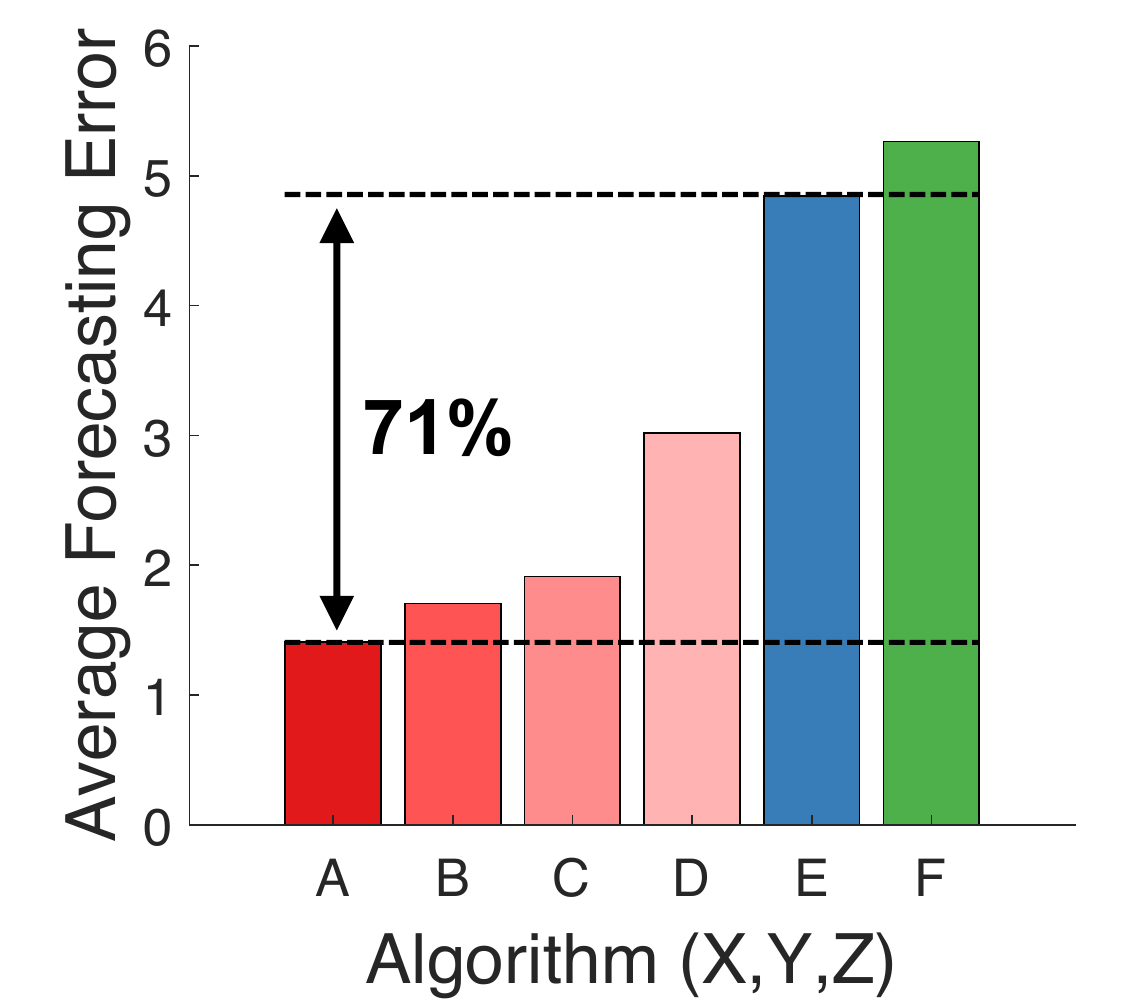}
	}
	\hspace{-3mm}
	\subfigure[Network Traffic]{\label{fig:forecast:traffic}
		\includegraphics[width=0.20\linewidth]{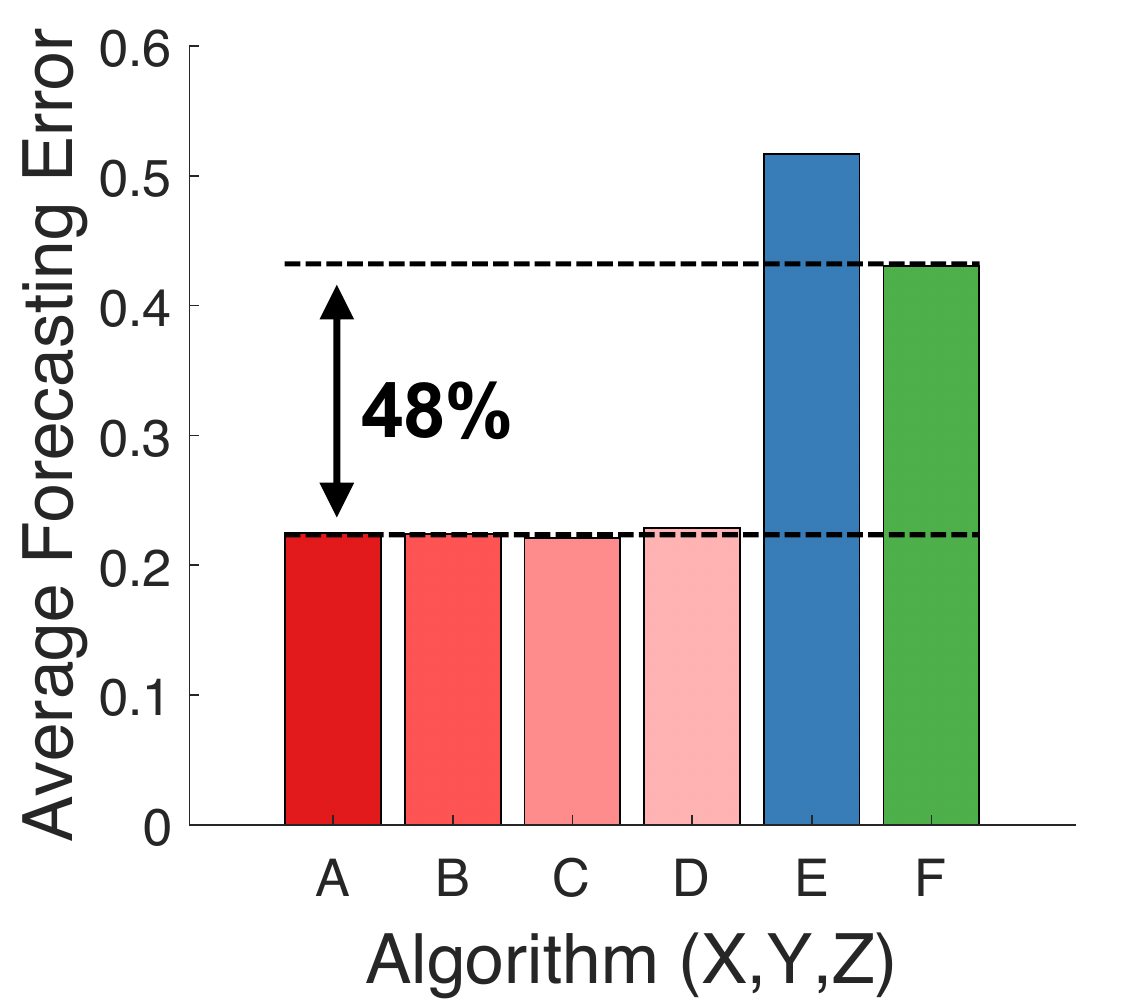}
	}
	\hspace{-3mm}
	\subfigure[Chicago Taxi]{\label{fig:forecast:chicago}
		\includegraphics[width=0.20\linewidth]{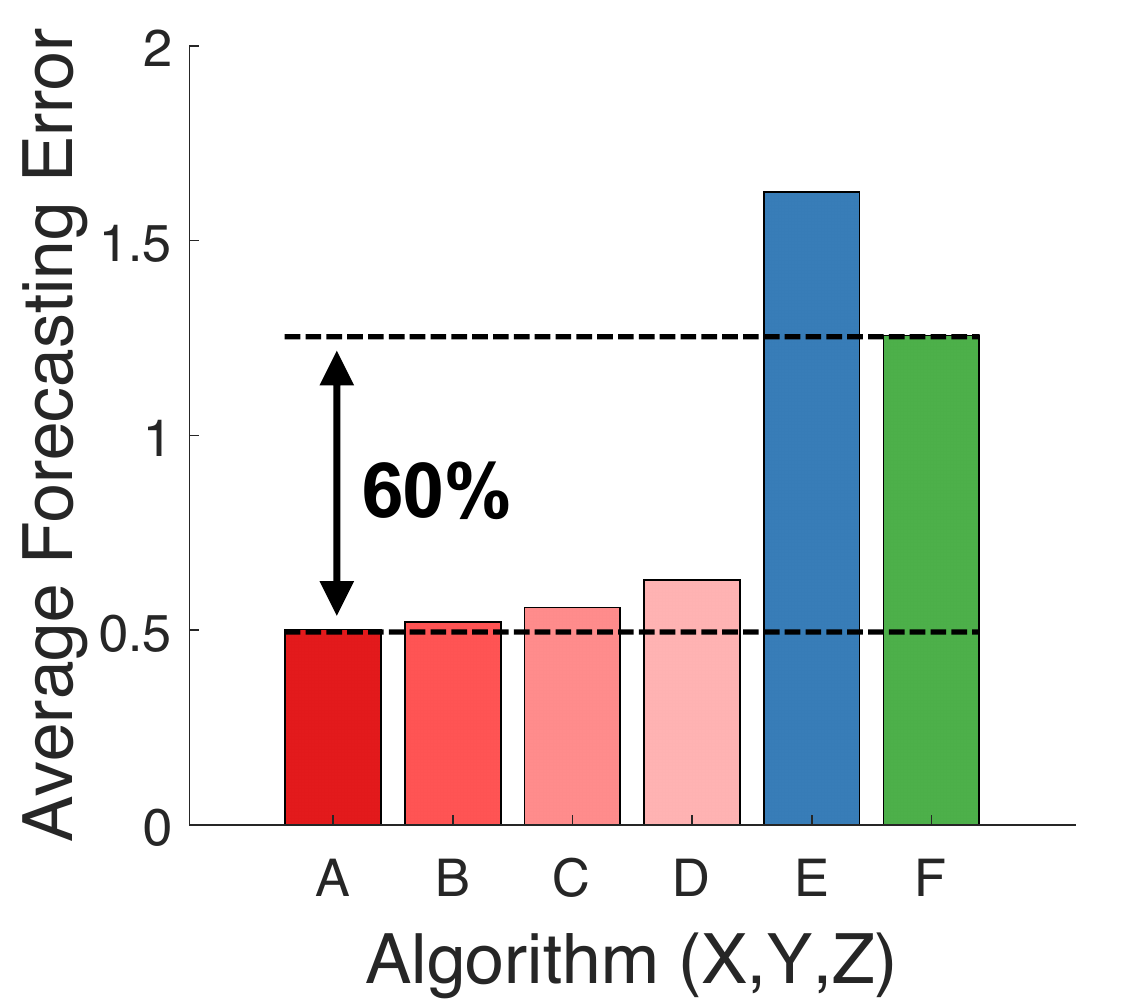}
	}
	\hspace{-3mm}
	\subfigure[NYC Taxi]{\label{fig:forecast:nyc}
		\includegraphics[width=0.20\linewidth]{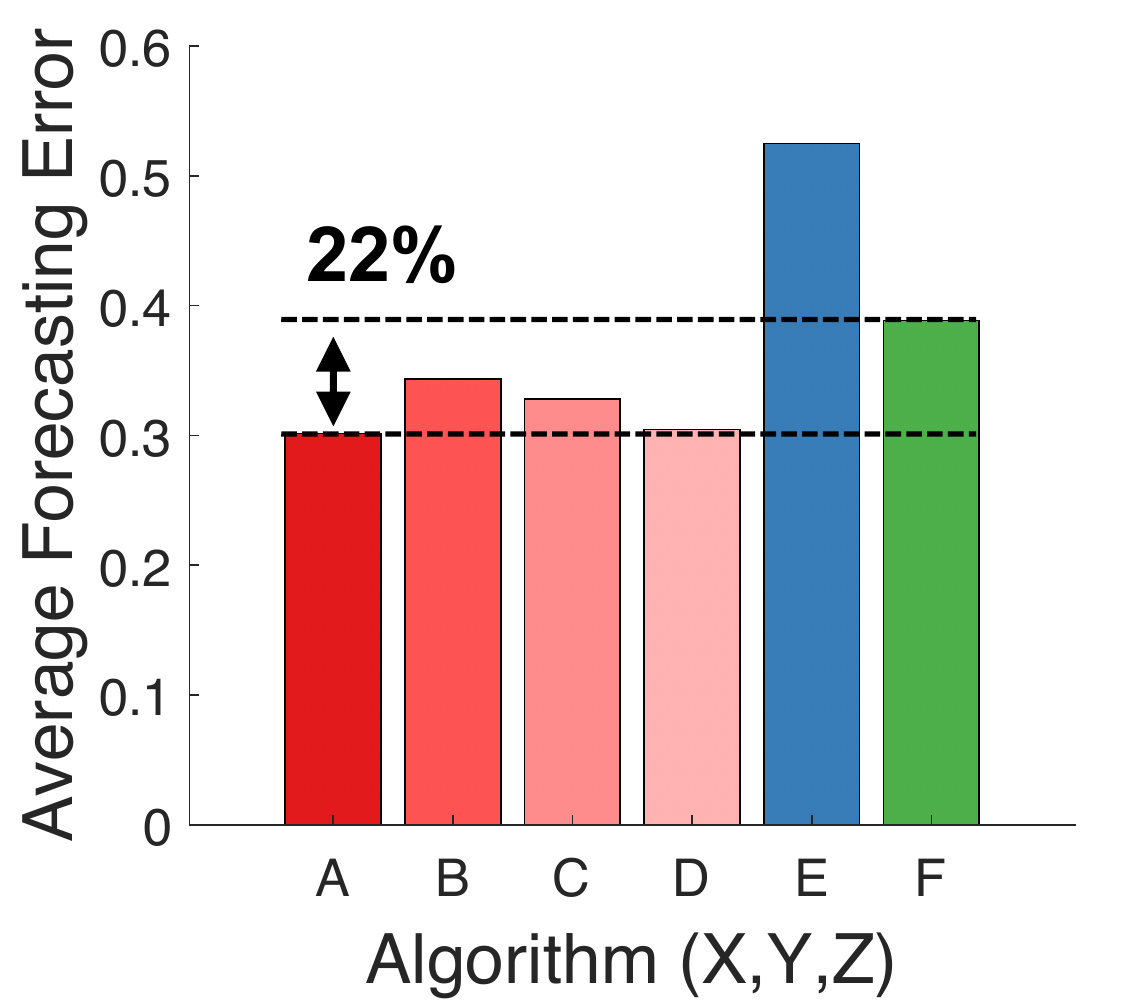}
	}\\
	\vspace{-1mm}
	\caption[]{\label{fig:forecast}
		The average forecasting error under 4 experimental settings. We evaluated \method on various fractions of missing entries, while the competitors were evaluated assuming all entries are observed.
		\textbf{\method was the most accurate}, despite the presence of missing values.
	}
\end{figure*}

\subsection{Q2. Imputation Accuracy}\label{sec:exp:imputation}
We measured how accurately \method estimates missing entries.
Figure~\ref{fig:exp:imputation:nre} shows the accuracy of the considered approaches at each time $t$ in four different levels of missing ratio, outlier ratio, and outlier magnitude.
The mildest setting was $(20,10,2)$, and the most extreme one was $(70,20,5)$.

In all the tensor streams, \method was the most accurate in terms of normalized residual error (NRE) regardless of the degree of missing data and outliers.
This is because \method discovers seasonal patterns behind the noisy and incomplete data and accurately predicts the entries in the next time step using the HW method.
Based on the predictions, \method filters out extremely high or low values regarded as outliers, and thus \method is robust to outliers.
Since \onlineSGD, \olstec, and \mast do not distinguish outliers from normal values, their models are susceptible to outliers.
Since \ormstc is designed to deal with outliers that are distributed over a specific mode of the tensor (e.g., 2-nd mode outliers), it is not effective to handle element-wise outliers used in this experiment.
We did not report the results of \brst, which wrongly estimated that the rank is $0$ in all the tensor streams.

Figure~\ref{fig:exp:imputation:rae} shows the overall accuarcy over the entire stream.
\method gave up to $76\%$ smaller running average error (RAE) than the second-most accurate approach.

% \begin{figure}
% 	\vspace{-2mm}
% 	\centering
% 	\subfigure[w.r.t. the Number of Entries]{\label{fig:scale:element}
% 		\includegraphics[width=0.45\linewidth]{FIG/scalability_entry.pdf}
% 	}
% 	\subfigure[w.r.t. the Number of Time Steps]{\label{fig:scale:time}
% 		\includegraphics[width=0.45\linewidth]{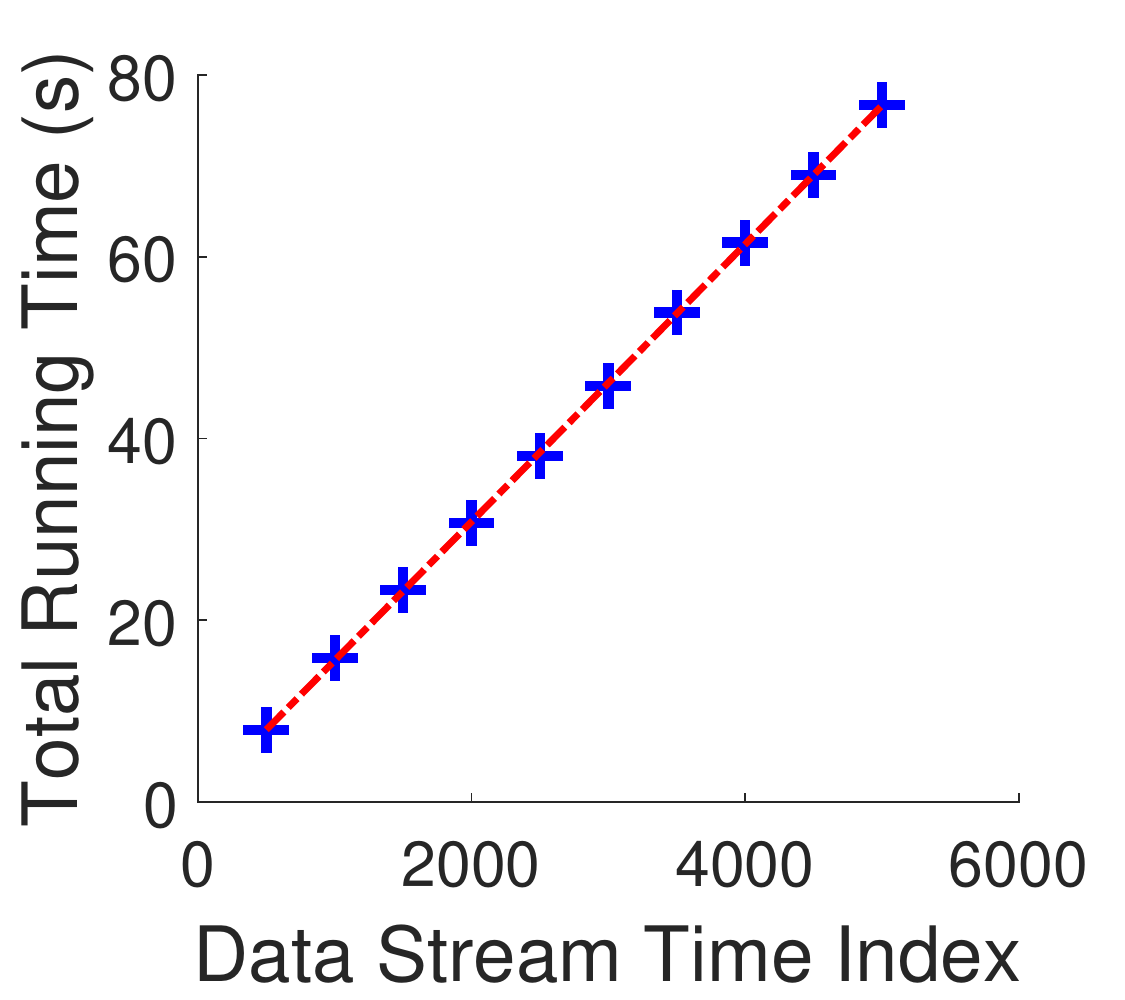}
% 	}
% 	\vspace{-1mm}
% 	\caption[]{\label{fig:exp:scalability}
% 		\textbf{\method scaled linearly} with the number of entries in each subtensor and the number of time steps.
% 		That is, time taken by \method per time step was almost constant regardless of the number of subtensors processed so far.
% 	}
% \end{figure}

\subsection{Q3. Speed}\label{sec:exp:speed}

We measured the average running time of the dynamic update steps of different approaches.
Note that since the initialization and HW fitting steps in \method are executed only once at the beginning, the time spent for them becomes negligible as the stream evolves continuously.
Similarly, time spent for initialization in \mast and \ormstc were excluded.

Figure~\ref{fig:exp:art} shows the average running time to process a single subtensor.
\method was the fastest in the NYC Taxi, Chicago Taxi, and Intel Lab Sensor datasets, while \method was comparable to the fastest competitor in the Network Traffic dataset.
Notably, \method was $2.2-935\times$ faster than the second-most accurate algorithm, demonstrating that \method is suitable for real-time data processing.

% \begin{figure}[!t]
%     \centering
%     \subfigure[tax]{\label{fig:art:nyc}
%         \includegraphics[width=0.9\linewidth]{FIG/nyc_art.pdf}
%     }\\
%     \vspace{-3mm}
%     \subfigure[tax]{\label{fig:art:chicago}
%         \includegraphics[width=0.9\linewidth]{FIG/chicago_art.pdf}
%     }\\
%     \vspace{-3mm}
%     \subfigure[tax]{\label{fig:art:network}
%         \includegraphics[width=0.9\linewidth]{FIG/network_art.pdf}
%     }\\
%     \vspace{-3mm}
%     \subfigure[tax]{\label{fig:art:intel}
%         \includegraphics[width=0.9\linewidth]{FIG/intel_art.pdf}
%     }\\
%     \caption[]{\label{fig:exp:art}
%     Inputation accuracy
%     }
% \end{figure}

\subsection{Q4. Forecasting Accuracy}\label{sec:exp:forecasting}

We evaluated the forecasting accuracy of \method compared to two competitors.
Each algorithm consumes $(T-t_{f})$ subtensors and forecasts the following $t_{f}$ subtensors.
We set $t_{f}$ to $200$ for the Chicago Taxi, Network Traffic, and Intel Lab Sensor datasets and set it to $100$ for the NYC Taxi dataset.
We injected $20\%$ outliers whose magnitudes are $\pm 5 \cdot \max(\TX)$.
Since the competitors do not take missing values into account, we evaluated them assuming all entries are observed, while \method is evaluated on various fractions of missing entries.

% We use the average forecasting error (AFE) and the average normalized residual error (NRE) for the future subtensors to measure the accuracy.
We used the average forecasting error (AFE) for the future subtensors to measure the accuracy.
As seen in Figure~\ref{fig:forecast}, in all the tensor streams, \method was the most accurate despite the presence of missing entries.
Especially, in the Intel Lab Sensor dataset, \method gave up to $71\%$ smaller average forecasting error than the second-best method.
Note that, since \smf and \cphw do not filter out outliers, these models are heavily affected by outliers, while \method is robust to outliers.
Notably, the average forecasting error was almost the same in the Network Traffic dataset regardless of the missing percentage.
That is, the discovered seasonal pattern was nearly identical under all settings.
On the other hand, in the Intel Lab Sensor dataset, as the fraction of missing entries increased, it became hard to find the correct seasonal pattern, and thus the forecasting error increased.

%This caused the higher forecasting error when the percentage of missing entries was high.
We also compared the speed of \method and \smf.
\smf was faster than \method (i.e., $1.32$, $2.1$, $2.98$, and $5.06$ times faster in the Intel Lab Sensor, Network Traffic, Chicago Taxi, and NYC Taxi datasets, respectively). However, \method was significantly more accurate than \smf, as seen in Figure~\ref{fig:forecast}. Moreover, \method is applicable to incomplete tensors with missing entries, while \smf is not.
Since \cphw is a batch algorithm, it needs to be rerun from scratch at each time step.

\begin{figure}


	\vspace{-2mm}
	\centering
	\subfigure[w.r.t. the Number of Entries]{\label{fig:scale:element}
		\includegraphics[width=0.45\linewidth]{FIG/scalability_entry.pdf}
	}
	\subfigure[w.r.t. the Number of Time Steps]{\label{fig:scale:time}
		\includegraphics[width=0.45\linewidth]{FIG/scalability_time.pdf}
	}
	\vspace{-1mm}
	\caption[]{\label{fig:exp:scalability}
		\textbf{\method scaled linearly} with the number of entries in each subtensor and the number of time steps.
		That is, time taken by \method per time step was almost constant regardless of the number of subtensors processed so far.
	}
\end{figure}

\subsection{Q5. Scalability}\label{sec:exp:scalability}

We measured how the running time of \method scales with (1) the number of entries in each subtensor and (2) the number of time steps.
We created a synthetic tensor stream consisting of $500\times 500$ subtensors (i.e., matrix) for $5000$ time steps and set the seasonal period to $10$.
% We set $50\%$ of entries are missing and inject $10\%$ outliers of magnitude $10$.
For simplicity, we set all entries are observed and there are no outliers.
We sampled $\{50, 100, \dots, 500\}$ indices of the first mode and made tensor streams with different numbers of entries per subtensor.
Then, we measured the time taken to process the entire tensor stream excluding initialization and HW fitting, as discussed in Section~\ref{sec:exp:speed}.
Figure~\ref{fig:exp:scalability} shows that \method scaled linearly with the number of entries and the number of time steps.
That is, time taken by \method per time step was almost constant regardless of the number of subtensors processed so far.

\section{Conclusion}
\label{sec:conclusion}

In this work, we propose \method, an online algorithm for factorizing real-world tensors that evolve over time with missing entries and outliers.
By smoothly and tightly combining tensor factorization, outlier detection, and temporal-pattern detection,
\method achieves the following strengths over state-of-the-art competitors:

\begin{itemize}[leftmargin=3mm]
	%	\setlength{\itemindent}{-.1in}
	%    \item \textbf{Model}: we propose a new tensor factorization model considering temporal and seasonal characteristics in real-world time series.
	% \item \textbf{Robust and accurate}: \method performs imputation and forecasting with up to $76\%$ and $71\%$ lower estimation error than its best competitors (Figures~\ref{fig:crown:imputation}-\ref{fig:crown:forecasting}).
	\item \textbf{Robust and accurate}: \method yields up to $76\%$ and $71\%$ lower imputation and forecasting error than its best competitors (Figures~\ref{fig:exp:imputation:nre}, \ref{fig:exp:imputation:rae}, and \ref{fig:forecast}).
	\item \textbf{Fast}: Compared to the second-most accurate method, using \method makes imputation up to $935\times$ faster (Figure~\ref{fig:exp:art}).
	\item \textbf{Scalable}: \method incrementally processes new entries in a time-evolving tensor, and it scales linearly with the number of new entries per time step (Figure~\ref{fig:exp:scalability} and Lemma~\ref{lemma:complexity:dynamic}).
\end{itemize}

\noindent{\textbf{Reproducibility}}:
The code and datasets used in the paper are available at \address.

\section*{Acknowledgement}
\textls[-15]{\small This work was supported by Samsung Electronics Co., Ltd. and Institute of Information \& Communications Technology Planning \& Evaluation (IITP) grant funded by the Korea government (MSIT) (No. 2019-0-00075, Artificial Intelligence Graduate School Program (KAIST)).}

\bibliographystyle{IEEEtran}
\bibliography{./BIB/dongjin.bib}

\end{document}